\theoremstyle{plain}
\newtheorem{theorem}{Theorem}
\newtheorem{lemma}{Lemma}
\theoremstyle{definition}
\newtheorem{corollary}{Corollary}
\theoremstyle{remark}
\newtheorem{remark}{Remark}
\title{Trust, Don't Trust, or Flip: Robust Preference-Based Reinforcement Learning with Multi-Expert Feedback}
\author{
  Seyed Amir Hosseini\textsuperscript{1}\thanks{Equal contribution.} \quad
  Maryam Abdolali\textsuperscript{1}\footnotemark[1]\;\;\thanks{Corresponding author \texttt{maryam.abdolali@kntu.ac.ir}} \quad
  Amirhosein Tavakkoli\textsuperscript{1} \quad
  Fardin Ayar\textsuperscript{2} \quad \\
  \textbf{Ehsan Javanmardi}\textsuperscript{3} \quad
  \textbf{Manabu Tsukada}\textsuperscript{3} \quad
  \textbf{Mahdi Javanmardi}\textsuperscript{2}\thanks{Emails:
  \textsuperscript{1}\,\texttt{\{sa.hosseini, amirhosein.tavakkoli\}@email.kntu.ac.ir};
  \textsuperscript{2}\,\texttt{\{fardin.ayar, mjavan\}@aut.ac.ir};
  \textsuperscript{3}\,\texttt{\{ejavanmardi, mtsukada\}@g.ecc.u-tokyo.ac.jp}.} \quad\\
  \textsuperscript{1}K. N. Toosi University of Technology, Tehran, Iran \\
  \textsuperscript{2}Amirkabir University of Technology, Tehran, Iran \\
  \textsuperscript{3}The University of Tokyo, Tokyo, Japan
}
\begin{document}
\maketitle
\begin{abstract}
Preference-based reinforcement learning (PBRL) offers a promising alternative to explicit reward engineering by learning from pairwise trajectory comparisons. However, real-world preference data often comes from heterogeneous annotators with varying reliability; some accurate, some noisy, and some systematically adversarial. Existing PBRL methods either treat all feedback equally or attempt to filter out unreliable sources, but both approaches fail when faced with adversarial annotators who systematically provide incorrect preferences. We introduce \textbf{TriTrust-PBRL (TTP)}, a unified framework that jointly learns a shared reward model and expert-specific trust parameters from multi-expert preference feedback. The key insight is that trust parameters naturally evolve during gradient-based optimization to be positive (trust), near zero (ignore), or negative (flip), enabling the model to automatically invert adversarial preferences and recover useful signal rather than merely discarding corrupted feedback. We provide theoretical analysis establishing identifiability guarantees and detailed gradient analysis that explains how expert separation emerges naturally during training without explicit supervision. Empirically, we evaluate TTP on four diverse domains spanning manipulation tasks (MetaWorld) and locomotion (DM Control) under various corruption scenarios. TTP achieves state-of-the-art robustness, maintaining near-oracle performance under adversarial corruption while standard PBRL methods fail catastrophically. Notably, TTP outperforms existing baselines by successfully learning from mixed expert pools containing both reliable and adversarial annotators, all while requiring no expert features beyond identification indices and integrating seamlessly with existing PBRL pipelines.

\end{abstract}


\section{Introduction}

Preference-based reinforcement learning (PBRL) has emerged as a promising paradigm for aligning AI systems with human values by learning reward functions from pairwise trajectory comparisons~\cite{christiano2017deep}. However, the practical deployment of PBRL faces a critical robustness crisis: \emph{real-world preference data is inherently heterogeneous and unreliable}. When deploying PBRL systems, preference inevitably comes from multiple annotators with varying expertise, attention levels, and potentially adversarial intent. Crowdsourcing platforms, online learning systems, and collaborative annotation efforts all produce preference data where some experts are reliable, others are noisy, and some may even systematically provide incorrect feedback. 

The consequences of ignoring this heterogeneity are severe: recent work demonstrates that standard PBRL algorithms fail catastrophically with as little as 10\% corrupted labels~\cite{rime}, leading to degraded policies that fail to learn meaningful behaviors. This fragility fundamentally limits the scalability and reliability of PBRL in real-world applications where perfect annotation quality cannot be guaranteed. This limitation is worsened by the fact that most existing robust methods operate at the \emph{sample level}: they do not model the reliability of each expert, and instead treat feedback independently of who provided it. As a result, they can typically only \emph{discard}, \emph{ignore}, or occasionally \emph{invert} a small number of highly suspicious corrupted comparisons, but they cannot recover useful signal from \emph{systematically adversarial} experts who provide consistently inverted preferences.

\paragraph{The Core Challenge: Learning from Heterogeneous Experts.}
The fundamental challenge in multi-expert PBRL is that we must simultaneously (1) learn a shared reward function that generalizes across all experts, and (2) identify which experts to trust, ignore, or even \emph{invert}. This is particularly difficult because experts may label completely disjoint sets of trajectory pairs, making it impossible to directly compare their judgments. Moreover, adversarial experts (those who systematically flip preferences) provide valuable signal if we can identify and invert their feedback, but existing methods lack this capability.

Current approaches to robustness in PBRL fall short in several ways. These techniques largely operate at the level of individual samples: RIME~\cite{rime} filters pairs whose label is inconsistent with the current preference predictor (e.g., by thresholding the divergence between the observed label and the model-predicted preference) and may flip labels for highly inconsistent samples, while MCP~\cite{mcp} applies mixup-style smoothing to mitigate the influence of corrupted feedback. Although effective against sporadic label noise, these methods do not provide a mechanism to identify and exploit \emph{systematically} adversarial experts whose preferences are consistently inverted. Uniform weighting treats all experts equally, leading to catastrophic failure when adversarial experts are present. 
We propose a simple yet powerful solution: \emph{jointly learn a shared reward model together with expert-specific trust parameters} from all preference comparisons. Our key insight is that by modeling the preference likelihood, 
the system can automatically discover expert reliability through gradient-based optimization. The sign and magnitude of trust parameter naturally encode whether to trust, 
ignore
or flip 
each expert's preferences. 

\paragraph{Contributions and Key Innovations.}
Our main contributions, which distinguish this work from prior approaches, are:

\begin{itemize}
    \item \textbf{Unified joint learning framework:} We introduce a framework that simultaneously estimates a shared reward model and expert-specific trust parameters, enabling automatic identification and handling of reliable, noisy, and adversarial experts. The framework requires no expert identity features beyond an index and works even when experts label completely disjoint trajectory pairs, making it practical for realistic crowdsourcing scenarios.
    
    \item \textbf{Automatic adversarial preference inversion:} Unlike filtering or weighting methods, TTP can learn \emph{negative} trust parameters that automatically invert adversarial preferences, recovering useful signal from systematically wrong experts. This capability is unique among existing robust PBRL methods and enables the system to trust, ignore, or flip expert feedback as appropriate.
    
    \item \textbf{Theoretical foundations:} We provide comprehensive theoretical analysis including (1) gradient analysis showing how trust parameters naturally evolve during training to separate reliable, noisy, and adversarial experts, (2) identifiability conditions demonstrating the model is recoverable up to an affine transformation under trajectory connectedness and expert overlap assumptions.
    
\end{itemize}

The rest of this paper is organized as follows. Section~\ref{sec:related} reviews related work on preference-based reinforcement learning and robustness to noisy feedback. Section~\ref{sec:method} presents our joint learning framework, including the model formulation, gradient analysis showing how trust parameters naturally separate experts during training, and practical enhancements for stable training. In Section~\ref{sec:theory}, we analyze the conditions under which identifiability is guaranteed. Section~\ref{sec:experiments} presents comprehensive experimental results across four diverse domains, demonstrating state-of-the-art robustness to adversarial and noisy expert feedback. Finally, Section~\ref{sec:conclusion} concludes with a summary and discussion of limitations and future directions.

\section{Related Work}
\label{sec:related}

\subsection{Preference-Based Reinforcement Learning}
PBRL replaces hand-crafted reward functions with human judgments in the form of pairwise preferences over trajectories. A common modeling choice is the Bradley--Terry formulation~\cite{bradley1952rank}, in which the probability that trajectory $\tau_i$ is preferred over $\tau_j$ depends on the difference in their cumulative rewards
~\cite{christiano2017deep,wirth2017survey,akrour2012april}.
This formulation underlies systems for continuous control~\cite{christiano2017deep,lee2021accelerating,ibarz2018reward} and LLM alignment~\cite{ouyang2022training,bai2022training,rafailov2023direct}. Subsequent work improved feedback efficiency through unsupervised pre-training and experience relabeling (PEBBLE~\cite{pebble}), uncertainty-driven exploration (RUNE~\cite{rune}), semi-supervised learning (SURF~\cite{park2022surf}), and meta-learning for reward adaptation~\cite{liu2022meta}. Active query selection further reduces annotation burden by selecting maximally informative trajectory pairs~\cite{sadigh2017active,biyik2018batch,biyik2020asking}. Alternative feedback modalities beyond pairwise comparisons, including scalar ratings, attribute scores, and keypoint annotations, have been unified in large-scale platforms~\cite{uni_rlhf}.

Benchmarking PBRL additionally requires principled models of annotator noise and inconsistency. B-Pref~\cite{lee2021bpref} provides a standardized suite of continuous-control tasks with \emph{simulated} teachers that generate pairwise preferences from a ground-truth reward via a discounted Bradley--Terry model. In this model, the rationality coefficient $\beta$ controls preference stochasticity (larger $|\beta|$ yields more deterministic choices, while $\beta \approx 0$ approaches random feedback) and the discount factor $\gamma$ captures myopic evaluation of trajectory segments. While prior benchmarks restrict $\beta \ge 0$ (stochastic but reward-aligned teachers), we leverage $\beta < 0$ to instantiate systematically misleading annotators whose preferences are consistently anti-aligned with the underlying return.

\subsection{Robustness to Noisy Preferences}

Human annotators inevitably introduce noise, and even moderate corruption substantially degrades PBRL performance~\cite{lee2021bpref}. Robustness methods fall into four categories. \emph{Sample selection} approaches filter unreliable preferences: RIME~\cite{rime} uses KL-divergence bounds to identify corrupted samples in robotic control, while TREND~\cite{huang2025trend} employs tri-teaching where three reward networks collaboratively select clean samples based on the small-loss principle. \emph{Data augmentation} methods like MCP~\cite{mcp} apply mixup~\cite{zhang2018mixup} to preference pairs, diluting individual corrupted labels while improving feedback efficiency. \emph{Robust loss functions} model corruption explicitly: R\textsuperscript{3}M~\cite{r3m} treats bad labels as sparse outliers via $\ell_1$-regularized likelihood, applicable to both control and LLM alignment through DPO; HSBC~\cite{xie2025hsbc} provides geometric robustness through conservative cutting with voting-based hypothesis refinement. \emph{Alternative reward formulations} achieve robustness by departing from Bradley--Terry: SARA~\cite{similarity_reward_alignment} uses contrastive learning to embed preferred trajectories, computing rewards as cosine similarities that naturally pool structure across samples and resist individual label corruption; classification-based approaches~\cite{sun2024rethinking} use classifier logits as reward proxies, avoiding BT's sensitivity to pairwise noise.

A key development is \emph{instance-dependent} noise modeling, which recognizes that some samples are inherently harder to annotate~\cite{wang2024flip,gong2025adaptive}. These methods learn flip probabilities conditioned on \emph{what} is being compared. Our work introduces the complementary notion of \emph{annotator-dependent} noise, conditioning on \emph{who} provides the label, which is natural in crowdsourced settings~\cite{raykar2010learning} where annotator reliability varies systematically. While sample selection discards unreliable annotators entirely and outlier detection assumes sparse corruption within each annotator, our trust parameters can become negative, enabling recovery of useful signal from systematically adversarial annotators through label inversion.

\subsection{Multi-Annotator Heterogeneity and Adversarial Feedback}

Real-world annotation involves multiple experts with heterogeneous reliability and potentially adversarial intent~\cite{dawid1979maximum}. In LLM alignment, over 30\% of preference examples show legitimate annotator disagreement stemming from task ambiguity or differing perspectives~\cite{zhang2024diverging,padmakumar2024beyond}. Personalization approaches address this by learning user-specific rewards via variational inference~\cite{poddar2024vpl} or structured preference spaces scaling to millions of users~\cite{li2025alignx}, but assume honest feedback.

Adversarial attacks exploit RLHF's reliance on human input. RLHFPoison~\cite{wang2024rlhfpoison} demonstrates stealthy poisoning of LLM preferences at 5\% corruption, while similar attacks degrade robotic control with physical safety implications~\cite{zhou2024robot_poison}. Online systems face real-time manipulation requiring attacker-aware defenses~\cite{yang2024online_attack}. Mechanism design analysis~\cite{buening2025strategyproof} shows even a single strategic annotator can cause arbitrarily large misalignment, with fundamental impossibility results limiting perfect defenses.

Unlike in LLM alignment where diverse preferences naturally arise (e.g., users may prefer short versus detailed answers), diverse preferences are typically not a concern in common robotics and control settings, where the objective is more standardized. Our framework addresses these challenges through learned, annotator-dependent trust parameters. Just as instance-dependent methods learn sample-specific noise rates, we learn annotator-specific reliability—including systematic adversariality. The joint optimization automatically distinguishes reliable experts, random noise sources, and adversarial annotators without prior knowledge of corruption rates. Unlike personalization that learns separate rewards per user, we learn a shared reward with annotator-dependent modulation, enabling learning from disjoint annotation sets while maintaining a unified objective. This connects to classical crowd learning~\cite{raykar2010learning,dawid1979maximum,whitehill2009whose} but extends beyond quality weighting to adversarial recovery through negative trust.

\section{Proposed Method: Joint Reward and Expert Trust Learning}
\label{sec:method}


We present \textbf{TriTrust-PBRL (TTP)}, a unified framework for learning both a shared reward function and expert-specific trust parameters from heterogeneous preference data\footnote{The successful identification of expert types depends on the ratio of reliable to adversarial experts, with robustness maintained when reliable experts constitute the majority.}.
This section formalizes the approach, analyzes the learning dynamics, and introduces practical enhancements for stable training. The overall schematic of TTP is illustrated in Figure~\ref{fig:ttp_overview}.

\subsection{Model and Objective}
\label{sec:setup}

\begin{figure}[t]
    \centering
    \includegraphics[width=1\linewidth]{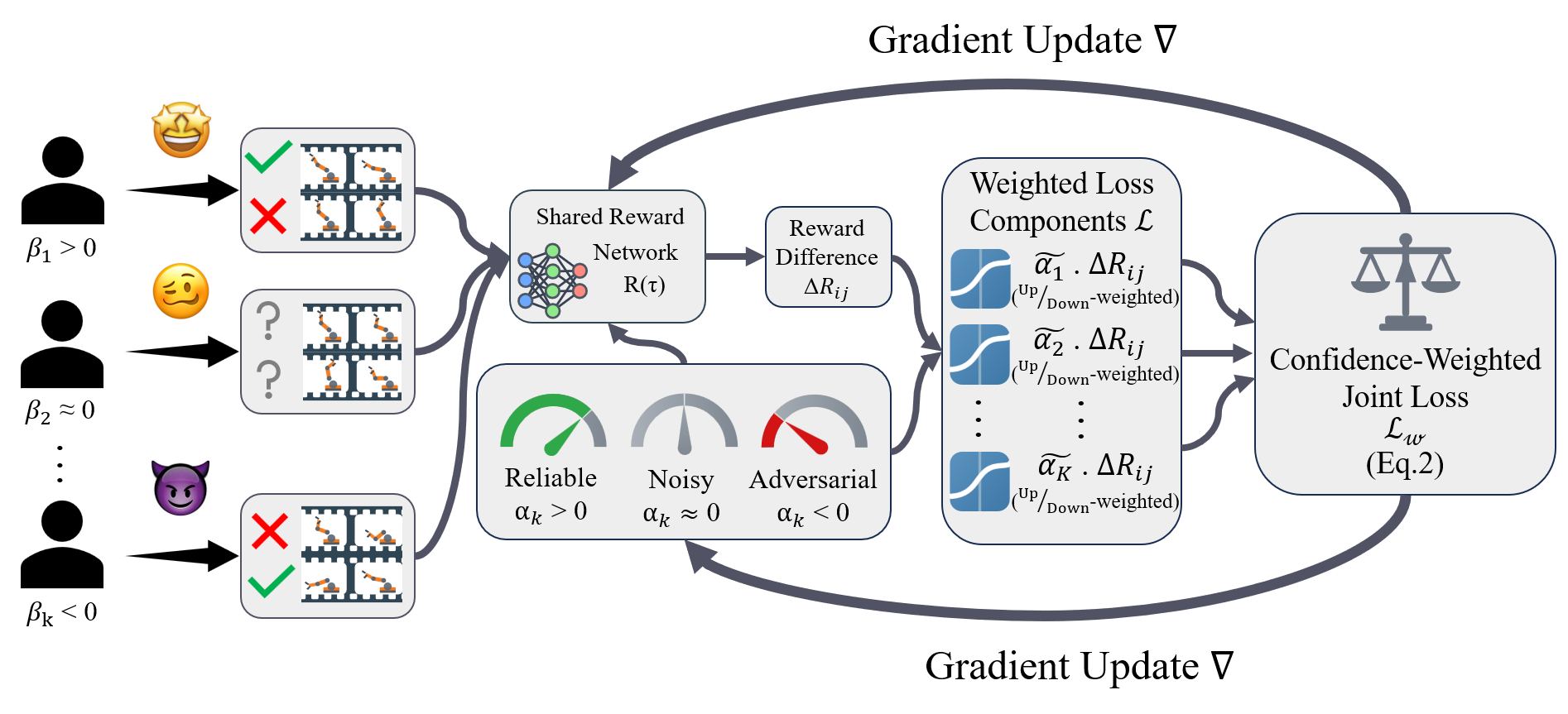}
    \caption{\textbf{Overview of the TriTrust-PBRL (TTP) framework.} TTP jointly learns a shared reward network \(R(\tau)\) and expert-specific trust parameters \(\{\alpha_k\}_{k=1}^K\) from heterogeneous preference feedback. Reliable experts (with consistent correct preferences) develop positive trust parameters (\(\alpha_k > 0\)), noisy experts (with inconsistent random feedback) evolve toward zero (\(\alpha_k \approx 0\)), and adversarial experts (with systematically flipped preferences) develop negative trust parameters (\(\alpha_k < 0\)). During gradient updates, the weighted loss components are scaled by each expert's trust: reliable experts are up-weighted, noisy experts are down-weighted, and adversarial experts are inverted and weighted. The confidence-weighted joint loss (Eq.~\ref{eq:nll}) aggregates all expert feedback while automatically adapting to their reliability, enabling robust reward learning even under adversarial corruption.}
    \label{fig:ttp_overview}
    \label{fig:a3gcn}
\end{figure}

We consider \(K\) experts \(\{e_1,\dots,e_K\}\) who each provide preferences over their own private sets of trajectory pairs. 
Expert \(k\) provides binary preferences over its dataset \(D^{(k)}=\{(\tau_i,\tau_j,y^{(k)}_{ij})\}\), where \(y^{(k)}_{ij} \in \{0,1\}\) indicates whether expert \(k\) prefers trajectory \(\tau_i\) over \(\tau_j\):
\[
y^{(k)}_{ij} = \begin{cases}
1 & \text{if expert } k \text{ prefers } \tau_i \text{ over } \tau_j, \\
0 & \text{otherwise.}
\end{cases}
\]
The goal is to jointly learn a \emph{global} reward function \(R(\tau)\) that maps trajectories to scalar rewards, and expert-specific trust parameters \(\{\alpha_k\}_{k=1}^K\) that capture the reliability (and potential adversariality) of each expert's feedback.
Here, $R(\tau)=\sum_{t} \hat r_\theta(s_t,a_t)$ denotes the cumulative trajectory reward obtained by summing per-step rewards produced by a learned reward model $\hat r_\theta$.

We model the probability that expert \(k\) prefers \(\tau_i\) over \(\tau_j\) as:
\begin{equation}
\label{eq:expert-logistic}
P\!\left(y^{(k)}_{ij}=1 \mid R(\tau_i),R(\tau_j),\alpha_k\right)
=
\sigma\!\big(\alpha_k \, (R(\tau_i)-R(\tau_j))\big),
\end{equation}
where \(\sigma(z)=\tfrac{1}{1+e^{-z}}\) is the logistic sigmoid, and \(\alpha_k\) is an expert-specific scalar representing the trustworthiness or confidence in expert \(k\)'s feedback. Note that even if experts provide feedback on disjoint trajectory sets, the shared reward model $\hat r_\theta$ links all feedback, allowing us to learn from every expert jointly.

We learn \(R(\cdot)\) and \(\{\alpha_k\}\) by minimizing the negative log-likelihood:
\begin{equation}
\label{eq:nll}
\mathcal{L}(R,\{\alpha_k\}) = -\sum_{k=1}^{K} \sum_{(\tau_i,\tau_j)\in D^{(k)}} 
\Big[
y^{(k)}_{ij}\log \sigma(\alpha_k \Delta R_{ij})
+ (1-y^{(k)}_{ij}) \log\big(1-\sigma(\alpha_k \Delta R_{ij})\big)
\Big],
\end{equation}
where \(\Delta R_{ij} = R(\tau_i)-R(\tau_j)\).

This objective reduces to the standard Bradley--Terry loss~\cite{bradley1952rank} when \(\alpha_k = 1\) for all experts, but with learnable \(\alpha_k\) it adaptively reweights experts according to their inferred trustworthiness. The joint optimization over both \(R(\cdot)\) and \(\{\alpha_k\}\) enables the reward model to benefit from reliable experts while being robust to unreliable or adversarial ones. During training, the trust parameters naturally adapt to expert quality: reliable experts develop larger \(\alpha_k\) values, increasing their influence on the learned reward; noisy experts see their \(\alpha_k\) shrink toward zero, effectively down-weighting their inconsistent preferences; and adversarial experts who systematically provide incorrect feedback can develop negative \(\alpha_k\) values, causing the model to flip their preferences and align them with the global reward. This method allows us to aggregate preference data from multiple noisy and partially adversarial experts, even when their trajectory comparisons do not overlap.

This formulation builds directly on the PEBBLE framework~\cite{pebble}, which alternates between learning a reward model from preference data and optimizing a policy with respect to the learned reward.
As in PEBBLE, the reward model $\hat r_\theta$ is shared across all preference queries and induces a trajectory-level score through $R(\tau)=\sum_t \hat r_\theta(s_t,a_t)$.
However, unlike standard PEBBLE, which assumes all preference labels are equally reliable, we explicitly model expert-dependent reliability through the trust parameters $\{\alpha_k\}_{k=1}^K$.
Our approach therefore generalizes PEBBLE to multi-expert and noisy-feedback settings: the policy learning remain unchanged, while the reward learning objective is augmented to infer both the global reward function and the trustworthiness of each expert.

\subsection{Gradient Analysis of Expert Trust Dynamics}
\label{sec:alpha-analysis}

To understand how trust parameters evolve during training, we analyze the gradient of the loss with respect to \(\alpha_k\). Differentiating Eq.~\eqref{eq:nll} w.r.t.\ \(\alpha_k\) yields:
\begin{equation}
\label{eq:alpha-grad}
\frac{\partial \mathcal{L}}{\partial \alpha_k}
=
\sum_{(\tau_i,\tau_j)\in D^{(k)}}
\big[\sigma(\alpha_k \Delta R_{ij}) - y^{(k)}_{ij}\big] \cdot \Delta R_{ij}.
\end{equation}

We now analyze how this gradient affects \(\alpha_k\) in three scenarios, corresponding to reliable, noisy, and adversarial experts.

\paragraph{Reliable experts (\(\alpha_k\) increases).}
When expert \(k\)'s preferences align with the reward differences, for instance when \(y^{(k)}_{ij} = 1\) (expert prefers \(\tau_i\)) and \(\Delta R_{ij} > 0\), we have \(\sigma(\alpha_k \Delta R_{ij}) \approx 1\), leading to:
\[
(\sigma(\alpha_k \Delta R_{ij}) - y^{(k)}_{ij}) \cdot \Delta R_{ij} \approx (\text{positive number less than one} - 1) \cdot (\text{positive}) < 0.
\]
Since gradient descent updates \(\alpha_k \leftarrow \alpha_k - \eta \frac{\partial \mathcal{L}}{\partial \alpha_k}\), this negative gradient causes \(\alpha_k\) to increase. Reliable experts consistently push \(\alpha_k\) upward, increasing their influence on the learned reward.

\paragraph{Noisy experts (\(\alpha_k\) stays near zero).}
A noisy expert is approximately uncorrelated with the sign of the reward difference \(\Delta R_{ij}\): conditioned on \(\Delta R_{ij}\), the expert reports either preference with roughly equal probability. When \(\alpha_k \approx 0\), we have \(\sigma(\alpha_k \Delta R_{ij}) \approx \tfrac{1}{2}\), and the gradient with respect to \(\alpha_k\) becomes
\[
\left.\frac{\partial \mathcal{L}}{\partial \alpha_k}\right|_{\alpha_k \approx 0}
\;\approx\;
\mathbb{E}_{(\Delta R_{ij},\, y^{(k)}_{ij}) \sim \mathcal{D}_k}
\bigl[(\tfrac{1}{2} - y^{(k)}_{ij}) \, \Delta R_{ij}\bigr].
\]
To make the cancellation explicit, apply the law of total expectation by first conditioning on \(\Delta R_{ij}\):
\[
\mathbb{E}_{(\Delta, y)}\!\left[(\tfrac{1}{2} - y)\Delta\right]
=
\mathbb{E}_{\Delta}\!\left[
\Delta \cdot \mathbb{E}_{y \mid \Delta}[\tfrac{1}{2} - y \mid \Delta]
\right].
\]
Let
\[
p_k(\Delta) := \Pr(y = 1 \mid \Delta)
\]
denote expert \(k\)’s conditional probability of preferring the first trajectory given a reward gap \(\Delta\). Since \(y \in \{0,1\}\), we have \(\mathbb{E}[y \mid \Delta] = p_k(\Delta)\), leading to:
\[
\mathbb{E}_{(\Delta, y)}\!\left[(\tfrac{1}{2} - y)\Delta\right]
=
\mathbb{E}_{\Delta}\!\left[
\Delta \, (\tfrac{1}{2} - p_k(\Delta))
\right].
\]
For a noisy expert, labels are essentially random given \(\Delta\), so \(p_k(\Delta) \approx \tfrac{1}{2}\) for most \(\Delta\) under \(\Delta \sim \mathcal{D}_k\). Consequently, the integrand is approximately zero, and the expectation vanishes:
\[
\left.\frac{\partial \mathcal{L}}{\partial \alpha_k}\right|_{\alpha_k \approx 0}
\approx 0.
\]
Thus, there is no consistent first-order gradient signal pushing \(\alpha_k\) away from zero. As a result, noisy experts are naturally down-weighted by the optimization process, remaining near \(\alpha_k = 0\) and contributing little influence to the learned reward without requiring explicit detection or removal.

\paragraph{Adversarial experts (\(\alpha_k\) becomes negative).}
When expert \(k\) systematically prefers lower-reward trajectories, for example when \(y^{(k)}_{ij} = 1\) but \(\Delta R_{ij} < 0\), we obtain \(\sigma(\alpha_k \Delta R_{ij}) \approx \sigma(\text{negative}) < 1\). Since \(\Delta R_{ij} < 0\), this leads to:
\[
(\sigma(\alpha_k \Delta R_{ij}) - y^{(k)}_{ij}) \cdot \Delta R_{ij} \approx (\text{positive number less than one} - 1) \cdot (\text{negative}) > 0,
\]
pushing \(\alpha_k\) downward into the negative regime. When \(\alpha_k < 0\), the model effectively flips the expert's preferences, transforming adversarial feedback into useful signal aligned with the global reward.

\subsection{Practical Enhancements}
\label{sec:cross-entropy-weight}
While the basic joint objective in Eq.~\eqref{eq:nll} is theoretically sound, empirical evaluation revealed three practical issues that required mitigation:

\begin{enumerate}
  \item \textbf{Unbounded trust parameters:} When using \eqref{eq:nll} directly, we observed that trust parameters \(\alpha_k\) can grow unboundedly large. This causes the effective reward differences \(\alpha_k \Delta R_{ij}\) to saturate the sigmoid, making gradients vanish and preventing the reward model from learning effectively. To mitigate this, we bound the trust parameters using a hyperbolic tangent transformation: \(\tilde{\alpha}_k = \tanh(\alpha_k)\), which constrains values to \([-1, 1]\). This ensures gradients remain informative throughout training.
  \item \textbf{Initialization sensitivity:} 
Early in training, the reward model is poorly specified, causing all trust parameters to remain near zero and resulting in weak gradients for both reward and trust learning.
Without additional normalization, this can significantly slow convergence and delay differentiation between experts.
To mitigate this issue, we rescale the bounded trust parameters by their maximum magnitude,
\[
\bar{\alpha}_k \leftarrow \frac{\tilde{\alpha}_k}{\max_{k'} |\tilde{\alpha}_{k'}|},
\]
which ensures that at least one expert has unit trust magnitude at all times.
This normalization stabilizes early optimization, maintains meaningful gradient scales, and allows relative trust differences to emerge rapidly during training.
To avoid introducing unintended gradient coupling between experts, the maximum operator is detached from the computation graph during backpropagation.
  \item \textbf{Robustness under noisy experts:} When training on preferences from noisy experts, the model may attempt to fit the noise rather than learn meaningful reward structure. To address this, we introduce a confidence-weighted loss that adaptively down-weights experts with low trust magnitude:
  \begin{equation}
  \label{eq:weighted-loss-basic}
  \mathcal{L}_{\text{weighted}}
  =
  -\sum_{k=1}^{K} \ \sum_{(\tau_i,\tau_j)\in D^{(k)}}
  w_k(t)\,\Big[
  y^{(k)}_{ij}\log \sigma\!\big(\bar{\alpha}_k \Delta R_{ij}\big)
  +
  \big(1-y^{(k)}_{ij}\big)\log\big(1-\sigma(\bar{\alpha}_k \Delta R_{ij})\big)
  \Big],
  \end{equation}
  where \(w_k(t) = \frac{K.|\tilde{\alpha}_k|}{\sum_{i=1}^{K} |\tilde{\alpha}_i|}\) normalizes weights by the current trust magnitudes.
\end{enumerate} 

\noindent
Combining these components results in the practical training procedure shown in Algorithm~\ref{alg:joint-learning}.

\begin{algorithm}[h]
\caption{TriTrust-PBRL (TTP) algorithm}
\label{alg:joint-learning}
\begin{algorithmic}[1]
\Require Expert datasets $\{D^{(k)}\}_{k=1}^K$, learning rates $\eta_R, \eta_\alpha$
\Ensure Learned reward $R(\cdot)$ and trust parameters $\{\alpha_k\}$
\State Initialize reward parameters of $R$ randomly and set $\alpha_k \leftarrow 0.01$ for all $k$
\For{iteration $t = 1, 2, \ldots, T$}
    \State Sample a minibatch of preference triples $(\tau_i,\tau_j,y^{(k)}_{ij})$ from $\{D^{(k)}\}_{k=1}^K$
    \State Compute reward differences $\Delta R_{ij} \leftarrow R(\tau_i) - R(\tau_j)$ for all sampled pairs
    \State Compute bounded trust parameters $\tilde{\alpha}_k \leftarrow \tanh(\alpha_k)$ for all experts
    \State Normalize trust magnitudes: $\bar{\alpha}_k \leftarrow \tilde{\alpha}_k / \max_{k'} |\tilde{\alpha}_{k'}|$ (if the denominator is non-zero)
    \State Compute expert weights $w_k(t) \leftarrow \frac{|\tilde{\alpha}_k|}{\sum_{k'} |\tilde{\alpha}_{k'}|}$ 
    \State Form the weighted loss $\mathcal{L}_{\text{weighted}}$ in Eq.~\eqref{eq:weighted-loss-basic} on the minibatch
    \State Compute gradients $\nabla_R \mathcal{L}_{\text{weighted}}$ and $\nabla_{\alpha_k} \mathcal{L}_{\text{weighted}}$
    \State Update reward parameters: $R \leftarrow R - \eta_R \nabla_R \mathcal{L}_{\text{weighted}}$
    \State Update trust parameters: $\alpha_k \leftarrow \alpha_k - \eta_\alpha \nabla_{\alpha_k} \mathcal{L}_{\text{weighted}}$ for all $k$
\EndFor
\State \textbf{return} $R(\cdot)$ and $\{\alpha_k\}$
\end{algorithmic}
\end{algorithm}

\section{Theoretical Analysis: Identifiability}
\label{sec:theory}
This section studies what can be learned from multi-expert preference data under our joint reward--trust model, focusing on \emph{identifiability}: whether the reward function $R(\cdot)$ and expert trust parameters $\{\alpha_k\}$ can be uniquely determined from observed comparisons. When different parameter values produce the same preference behavior, the model is ambiguous and the learned parameters are not uniquely supported by the data. In pairwise logistic models, preferences depend only on differences in reward, and in the multi-expert setting each comparison depends on the product $\alpha_k\,(R(\tau_i)-R(\tau_j))$. As a result, there is an inherent global ambiguity: adding a constant to $R$ or scaling $R$ while inversely scaling all trust parameters leaves the likelihood unchanged. Our main result formalizes this ambiguity and shows that, under suitable conditions on the comparison structure (namely global trajectory connectedness and sufficient overlap between experts) no additional ambiguities arise. These findings clarify the basic limits of learning from pairwise comparisons and help explain why bounding both reward outputs and trust parameters stabilizes optimization.

\begin{lemma}[Logistic identifiability]
\label{lem:logistic-ident}
For fixed $\alpha \neq 0$, the mapping $f: \Delta R \mapsto \sigma(\alpha \Delta R)$ is one-to-one. That is, if $\sigma(\alpha \Delta R_1) = \sigma(\alpha \Delta R_2)$, then $\Delta R_1 = \Delta R_2$.
\end{lemma}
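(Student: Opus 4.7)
The plan is to reduce the claim to two elementary injectivity facts: the strict monotonicity of the sigmoid and the invertibility of scalar multiplication by a nonzero number. Both are standard, so the proof will be short and essentially a chain of implications rather than a substantive argument.

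First I would recall (or briefly verify) that $\sigma(z) = 1/(1+e^{-z})$ is strictly increasing on $\mathbb{R}$. This follows immediately from $\sigma'(z) = \sigma(z)(1-\sigma(z)) > 0$ for all $z \in \mathbb{R}$, since $\sigma(z) \in (0,1)$. Strict monotonicity implies injectivity, so from $\sigma(\alpha \Delta R_1) = \sigma(\alpha \Delta R_2)$ I would conclude $\alpha \Delta R_1 = \alpha \Delta R_2$.

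Next, because $\alpha \neq 0$ by hypothesis, I can cancel $\alpha$ on both sides (equivalently, multiply by $1/\alpha$) to obtain $\Delta R_1 = \Delta R_2$. This gives the desired one-to-one conclusion. As a coda, I would note that the map $f$ is in fact a strictly monotone bijection from $\mathbb{R}$ onto $(0,1)$, monotone increasing if $\alpha > 0$ and monotone decreasing if $\alpha < 0$, which slightly sharpens the statement and will be convenient later when the identifiability theorem bootstraps from this lemma.

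There is no real obstacle here: the entire content is that composing an injective affine map (with nonzero slope) with a strictly monotone function preserves injectivity. The only subtlety worth flagging is that the lemma fails precisely when $\alpha = 0$, since then $f$ is the constant $1/2$, which is exactly why the identifiability analysis in Section~\ref{sec:theory} must assume trust parameters are nonzero (and why the $\tanh$ bounding plus the normalization step in Algorithm~\ref{alg:joint-learning} are needed to keep at least one $\alpha_k$ bounded away from zero in practice).
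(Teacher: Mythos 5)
Your proposal is correct and follows essentially the same route as the paper's proof: strict monotonicity of $\sigma$ via $\sigma'(z) = \sigma(z)(1-\sigma(z)) > 0$, hence injectivity of $\sigma(\alpha\,\cdot)$, followed by cancellation of the nonzero $\alpha$. The extra observations (bijectivity onto $(0,1)$, direction of monotonicity depending on the sign of $\alpha$, and failure at $\alpha = 0$) are accurate but not needed for the lemma as stated.
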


\begin{proof}
Since $\sigma(z) = 1/(1+e^{-z})$ is strictly monotonic (its derivative $\sigma'(z) = \sigma(z)(1-\sigma(z)) > 0$ for all $z$), and $\alpha \neq 0$, the composition $\sigma(\alpha \cdot)$ is also strictly monotonic. Therefore, if $\sigma(\alpha \Delta R_1) = \sigma(\alpha \Delta R_2)$, we must have $\alpha \Delta R_1 = \alpha \Delta R_2$, which implies $\Delta R_1 = \Delta R_2$ since $\alpha \neq 0$.
\end{proof}

\begin{theorem}[Identifiability]
\label{thm:identifiability}
Consider the logistic preference model in Eq.~\eqref{eq:expert-logistic}, in which each expert $k$ provides labels according to
\[
P\!\left(y^{(k)}_{ij}=1 \mid R,\alpha_k\right) = \sigma\!\big(\alpha_k (R(\tau_i)-R(\tau_j))\big).
\]
Assume the following conditions hold:
\begin{enumerate}
    \item \textbf{Global trajectory connectedness:} Let $G = (V,E)$ be the ``trajectory graph'' whose vertices $V$ are trajectories and whose undirected edges $E$ are pairs $\{\tau_i,\tau_j\}$ that appear in at least one expert's dataset $D^{(k)}$ with $\alpha_k \neq 0$. Then $G$ is connected.
    \item \textbf{Expert connectedness:} Let $\{e_1, \ldots, e_K\}$ denote the set of experts, and let $\mathcal{E} = \{ e_k : \alpha_k \neq 0 \}$ be the subset of experts with nonzero trust parameters. 
    Define an undirected ``expert graph'' on $\mathcal{E}$ in which two experts $e_k$ and $e_l$ are connected if and only if there exists a comparison $(\tau_i,\tau_j)$ that appears in both datasets $D^{(k)}$ and $D^{(l)}$ and satisfies 
$R(\tau_i) \neq R(\tau_j)$. We assume that this graph is connected.

    
\end{enumerate}
Then the model is identifiable up to a global affine reparameterization: for any two parameter sets $(R,\{\alpha_k\})$ and $(R',\{\alpha_k'\})$ that yield the same likelihood for all observed preferences, there exist constants $a \neq 0$ and $b \in \mathbb{R}$ such that
\[
R'(\tau) = a\,R(\tau) + b \quad \text{for all } \tau, 
\qquad
\alpha_k' = \alpha_k / a \quad \text{for all } k.
\]
Equivalently, $R(\cdot)$ is identifiable up to an affine transformation and the trust parameters are identifiable up to the corresponding common (possibly sign-changing) scaling.
\end{theorem}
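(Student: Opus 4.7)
The plan is to reduce the likelihood-equality assumption to a family of linear equations in the reward differences and trust parameters, and then exploit the two connectedness hypotheses: expert connectedness pins down a single global scaling constant $a$, and trajectory connectedness promotes the resulting local difference equations to a global affine relation $R' = aR + b$.

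\textbf{Step 1 (from likelihoods to linear equations).}
Assume $(R,\{\alpha_k\})$ and $(R',\{\alpha_k'\})$ induce the same distribution over preferences. For every expert $k$ and every pair $(\tau_i,\tau_j)\in D^{(k)}$, equal likelihoods mean
\[
\sigma\!\bigl(\alpha_k\,\Delta R_{ij}\bigr) \;=\; \sigma\!\bigl(\alpha_k'\,\Delta R'_{ij}\bigr).
\]
By strict monotonicity of $\sigma$ (Lemma~\ref{lem:logistic-ident} applied to the scalar case, or directly from $\sigma'>0$), this is equivalent to
\[
\alpha_k\,\Delta R_{ij} \;=\; \alpha_k'\,\Delta R'_{ij}
\qquad \text{for all } k \text{ and all } (\tau_i,\tau_j)\in D^{(k)}.
\]

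\textbf{Step 2 (a single scaling constant per active expert).}
Fix $k\in\mathcal{E}$, so $\alpha_k\neq 0$, and pick any pair $(\tau_i,\tau_j)\in D^{(k)}$ with $\Delta R_{ij}\neq 0$. The equation above forces $\alpha_k'\neq 0$ and $\Delta R'_{ij}\neq 0$, so the ratio $a_k := \alpha_k/\alpha_k'$ is well-defined and nonzero, and $\Delta R'_{ij} = a_k\,\Delta R_{ij}$ on every pair in $D^{(k)}$ (trivially also when $\Delta R_{ij}=0$, since both sides vanish).

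\textbf{Step 3 (expert connectedness collapses the scaling).}
Suppose experts $k,\ell\in\mathcal{E}$ are joined in the expert graph by a shared pair $(\tau_i,\tau_j)$ with $R(\tau_i)\neq R(\tau_j)$. Then $a_k\,\Delta R_{ij} = \Delta R'_{ij} = a_\ell\,\Delta R_{ij}$ with $\Delta R_{ij}\neq 0$ gives $a_k=a_\ell$. By assumption the expert graph is connected, so there is a single constant $a\neq 0$ with $a_k=a$, i.e., $\alpha_k' = \alpha_k/a$ for every $k\in\mathcal{E}$; for $k\notin\mathcal{E}$ the same relation $\alpha_k'=\alpha_k/a = 0$ is forced as soon as $k$'s dataset contains any pair whose trajectories are linked through the active subgraph (otherwise $\alpha_k$ is intrinsically unconstrained by the data, consistent with the stated degeneracy).

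\textbf{Step 4 (trajectory connectedness yields a global affine map).}
Substituting $\alpha_k' = \alpha_k/a$ back into Step~1 gives $\Delta R'_{ij} = a\,\Delta R_{ij}$ on every edge of the trajectory graph $G$. Fix any root vertex $\tau_0$. Because $G$ is connected, any other $\tau$ is reached by a path $\tau_0 = \tau^{(0)},\tau^{(1)},\dots,\tau^{(m)} = \tau$, and a telescoping sum along this path produces
\[
R'(\tau) - R'(\tau_0) \;=\; \sum_{s=0}^{m-1}\bigl(R'(\tau^{(s+1)}) - R'(\tau^{(s)})\bigr) \;=\; a\bigl(R(\tau)-R(\tau_0)\bigr).
\]
Setting $b := R'(\tau_0) - a\,R(\tau_0)$ delivers $R'(\tau) = a\,R(\tau) + b$ globally, completing the identifiability claim.

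\textbf{Expected main obstacle.}
The cleanest steps are Steps~1 and~4; the subtle point is Step~3, where one must guarantee that the per-expert ratios $a_k$ really do agree. Expert connectedness is only useful when the shared pair exhibits $R(\tau_i)\neq R(\tau_j)$; otherwise the common equation $0=0$ gives no information about $a_k$ versus $a_\ell$. This is precisely why the theorem's hypothesis restricts connecting edges to pairs with $R(\tau_i)\neq R(\tau_j)$, and handling this condition carefully (together with the degenerate case $\alpha_k=0$, which is inherently unidentifiable) is the one place where the proof needs genuine care rather than routine algebra.
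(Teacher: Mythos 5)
Your proposal is correct and follows essentially the same route as the paper's proof: invert the sigmoid by strict monotonicity, define per-expert ratios $a_k=\alpha_k/\alpha_k'$, use expert connectedness on shared pairs with $R(\tau_i)\neq R(\tau_j)$ to force a single scaling $a$, and telescope along paths in the connected trajectory graph to obtain $R'=aR+b$ with $\alpha_k'=\alpha_k/a$. The only difference is cosmetic (you conclude the trust-parameter relation before the affine step, and you omit the paper's final converse check that every affine reparameterization indeed preserves the likelihood, which is not needed for the stated implication).
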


\begin{proof}
We prove identifiability by characterizing all parameter transformations that leave the likelihood invariant.

\paragraph{Step 1: Only products $\alpha_k \Delta R_{ij}$ matter.}
For any pair $(\tau_i, \tau_j)$ in expert $k$'s dataset with $\alpha_k \neq 0$, the likelihood is
\[
P(y^{(k)}_{ij} = 1) = \sigma(\alpha_k (R(\tau_i) - R(\tau_j))).
\]
Suppose two parameter sets $(R, \{\alpha_k\})$ and $(R', \{\alpha_k'\})$ induce the \emph{same} probabilities for all observed pairs. Then, for every observed $(i,j,k)$,
\[
\sigma\!\big(\alpha_k (R(\tau_i) - R(\tau_j))\big)
=
\sigma\!\big(\alpha_k' (R'(\tau_i) - R'(\tau_j))\big).
\]
By Lemma~\ref{lem:logistic-ident} and $\alpha_k \neq 0$, this implies
\begin{equation}
\label{eq:prod-equality}
\alpha_k (R(\tau_i) - R(\tau_j))
=
\alpha_k' (R'(\tau_i) - R'(\tau_j))
\end{equation}
for all observed pairs $(\tau_i,\tau_j)$ of expert $k$.

\paragraph{Step 2: Per-expert equality of differences along labeled edges.}
Fix an arbitrary expert $k$ with $\alpha_k,\alpha_k'  \neq 0$. Restricting \eqref{eq:prod-equality} to this expert gives, for every pair $(\tau_i,\tau_j)\in D^{(k)}$,
\[
R'(\tau_i) - R'(\tau_j) = a_k\,(R(\tau_i) - R(\tau_j)),
\]
where $a_k := \alpha_k/\alpha_k' \neq 0$ is a constant that depends only on expert $k$ (and not on the particular pair). Thus, for every edge in the trajectory graph labeled by expert $k$, the difference in $R'$ is a fixed multiple of the difference in $R$.

\paragraph{Step 3: Expert overlap forces a common global scaling.}
Consider any two experts $k$ and $\ell$ with $\alpha_k,\alpha_\ell \neq 0$. By the expert connectedness assumption, there exists a path between $e_k$ and $e_\ell$ in the expert graph. For each edge along this path, the two incident experts share at least one comparison $(\tau_i,\tau_j)$ for which $R(\tau_i) \neq R(\tau_j)$. Applying \eqref{eq:prod-equality} to such a shared comparison for both experts and canceling the nonzero reward difference $R(\tau_i) - R(\tau_j)$ shows that
\[
\frac{\alpha_k}{\alpha_k'} = \frac{\alpha_\ell}{\alpha_\ell'}.
\]

Equivalently, the scaling factors $a_k$ and $a_\ell$ are equal. Since the expert graph is connected, this argument propagates along any path, implying that there exists a single constant $a \neq 0$ such that
\[
a_k = a \quad \text{for all experts } k \text{ with } \alpha_k \neq 0.
\]

Consequently, for any comparison $(\tau_i,\tau_j)$ labeled by an expert with nonzero trust, the transformed reward differences satisfy
\[
R'(\tau_i) - R'(\tau_j) = a\bigl(R(\tau_i) - R(\tau_j)\bigr).
\]

In particular, this relation holds for every edge $\{\tau_i,\tau_j\}$ in the trajectory graph $G$ defined by the global trajectory connectedness assumption.



\paragraph{Step 4: Global affine relation for $R$.}
Because $G$ is connected, fix an arbitrary reference trajectory $\tau_0\in V$. For any other trajectory $\tau\in V$, choose a path $(\tau_0,\tau_1),\ldots,(\tau_{m-1},\tau_m=\tau)$ in $G$. Summing the edge-wise relation along this path yields
\[
R'(\tau) - R'(\tau_0)
= \sum_{i=0}^{m-1} \big(R'(\tau_{i+1}) - R'(\tau_i)\big)
= a \sum_{i=0}^{m-1} \big(R(\tau_{i+1}) - R(\tau_i)\big)
= a\big(R(\tau) - R(\tau_0)\big).
\]
Defining $b := R'(\tau_0) - a R(\tau_0)$, we obtain
\[
R'(\tau) = a\,R(\tau) + b \quad \text{for all trajectories } \tau.
\]
Thus, any two parameter sets that agree on all induced probabilities must have rewards related by a global affine transformation with slope $a \neq 0$.

\paragraph{Step 5: Induced constraints on trust parameters.}
Substituting $R'(\tau) = a R(\tau) + b$ into \eqref{eq:prod-equality} for an arbitrary expert $k$ and any observed pair $(\tau_i,\tau_j)$ with $R(\tau_i) \neq R(\tau_j)$ gives
\[
\alpha_k (R(\tau_i)-R(\tau_j))
 =
\alpha_k' \big(a R(\tau_i) + b - (a R(\tau_j) + b)\big)
 =
\alpha_k' a (R(\tau_i)-R(\tau_j)).
\]
By expert connectedness assumption, every expert $k$ with $\alpha_k \neq 0$ participates in at least one comparison $(\tau_i,\tau_j) \in D^{(k)}$ for which $R(\tau_i) \neq R(\tau_j)$. For such a comparison, we can cancel the nonzero reward difference $R(\tau_i) - R(\tau_j)$ and conclude that
\[
\alpha_k = a\,\alpha_k' \;\Longrightarrow\; \alpha_k' = \alpha_k / a
\]
for all experts $k$ with $\alpha_k \neq 0$. Experts with $\alpha_k = 0$ are trivially matched by setting $\alpha_k' = 0$. Thus, any two parameter settings that induce the same likelihood must be related by the stated affine reparameterization.

\paragraph{Step 6: Completeness of the equivalence class.}
Conversely, if we take any $a \neq 0$ and $b \in \mathbb{R}$ and define
\[
R'(\tau) := a\,R(\tau) + b, \qquad \alpha_k' := \alpha_k / a,
\]
then for every observed pair,
\[
\alpha_k' (R'(\tau_i) - R'(\tau_j))
=
\frac{\alpha_k}{a} \big( a R(\tau_i) + b - (a R(\tau_j) + b) \big)
=
\alpha_k (R(\tau_i) - R(\tau_j)),
\]
so the induced probabilities and likelihoods are identical. Therefore the affine family described above is exactly the equivalence class of parameters compatible with the observed data.

This establishes that the model is identifiable up to a global affine transformation of $R$ and the corresponding common scaling of $\{\alpha_k\}$.
\end{proof}

\begin{corollary} \label{cor:maxnorm-ident}\textbf{Identifiability under max-normalized bounded trust}

Define the effective trust parameters used inside the sigmoid by
\[
\tilde\alpha_k := \tanh(\alpha_k),\quad 
\bar\alpha_k := \frac{\tilde\alpha_k}{\max_{k'}|\tilde\alpha_{k'}|},
\]
with the convention that the normalization is only applied when the denominator is nonzero. Consider the (effective)
logistic preference model
\[
\Pr\!\left(y_{ij}^{(k)} = 1 \mid R, \bar\alpha_k \right)
= \sigma\!\left( \bar\alpha_k \left( R(\tau_i) - R(\tau_j) \right) \right),
\]
and assume the identifiability conditions of Theorem~\ref{thm:identifiability}. Assume also that
\[
\max_k |\tilde\alpha_k| > 0 \quad (\text{so that } \bar\alpha \text{ is well-defined}).
\]
Then the parameters $(R,\bar\alpha)$ are identifiable up to the reduced equivalence class
\[
R'(\tau) = \pm R(\tau) + b,
\qquad
\bar\alpha'_k = \pm \bar\alpha_k \quad \forall k,
\]
for some constant $b \in \mathbb{R}$. In particular, the continuous scaling ambiguity from
Theorem~\ref{thm:identifiability} disappears.
\end{corollary}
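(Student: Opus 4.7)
The plan is to apply Theorem~\ref{thm:identifiability} directly to the effective preference model with $\bar\alpha_k$ in the role of the trust parameter, and then use the max-normalization constraint to collapse the continuous scaling ambiguity. First, I observe that the effective model $\sigma\!\big(\bar\alpha_k (R(\tau_i)-R(\tau_j))\big)$ has the same functional form as the original logistic model, and that $\bar\alpha_k = 0$ if and only if $\tilde\alpha_k = 0$ if and only if $\alpha_k = 0$ (since $\tanh$ vanishes only at zero and the normalization factor is a positive scalar). Hence the trajectory-connectedness and expert-connectedness assumptions carry over unchanged when stated in terms of $\bar\alpha_k$. Applying Theorem~\ref{thm:identifiability} to $(R,\bar\alpha)$ yields that any alternative parameters $(R',\bar\alpha')$ producing the same likelihoods must satisfy $R'(\tau) = a\,R(\tau) + b$ and $\bar\alpha_k' = \bar\alpha_k / a$ for some $a \neq 0$ and $b \in \mathbb{R}$.

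The key additional step is to use the normalization. By construction, whenever $\max_{k'}|\tilde\alpha_{k'}|>0$, the definition of $\bar\alpha$ forces $\max_k |\bar\alpha_k| = 1$, and the same must hold for $\bar\alpha'$. Substituting the relation from Step~1:
\[
1 \;=\; \max_k |\bar\alpha_k'| \;=\; \max_k \left|\frac{\bar\alpha_k}{a}\right| \;=\; \frac{1}{|a|}\max_k |\bar\alpha_k| \;=\; \frac{1}{|a|},
\]
which forces $|a|=1$ and hence $a \in \{+1,-1\}$. This immediately yields $R'(\tau)=\pm R(\tau)+b$ together with $\bar\alpha'_k = \pm \bar\alpha_k$, with the sign shared across all $k$.

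Finally, I would verify that both signs genuinely belong to the equivalence class by checking the likelihood is invariant under the sign-flipped transformation: for $R'(\tau) = -R(\tau)+b$ and $\bar\alpha'_k = -\bar\alpha_k$, one has $\bar\alpha'_k(R'(\tau_i) - R'(\tau_j)) = (-\bar\alpha_k)\bigl(-(R(\tau_i)-R(\tau_j))\bigr) = \bar\alpha_k(R(\tau_i)-R(\tau_j))$, so every sigmoid argument is preserved. I do not anticipate a serious obstacle here; the only mild subtlety is confirming that the connectedness hypotheses of Theorem~\ref{thm:identifiability} transfer without modification from $\alpha_k$ to $\bar\alpha_k$, which follows because $\tanh$ and positive rescaling preserve both the support and the sign pattern of the trust parameters, so the trajectory and expert graphs are identical under either parameterization.
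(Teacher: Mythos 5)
Your proposal is correct and follows essentially the same route as the paper's proof: invoke Theorem~\ref{thm:identifiability} for the effective model with $\bar\alpha$ in place of $\alpha$, then use $\max_k|\bar\alpha_k|=\max_k|\bar\alpha'_k|=1$ to force $|a|=1$. Your added observations (that the zero/sign pattern of $\bar\alpha_k$ matches that of $\alpha_k$ so the connectedness hypotheses transfer, and that the sign-flipped pair indeed leaves the likelihood invariant) are sound and only make explicit what the paper leaves implicit.
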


\begin{proof}
Let $(R,\bar\alpha)$ and $(R',\bar\alpha')$ induce the same probabilities for all observed comparisons:
\[
\sigma\!\left( \bar\alpha_k \left( R(\tau_i) - R(\tau_j) \right) \right)
=
\sigma\!\left( \bar\alpha'_k \left( R'(\tau_i) - R'(\tau_j) \right) \right)
\quad \forall (i,j,k)\ \text{observed.}
\]

\noindent\textbf{Step 1 (invert sigmoid).}
By strict monotonicity of $\sigma(\cdot)$ (Lemma~\ref{lem:logistic-ident}), equality of probabilities implies equality of
the logits, hence for every observed $(i,j,k)$,
\[
\bar\alpha_k \left( R(\tau_i) - R(\tau_j) \right)
=
\bar\alpha'_k \left( R'(\tau_i) - R'(\tau_j) \right).
\]
This is the same ``only products matter'' step used in Theorem~\ref{thm:identifiability}.

\noindent\textbf{Step 2 (Theorem~\ref{thm:identifiability} structure).}
Under the overlap and connectedness assumptions of Theorem~\ref{thm:identifiability}, the same argument as in its proof
yields the existence of constants $a\neq 0$ and $b\in\mathbb{R}$ such that
\[
R'(\tau) = aR(\tau) + b,
\qquad
\bar\alpha'_k = \frac{\bar\alpha_k}{a}
\quad \forall k \text{ with } \bar\alpha_k \neq 0.
\]
(Exactly as in Theorem~\ref{thm:identifiability}, with $\bar\alpha$ in place of $\alpha$.)

\noindent\textbf{Step 3 (use max-normalization).}
By construction of the max-normalization, $\|\bar\alpha\|_\infty=\max_k|\bar\alpha_k|=1$, and similarly
$\|\bar\alpha'\|_\infty=1$, whenever the denominator is nonzero. Taking $\ell_\infty$ norms in
$\bar\alpha'=\bar\alpha/a$ gives
\[
1=\|\bar\alpha'\|_\infty=\left\|\frac{\bar\alpha}{a}\right\|_\infty=\frac{1}{|a|}\|\bar\alpha\|_\infty=\frac{1}{|a|},
\]
so $|a|=1$, i.e., $a\in\{+1,-1\}$. Therefore
\[
R'(\tau)=\pm R(\tau)+b,
\qquad
\bar\alpha'=\pm \bar\alpha,
\]
which proves the corollary.
\end{proof}
The identifiability corollary shows that max-normalized bounded trust removes the continuous scaling ambiguity between
$R$ and $\alpha$. This supports our design choice of using $\tilde{\alpha}_k=\tanh(\alpha_k)$ and normalizing by
$\max_{k'}|\tilde{\alpha}_{k'}|$, which prevents very large logits and makes trust values comparable across experts.
\\
\begin{remark}\textit{
While our identifiability analysis is stated under the assumption of overlapping expert comparisons, the proposed method is applicable in substantially broader settings. In our experiments, experts provide feedback on disjoint comparison sets, and the learned trust parameters nonetheless align with the theoretical behavior, effectively separating reliable, noisy, and adversarial experts in practice.
}
\end{remark}

\section{Experiments}
\label{sec:experiments}

We evaluate proposed TTP across multiple domains to assess robustness to heterogeneous preference feedback, with a particular focus on adversarial and noisy experts. Our experiments are designed to answer three core questions:
\begin{enumerate}
    \item \textbf{Robustness:} Can TTP recover strong policies when a subset of experts is adversarial or unreliable?
    \item \textbf{Trust dynamics:} Does the learned trust behave as predicted by the gradient analysis in Section~\ref{sec:alpha-analysis}?
    \item \textbf{Feedback efficiency:} Does TTP use preference data more efficiently than robust PBRL baselines?
\end{enumerate}
Across all settings, we find that jointly learning reward and trust yields stable and interpretable learning dynamics that uniform-weight PBRL fails to achieve under corruption.

\subsection{Experimental Setup}
\paragraph{Domains.}
We evaluate our method on four benchmark environments that cover both locomotion and manipulation tasks, as shown in Figure~\ref{fig:experiment_environments}. These environments are chosen to reflect a range of dynamics, reward structures, and sources of difficulty.

The \textbf{\textsc{DMControl} Cheetah-Run} task focuses on continuous-control locomotion, with long-horizon dynamics and dense reward signals, making it a good test of robustness in smoothly varying environments. 
The \textbf{\textsc{MetaWorld} Door-Open-v2} task involves contact-rich manipulation, where the agent must grasp and rotate a door handle. 
The \textbf{\textsc{MetaWorld} Sweep-Into-v2} task requires precise manipulation to push an object into a target region and provides only sparse success signals, which makes learning particularly fragile when preference feedback is noisy or inconsistent.
Finally, we use the \textbf{\textsc{DMControl} Walker-Walk} task to study how learning performance scales with the amount of feedback, evaluating performance under budgets of 500, 1{,}000, 5{,}000, and 10{,}000 pairwise comparisons.



\begin{figure}[!t]
    \centering

    \begin{subfigure}[b]{0.48\linewidth}
        \centering
        \includegraphics[width=0.62\linewidth]{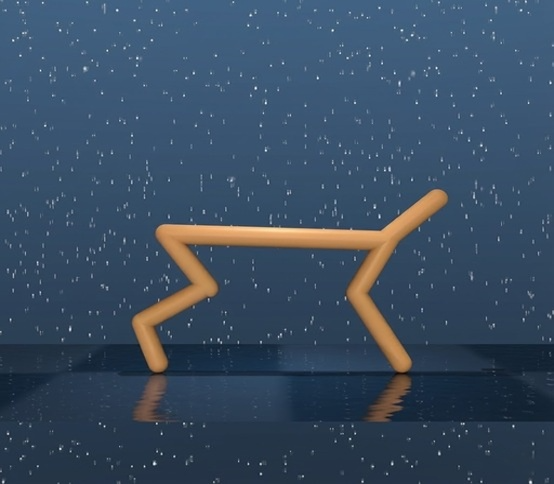}
        \caption{\textsc{DMControl} Cheetah-Run}
        \label{fig:env_cheetah}
    \end{subfigure}\hfill
    \begin{subfigure}[b]{0.48\linewidth}
        \centering
        \includegraphics[width=0.62\linewidth]{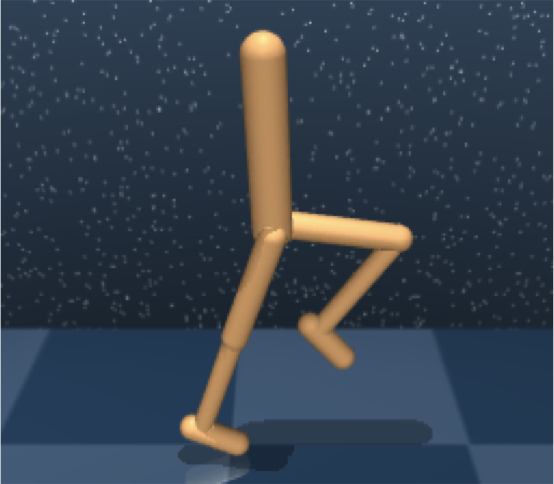}
        \caption{\textsc{DMControl} Walker-Walk}
        \label{fig:env_walker}
    \end{subfigure}

    \vspace{2mm}

    \begin{subfigure}[b]{0.48\linewidth}
        \centering
        \includegraphics[width=0.62\linewidth]{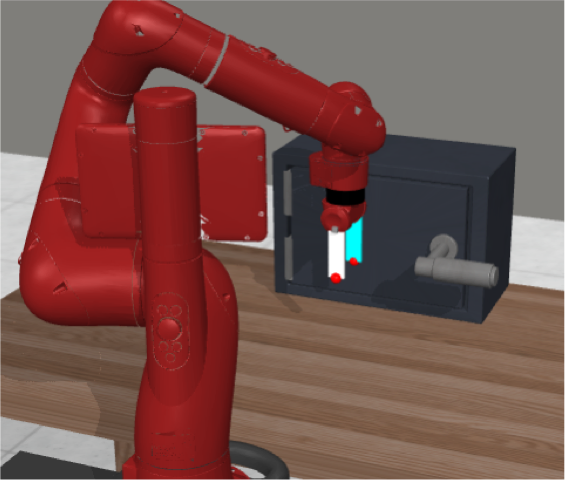}
        \caption{\textsc{MetaWorld} Door-Open-v2}
        \label{fig:env_door}
    \end{subfigure}\hfill
    \begin{subfigure}[b]{0.48\linewidth}
        \centering
        \includegraphics[width=0.62\linewidth]{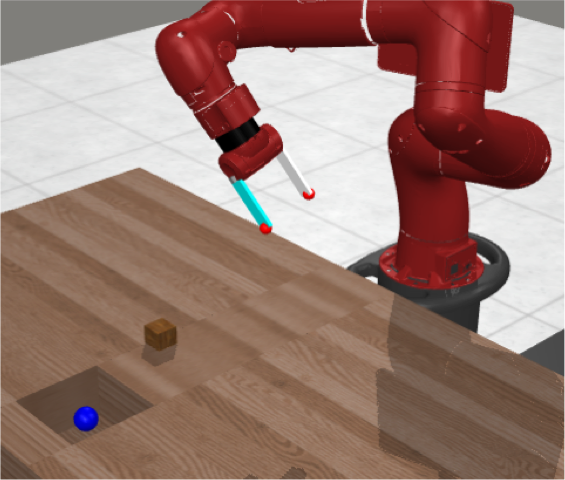}
        \caption{\textsc{MetaWorld} Sweep-Into-v2}
        \label{fig:env_sweep}
    \end{subfigure}

    \caption{Experimental environments used in this work. We evaluate on two locomotion domains from \textsc{DMControl} (top row) and two manipulation domains from \textsc{MetaWorld} (bottom row).}
    \label{fig:experiment_environments}
\end{figure}

\paragraph{Multi-expert configuration.}
Following B-Pref~\cite{lee2021bpref}, we simulate heterogeneous expert feedback using $K=4$ experts. Each expert $k$ is assigned a ground-truth reliability label $\beta_k \in \{-1, 0, 1\}$, which indicates whether the expert provides reliable, neutral, or adversarial feedback.

\begin{itemize}
    \item \(\beta_k = 1\): reliable expert providing reward-consistent preferences,
    \item \(\beta_k = 0\): noisy expert producing random preferences (approximately 50\% accuracy),
    \item \(\beta_k = -1\): adversarial expert systematically flipping preferences.
\end{itemize}
We evaluate two mixtures: (i) \emph{adversarial} \(\beta=[1,1,1,-1]\) (25\% adversarial), and (ii) \emph{noisy} \(\beta=[1,1,1,0]\).
Each expert labels \emph{disjoint} trajectory pairs, reflecting crowdsourcing scenarios where annotators do not overlap; thus robustness cannot rely on majority vote on identical pairs.

\paragraph{Implementation details.}
We implement TTP on top of PEBBLE~\cite{pebble} using Soft Actor-Critic (SAC)~\cite{haarnoja2018soft}.
The reward model is an MLP.
Trust parameters \(\alpha_k\) are initialized at 0.01 and updated by gradient descent.
Full implementation details, including hyperparameters and training procedures, are available in the following \href{https://github.com/SeyedAmirHs00/complete-mixture-pbrl}{GitHub repository}.
All results are averaged over 10 random seeds with standard error bands.

\paragraph{Baselines and metrics.}
We compare against: (i) standard PEBBLE with uniform expert weights ~\cite{pebble}; (ii) RIME~\cite{rime}; (iii) MCP~\cite{mcp}; and (iv) oracle SAC~\cite{haarnoja2018soft} trained with true reward.
We evaluate MetaWorld by success rate and \textsc{DMControl} by true episode reward.

\subsection{Results}
We evaluate robustness to heterogeneous and adversarial expert feedback across multiple environments using a consistent experimental setup. In each domain, we simulate $K=4$ experts with varying reliability and report learning curves and trust dynamics. 

\subsubsection{MetaWorld Sweep-Into-v2}

\paragraph{Adversarial experts ($\beta=[1,1,1,-1]$)}
We evaluate robustness to adversarial preference feedback on the \textsc{MetaWorld} Sweep-Into-v2 task, a precision manipulation environment with sparse success signals. We simulate \(K=4\) experts following the B-Pref protocol, where three experts provide aligned feedback and one expert is adversarial, resulting in 25\% corrupted comparisons. Figure~\ref{fig:eval_sweepinto_adv} reports the learning curves for all methods, showing success rate (\%) as a function of training progress.

As shown in Figure~\ref{fig:eval_sweepinto_adv}, TTP achieves performance close to the oracle baseline SAC despite the presence of adversarial feedback, while PEBBLE fails to learn a successful policy. RIME and MCP improve over PEBBLE but consistently underperform TTP across training. 
TTP shows stable and monotonic improvement throughout training, suggesting that trust learning reduces the influence of corrupted feedback early enough to prevent the policy from drifting into unrecoverable regions of the state space.

Figure~\ref{fig:reward_alphas_sweep_adv} provides further insight into the mechanism underlying TTP’s performance. The learned trust parameters, shown in Figures~\ref{fig:alpha0_sweep_adv}--\ref{fig:alpha3_sweep_adv}, become positive for the three reliable experts and negative for the adversarial expert. Notably, this separation emerges early in training and remains stable, indicating that trust learning actively affects reward learning rather than reacting to late-stage policy performance. Because experts label disjoint comparison pairs, identifying and inverting the adversarial expert effectively recovers approximately one quarter of the dataset as useful signal, whereas filtering-based approaches must discard or neutralize much of that same information.

\begin{figure}[!htbp]
    \centering
    \includegraphics[width=\linewidth]{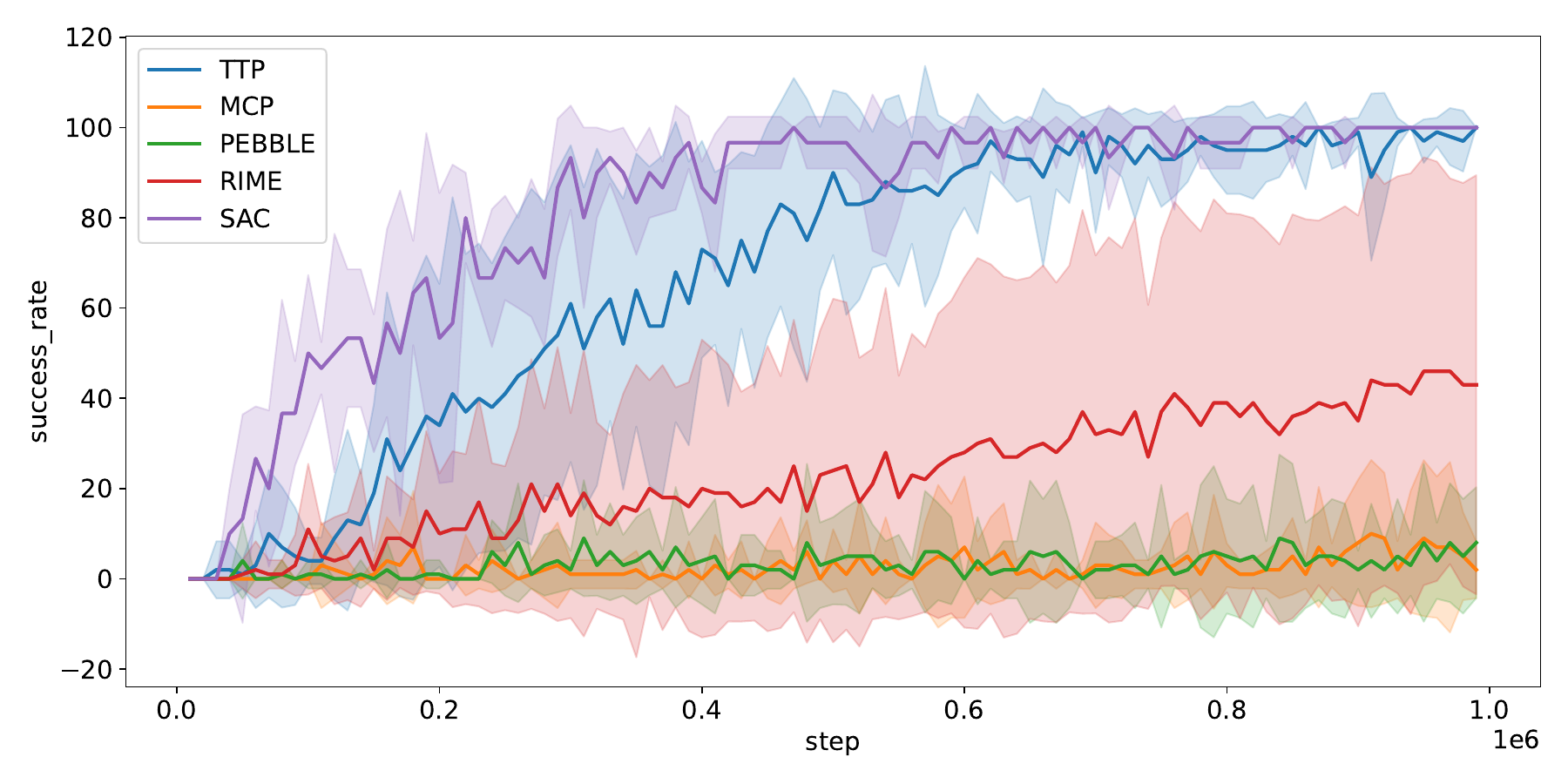}
    \caption{Evaluation success rate for MetaWorld Sweep-Into-v2 under adversarial conditions (\(\beta=[1,1,1,-1]\)).}
    \label{fig:eval_sweepinto_adv}
\end{figure}


\begin{figure}[!htbp]
    \centering

    \begin{subfigure}[b]{0.48\textwidth}
        \centering
        \includegraphics[width=\linewidth]{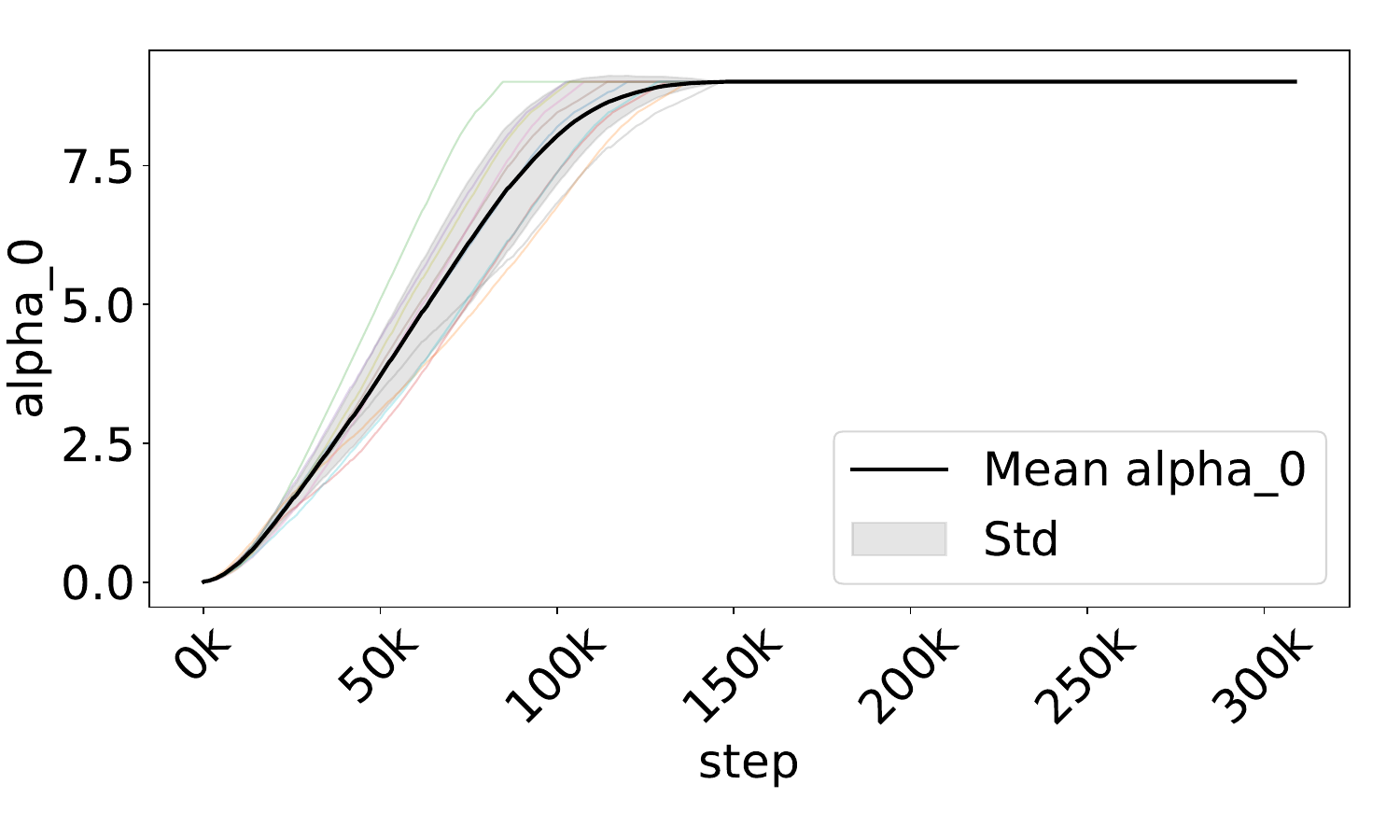}
        \caption{Expert 0 (\(\beta_0=1\), reliable)}
        \label{fig:alpha0_sweep_adv}
    \end{subfigure}\hfill
    \begin{subfigure}[b]{0.48\textwidth}
        \centering
        \includegraphics[width=\linewidth]{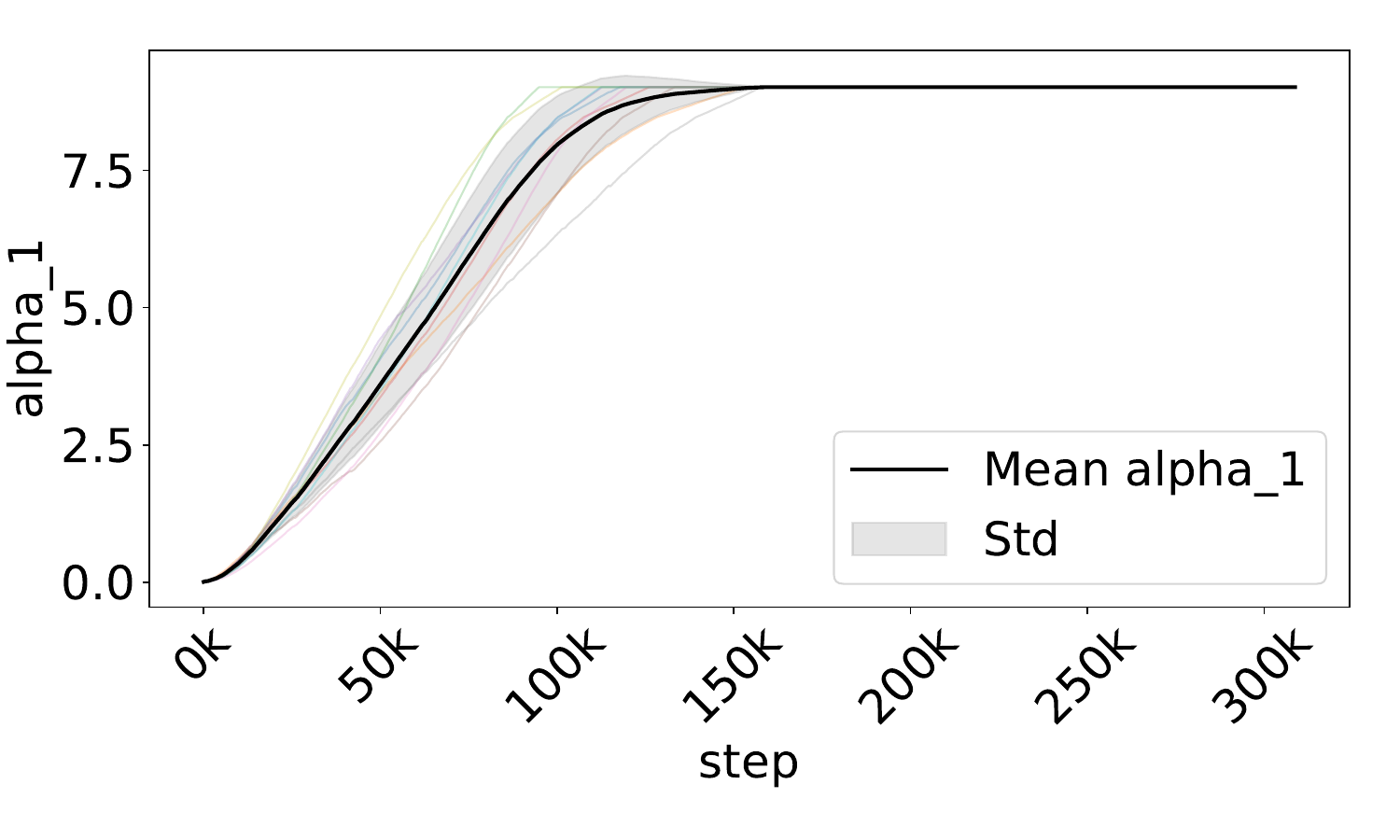}
        \caption{Expert 1 (\(\beta_1=1\), reliable)}
        \label{fig:alpha1_sweep_adv}
    \end{subfigure}

    \vspace{1ex}

    \begin{subfigure}[b]{0.48\textwidth}
        \centering
        \includegraphics[width=\linewidth]{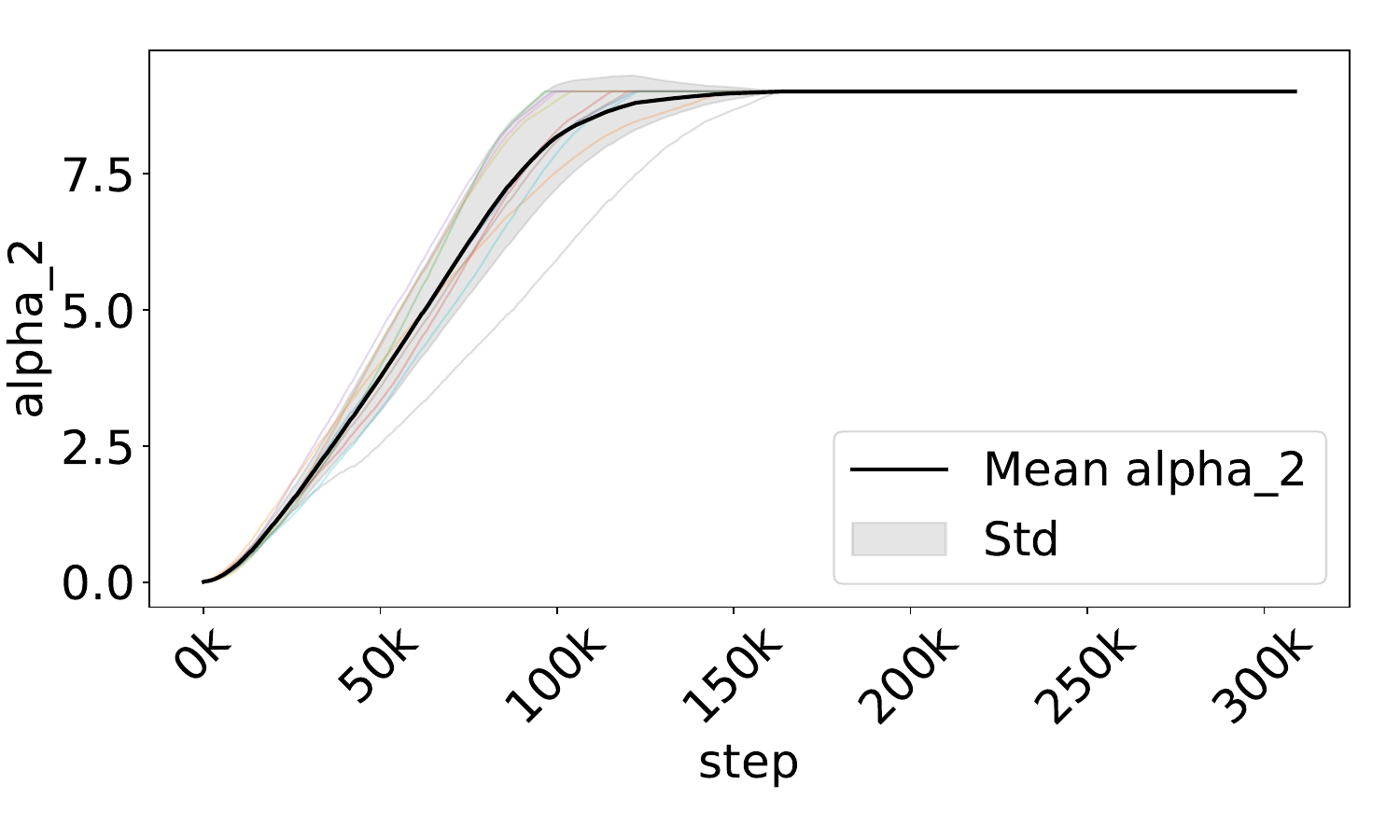}
        \caption{Expert 2 (\(\beta_2=1\), reliable)}
        \label{fig:alpha2_sweep_adv}
    \end{subfigure}\hfill
    \begin{subfigure}[b]{0.48\textwidth}
        \centering
        \includegraphics[width=\linewidth]{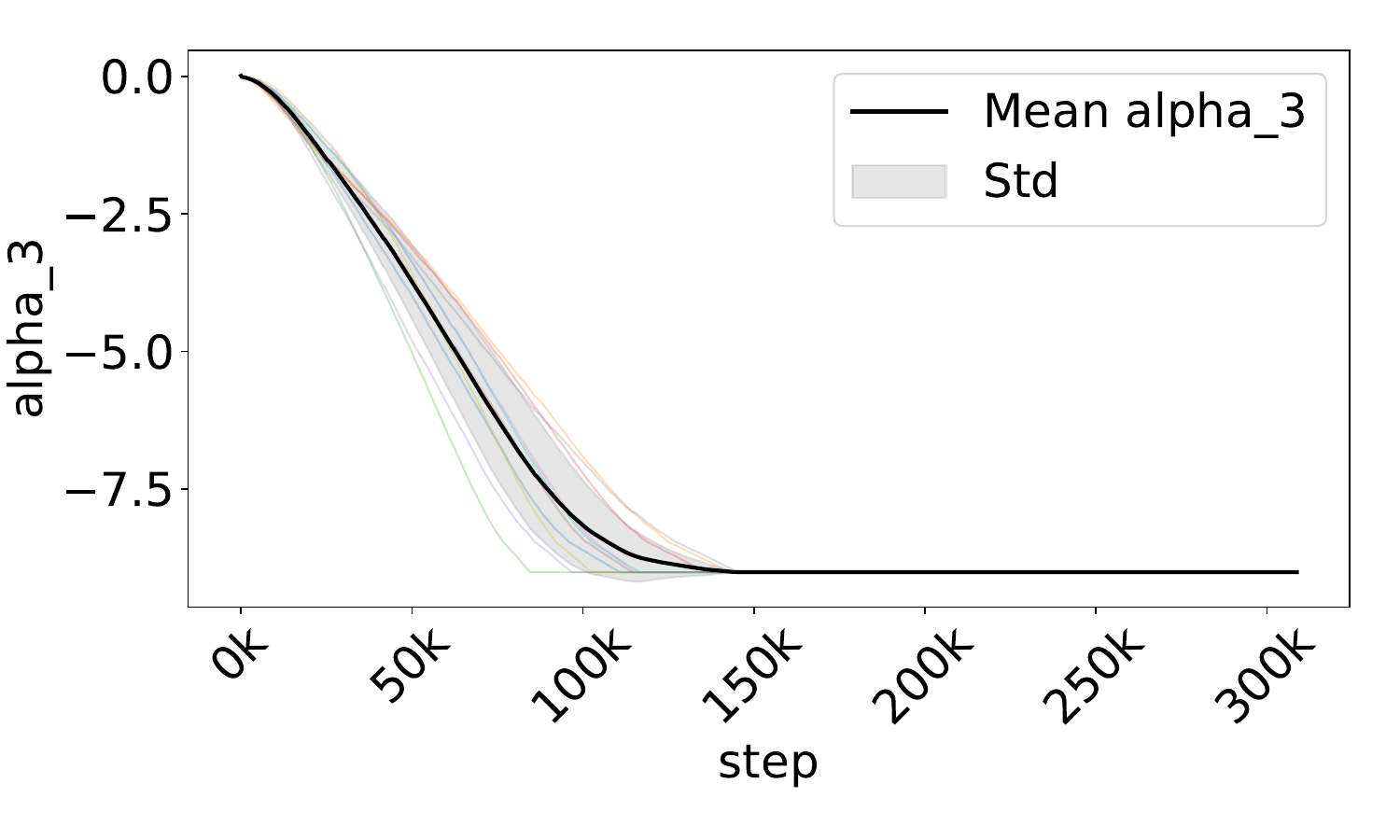}
        \caption{Expert 3 (\(\beta_3=-1\), adversarial)}
        \label{fig:alpha3_sweep_adv}
    \end{subfigure}

    \caption{Learned trust trajectories on Sweep-Into-v2 under adversarial conditions.}
    \label{fig:reward_alphas_sweep_adv}
\end{figure}

\paragraph{Noisy experts (\(\beta=[1,1,1,0]\)).}
We next consider a setting with noisy but non-adversarial feedback. As in the previous experiment, we use four experts, but now one expert provides uninformative comparisons rather than misleading ones, corresponding to \(\beta=[1,1,1,0]\). 

The learning curves for this experiment are shown in Figure~\ref{fig:eval_sweepinto_noisy}. All methods perform better than in the adversarial setting, reflecting the absence of systematic bias in the feedback. TTP and RIME perform significantly better than PEBBLE and MCP. RIME is comparable to TTP for much of training, though TTP tends to achieve higher performance in later episodes.


Figure~\ref{fig:reward_alphas_sweep_noisy} helps explain this behavior. The learned trust parameter for the noisy expert remains close to zero (Figure~\ref{fig:alpha3_sweep_noisy}), while the reliable experts are assigned positive trust. This behavior is consistent with the analysis in Section~\ref{sec:alpha-analysis}: when an expert’s preference accuracy satisfies \(p_k(\Delta) \approx \tfrac{1}{2}\), the expected gradient near \(\alpha_k \approx 0\) vanishes, giving the model no reason to over-trust random supervision.



\begin{figure}[!htbp]
    \centering
    \includegraphics[width=\linewidth]{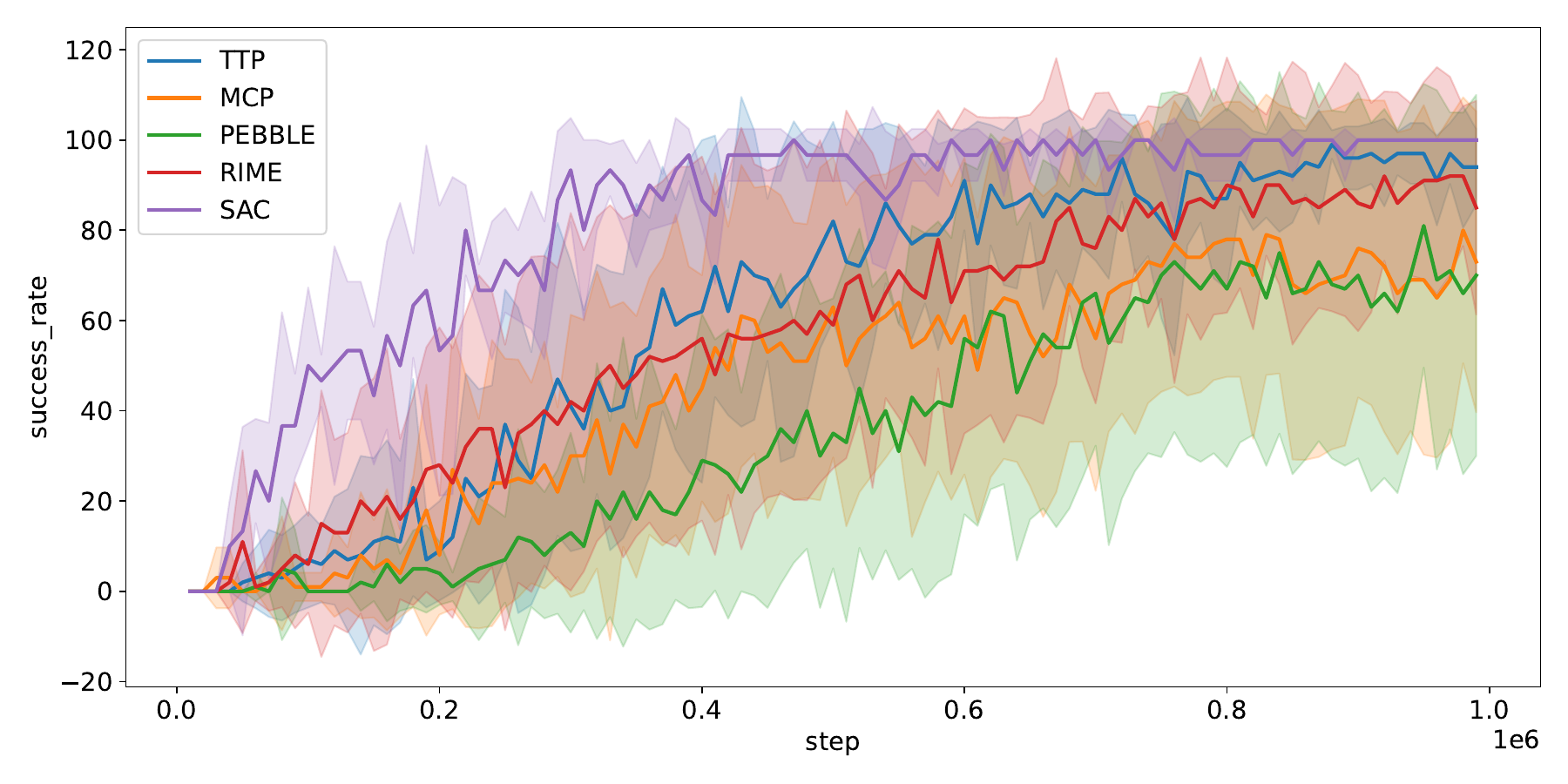}
    \caption{Evaluation success rate for MetaWorld Sweep-Into-v2 under noisy conditions (\(\beta=[1,1,1,0]\)).}
    \label{fig:eval_sweepinto_noisy}
\end{figure}


\begin{figure}[!htbp]
    \centering

    \begin{subfigure}[b]{0.48\textwidth}
        \centering
        \includegraphics[width=\linewidth]{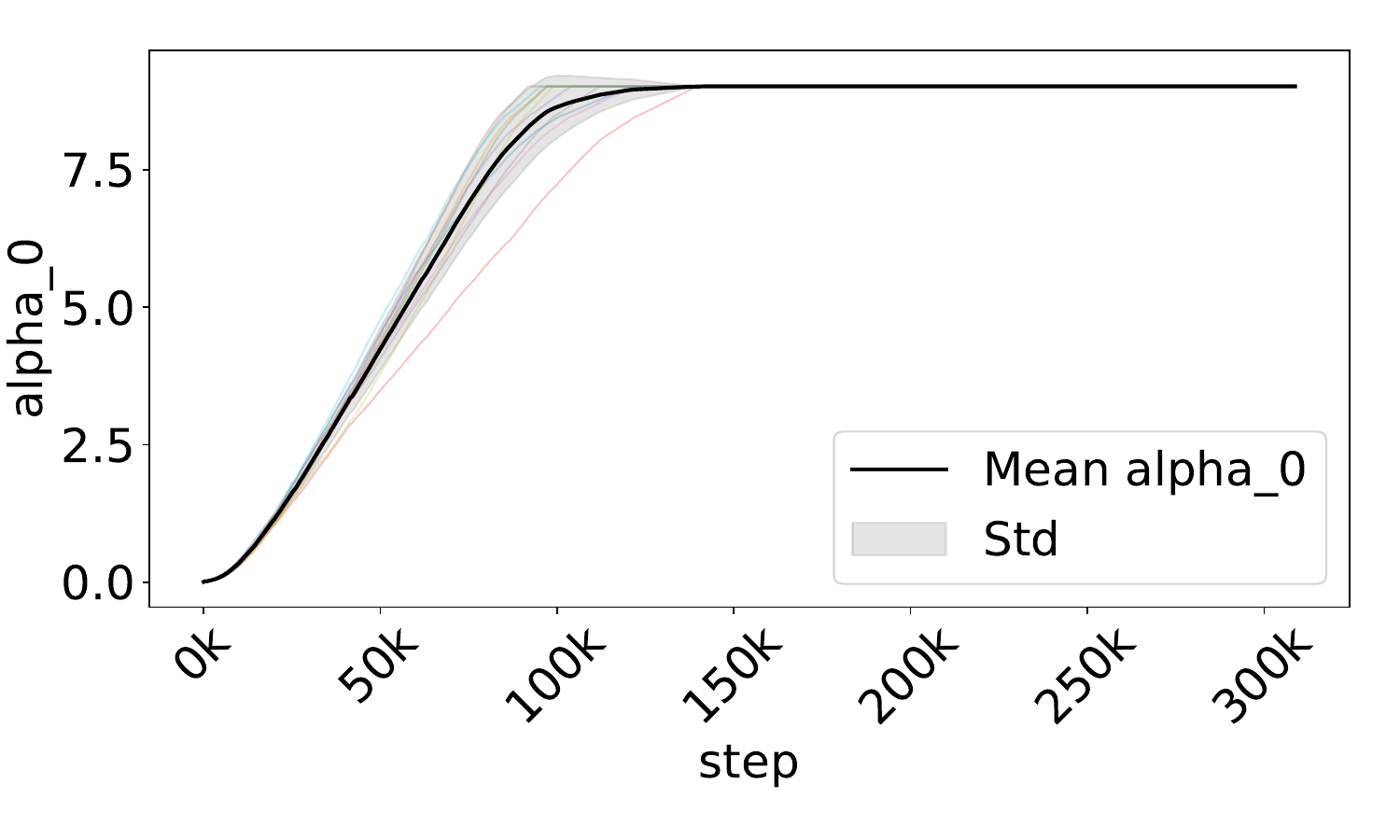}
        \caption{Expert 0 (\(\beta_0=1\), reliable)}
        \label{fig:alpha0_sweep_noisy}
    \end{subfigure}\hfill
    \begin{subfigure}[b]{0.48\textwidth}
        \centering
        \includegraphics[width=\linewidth]{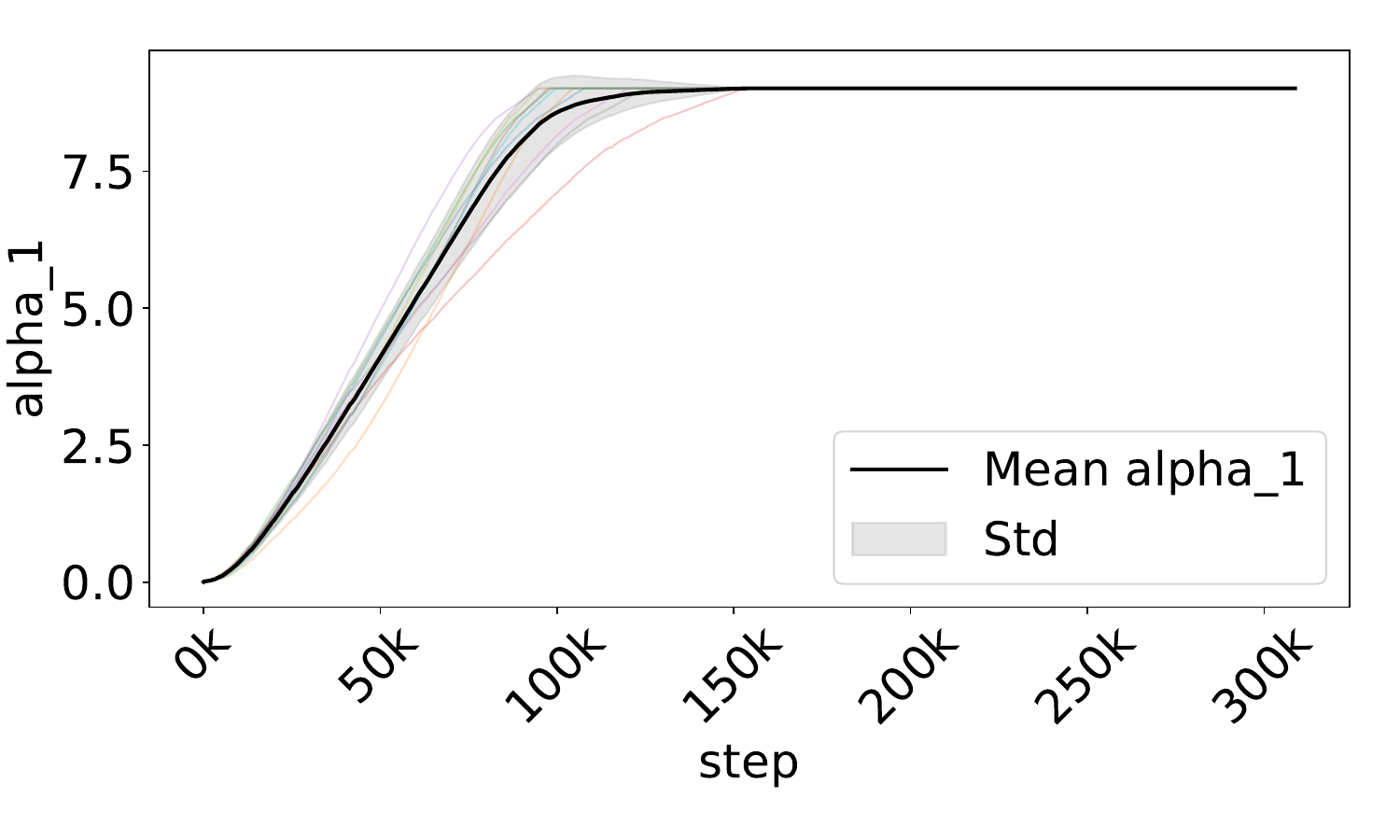}
        \caption{Expert 1 (\(\beta_1=1\), reliable)}
        \label{fig:alpha1_sweep_noisy}
    \end{subfigure}

    \vspace{1ex}

    \begin{subfigure}[b]{0.48\textwidth}
        \centering
        \includegraphics[width=\linewidth]{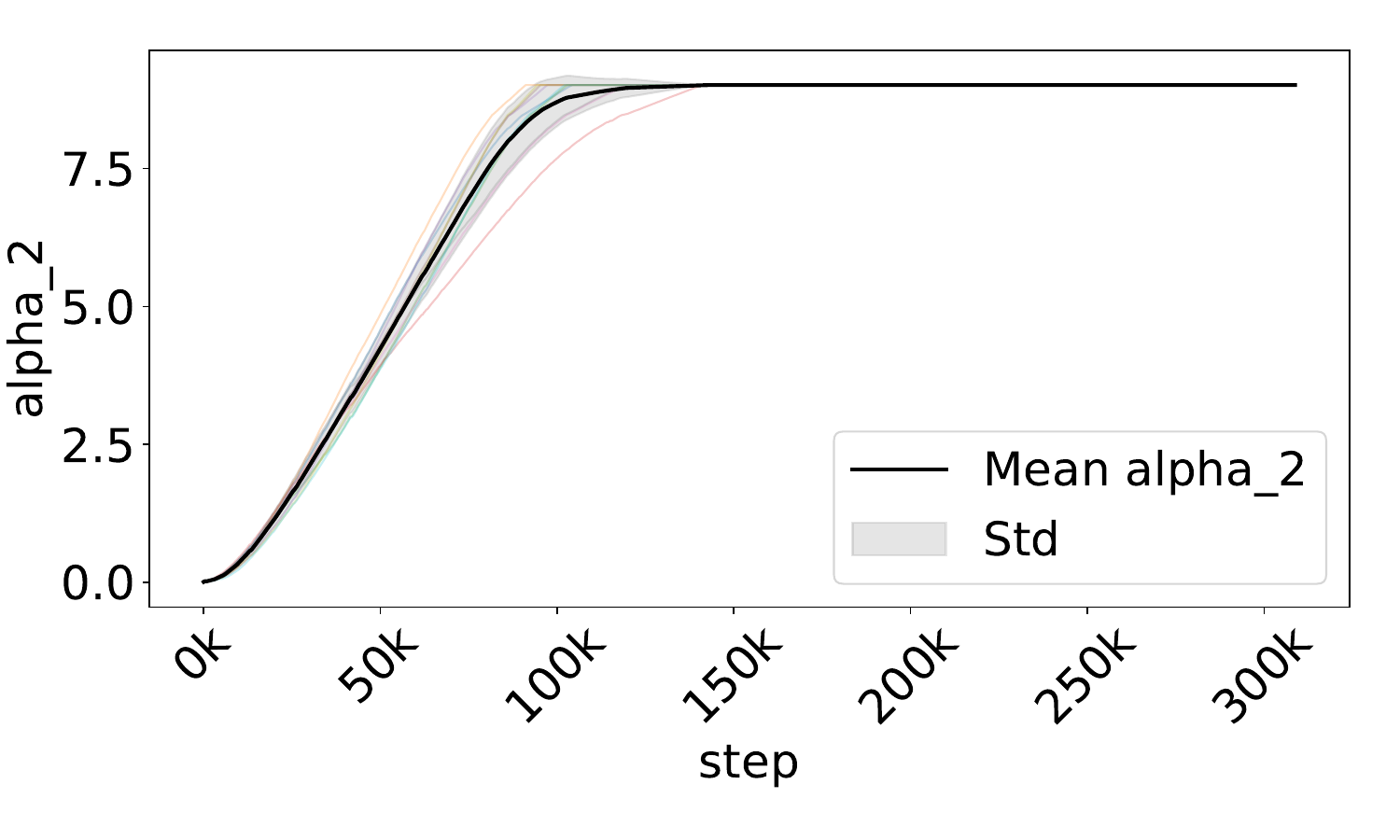}
        \caption{Expert 2 (\(\beta_2=1\), reliable)}
        \label{fig:alpha2_sweep_noisy}
    \end{subfigure}\hfill
    \begin{subfigure}[b]{0.48\textwidth}
        \centering
        \includegraphics[width=\linewidth]{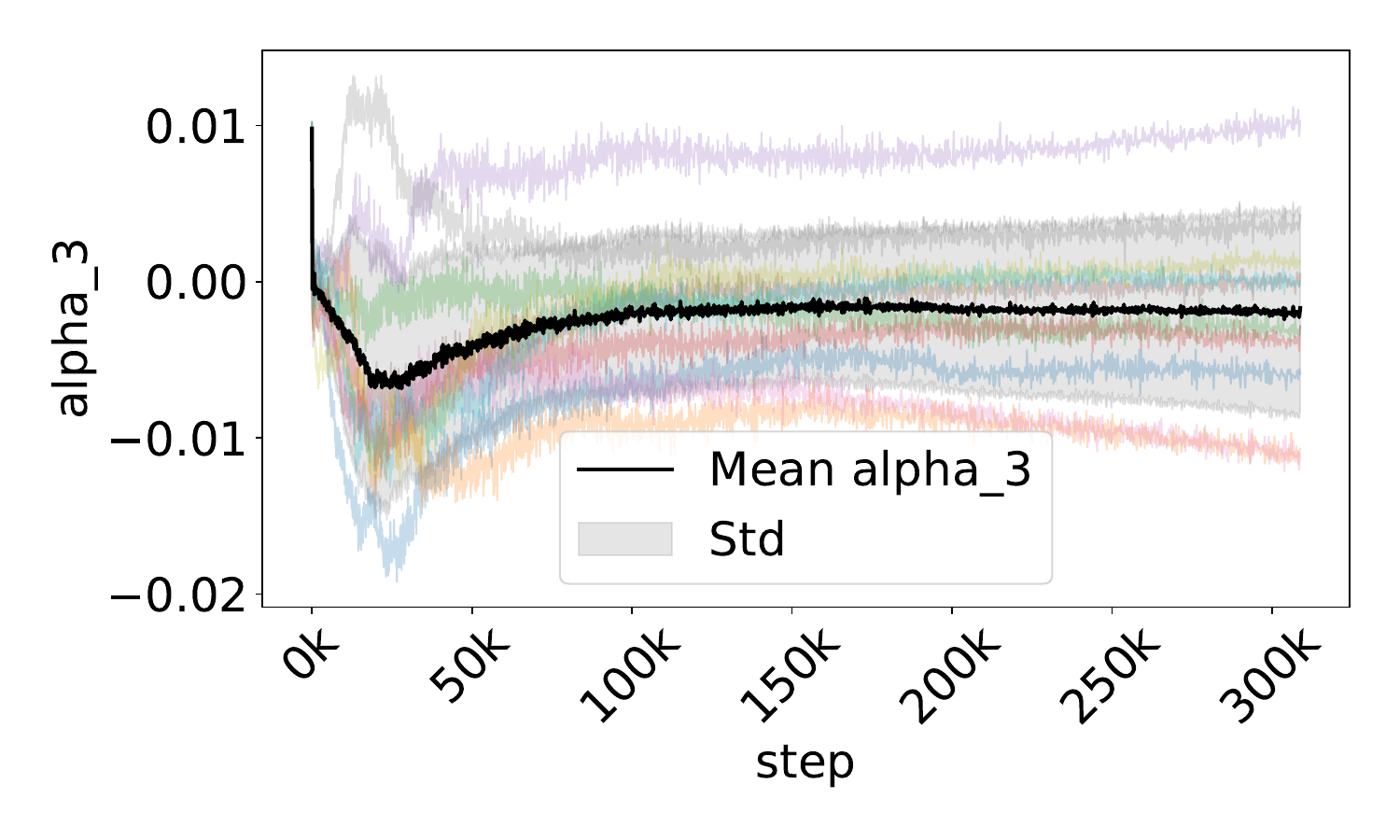}
        \caption{Expert 3 (\(\beta_3=0\), noisy)}
        \label{fig:alpha3_sweep_noisy}
    \end{subfigure}

    \caption{Learned trust trajectories on Sweep-Into-v2 under noisy conditions.}
    \label{fig:reward_alphas_sweep_noisy}
\end{figure}

\subsubsection{MetaWorld Door-Open-v2}

\paragraph{Adversarial experts (\(\beta=[1,1,1,-1]\)).}
We next evaluate robustness to adversarial preference feedback on the \textsc{MetaWorld} Door-Open-v2 task. As in the previous experiments, we simulate four experts, one of which provides systematically adversarial comparisons.

The resulting learning curves are shown in Figure~\ref{fig:eval_door_adv}. TTP maintains high success rates despite adversarial corruption, whereas PEBBLE degrades sharply. The larger performance gap between TTP and the baselines in this environment highlights the importance of explicitly correcting biased supervision, particularly for tasks that require precise and sequential interactions to succeed.

Figure~\ref{fig:reward_alphas_door_adv} provides insight into the mechanism underlying this behavior. The trust parameters learned by TTP assign positive weight to the reliable experts and negative weight to the adversarial expert (Figure~\ref{fig:alpha3_door_adv}). The fact that this sign pattern closely mirrors what we observed in Sweep-Into-v2 suggests that the trust mechanism is capturing an intrinsic property of the experts (namely, the direction of their correlation with true task progress) rather than overfitting to a specific environment.

\begin{figure}[!htbp]
    \centering
    \includegraphics[width=\linewidth]{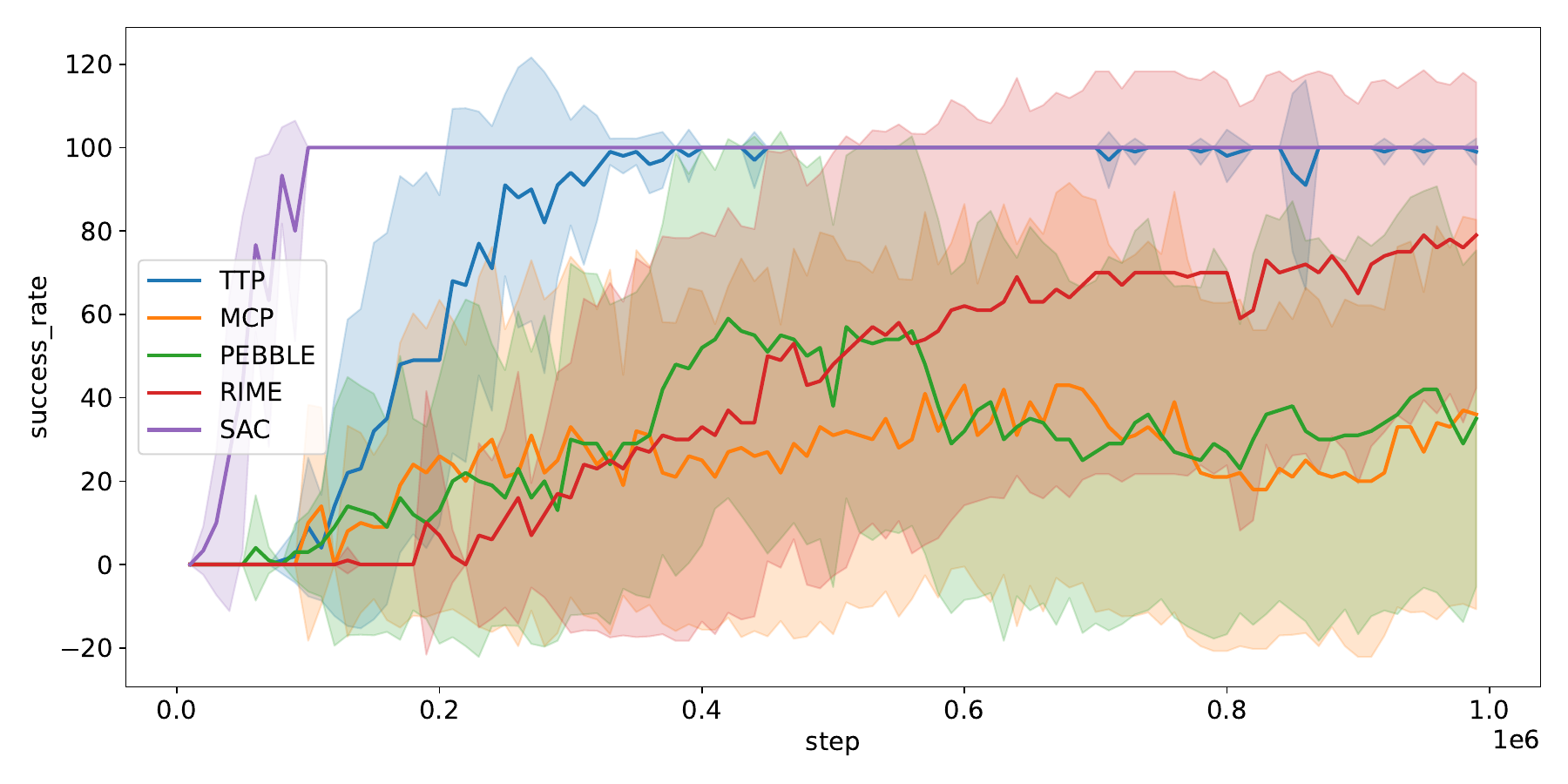}
    \caption{Evaluation success rate for MetaWorld Door-Open-v2 under adversarial conditions (\(\beta=[1,1,1,-1]\)).}
    \label{fig:eval_door_adv}
\end{figure}


\begin{figure}[!htbp]
    \centering

    \begin{subfigure}[b]{0.48\textwidth}
        \centering
        \includegraphics[width=\linewidth]{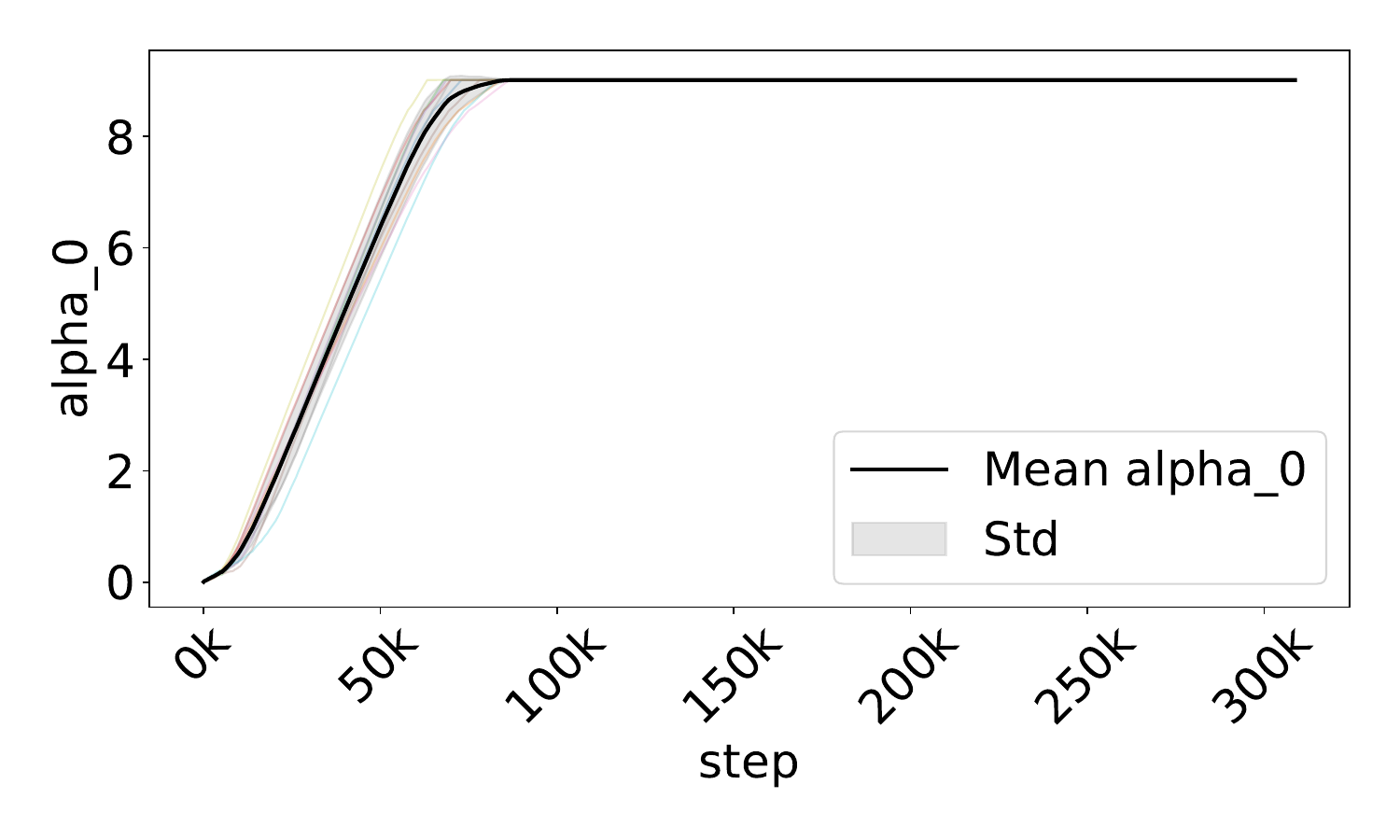}
        \caption{Expert 0 (\(\beta_0=1\), reliable)}
        \label{fig:alpha0_door_adv}
    \end{subfigure}\hfill
    \begin{subfigure}[b]{0.48\textwidth}
        \centering
        \includegraphics[width=\linewidth]{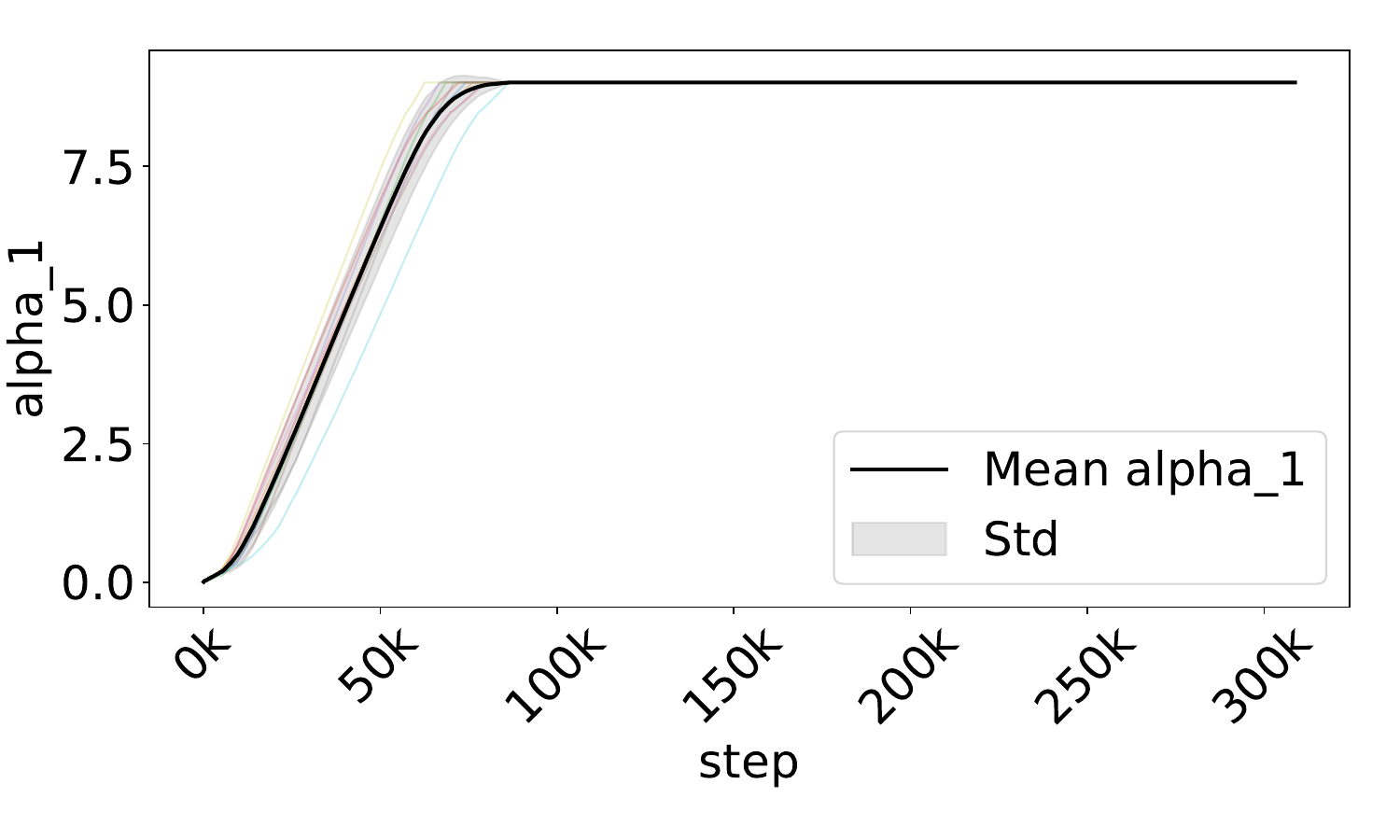}
        \caption{Expert 1 (\(\beta_1=1\), reliable)}
        \label{fig:alpha1_door_adv}
    \end{subfigure}

    \vspace{1ex}

    \begin{subfigure}[b]{0.48\textwidth}
        \centering
        \includegraphics[width=\linewidth]{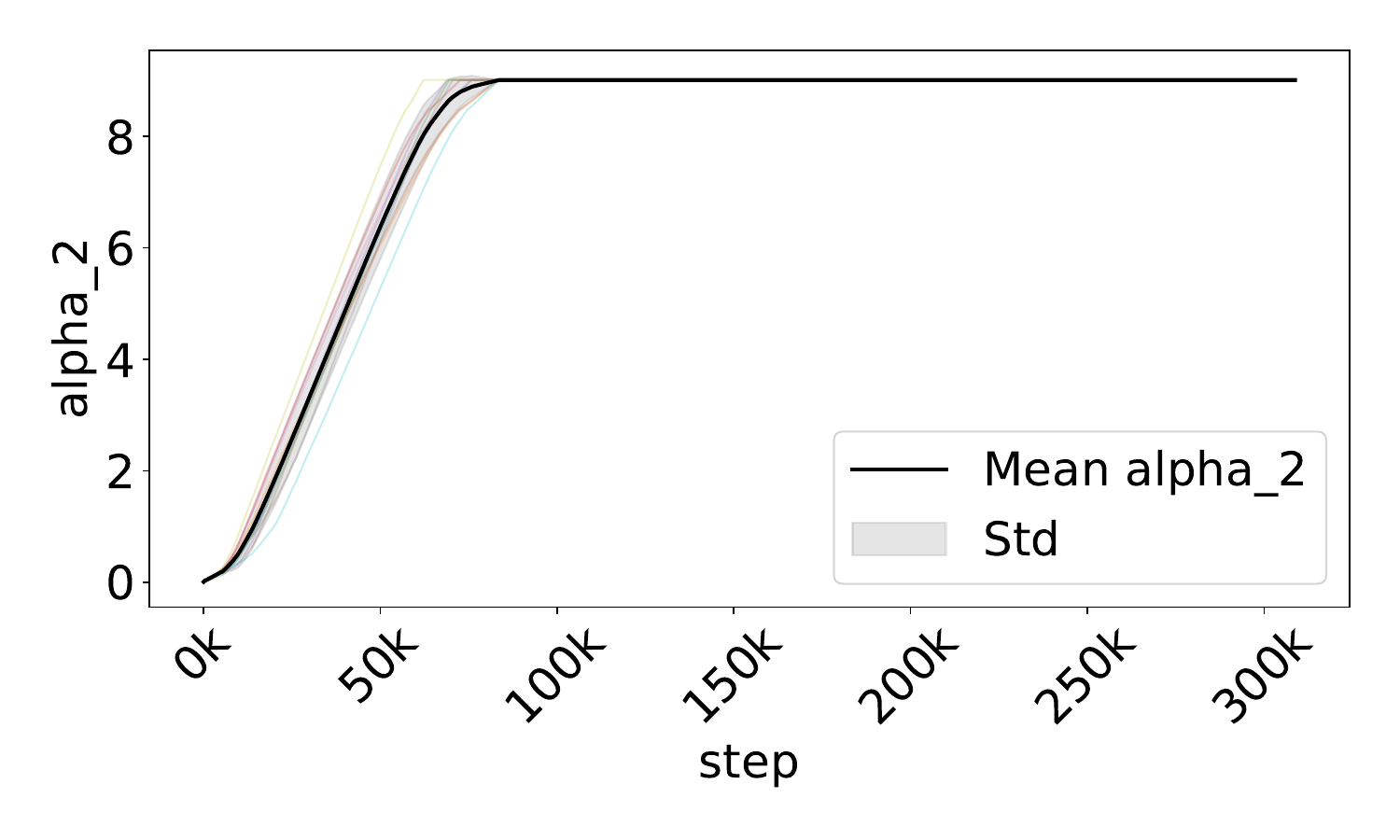}
        \caption{Expert 2 (\(\beta_2=1\), reliable)}
        \label{fig:alpha2_door_adv}
    \end{subfigure}\hfill
    \begin{subfigure}[b]{0.48\textwidth}
        \centering
        \includegraphics[width=\linewidth]{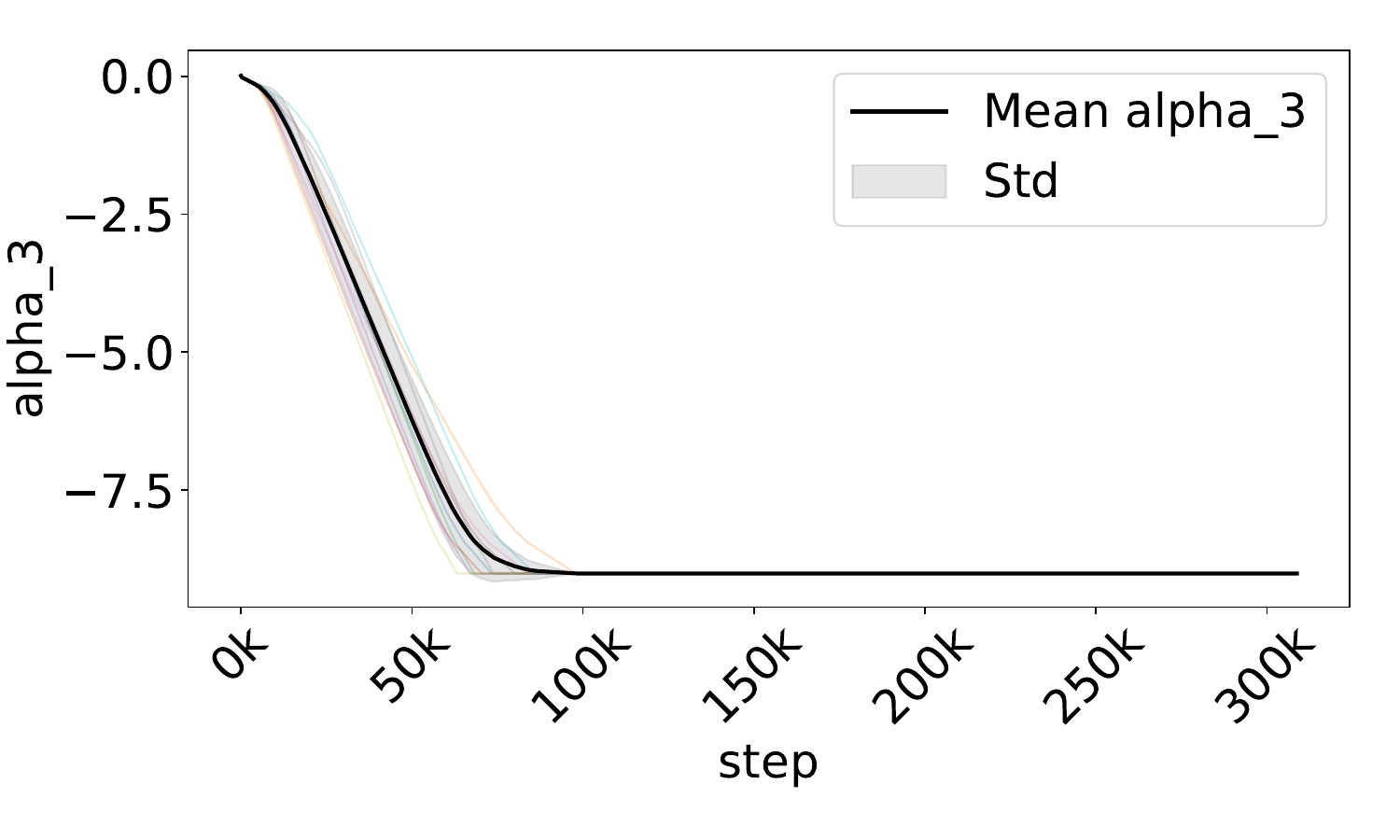}
        \caption{Expert 3 (\(\beta_3=-1\), adversarial)}
        \label{fig:alpha3_door_adv}
    \end{subfigure}

    \caption{Learned trust trajectories on Door-Open-v2 under adversarial conditions.}
    \label{fig:reward_alphas_door_adv}
\end{figure}

\paragraph{Noisy experts (\(\beta=[1,1,1,0]\)).}
We evaluate the Door-Open-v2 task under noisy but non-adversarial feedback, where one of four experts provides uninformative comparisons. The learning curves in Figure~\ref{fig:eval_door_noisy} show that TTP maintains strong performance. 
Figure~\ref{fig:reward_alphas_door_noisy} indicates the trust parameter for the noisy expert remains near zero (Figure~\ref{fig:alpha3_door_noisy}), while the reliable experts receive positive trust.

\begin{figure}[!htbp]
    \centering
    \includegraphics[width=\linewidth]{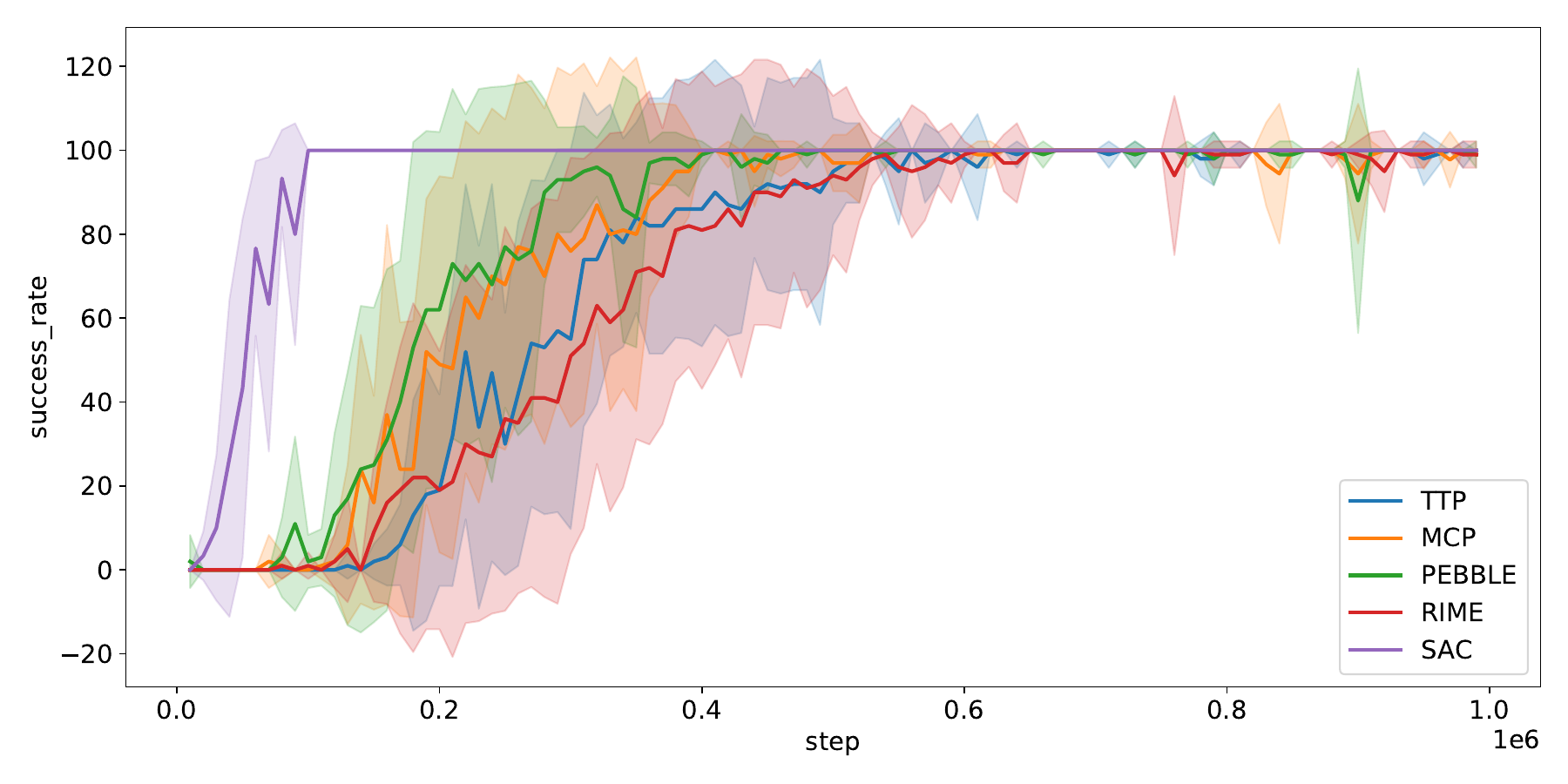}
    \caption{Evaluation success rate for MetaWorld Door-Open-v2 under noisy conditions (\(\beta=[1,1,1,0]\)).}
    \label{fig:eval_door_noisy}
\end{figure}


\begin{figure}[!htbp]
    \centering

    \begin{subfigure}[b]{0.48\textwidth}
        \centering
        \includegraphics[width=\linewidth]{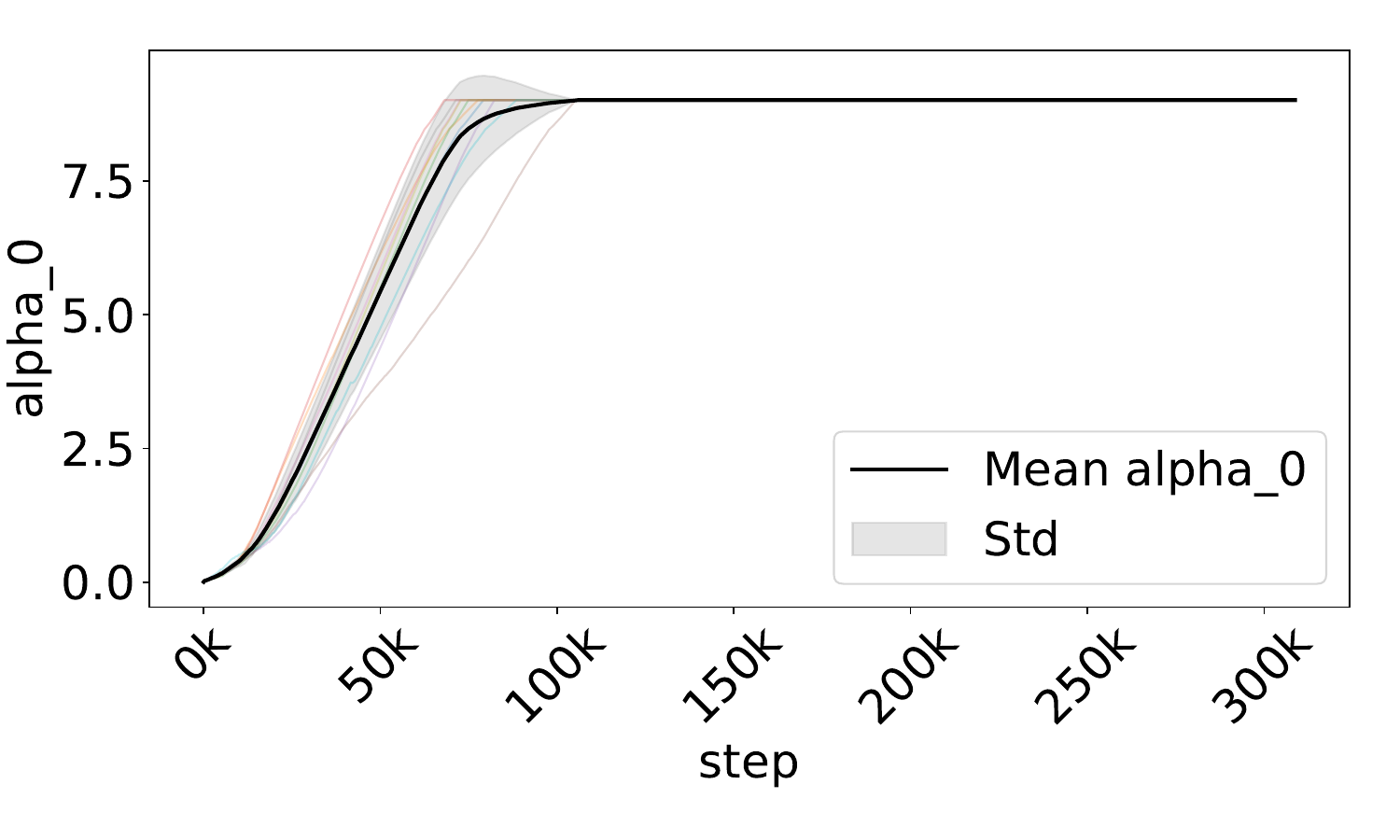}
        \caption{Expert 0 (\(\beta_0=1\), reliable)}
        \label{fig:alpha0_door_noisy}
    \end{subfigure}\hfill
    \begin{subfigure}[b]{0.48\textwidth}
        \centering
        \includegraphics[width=\linewidth]{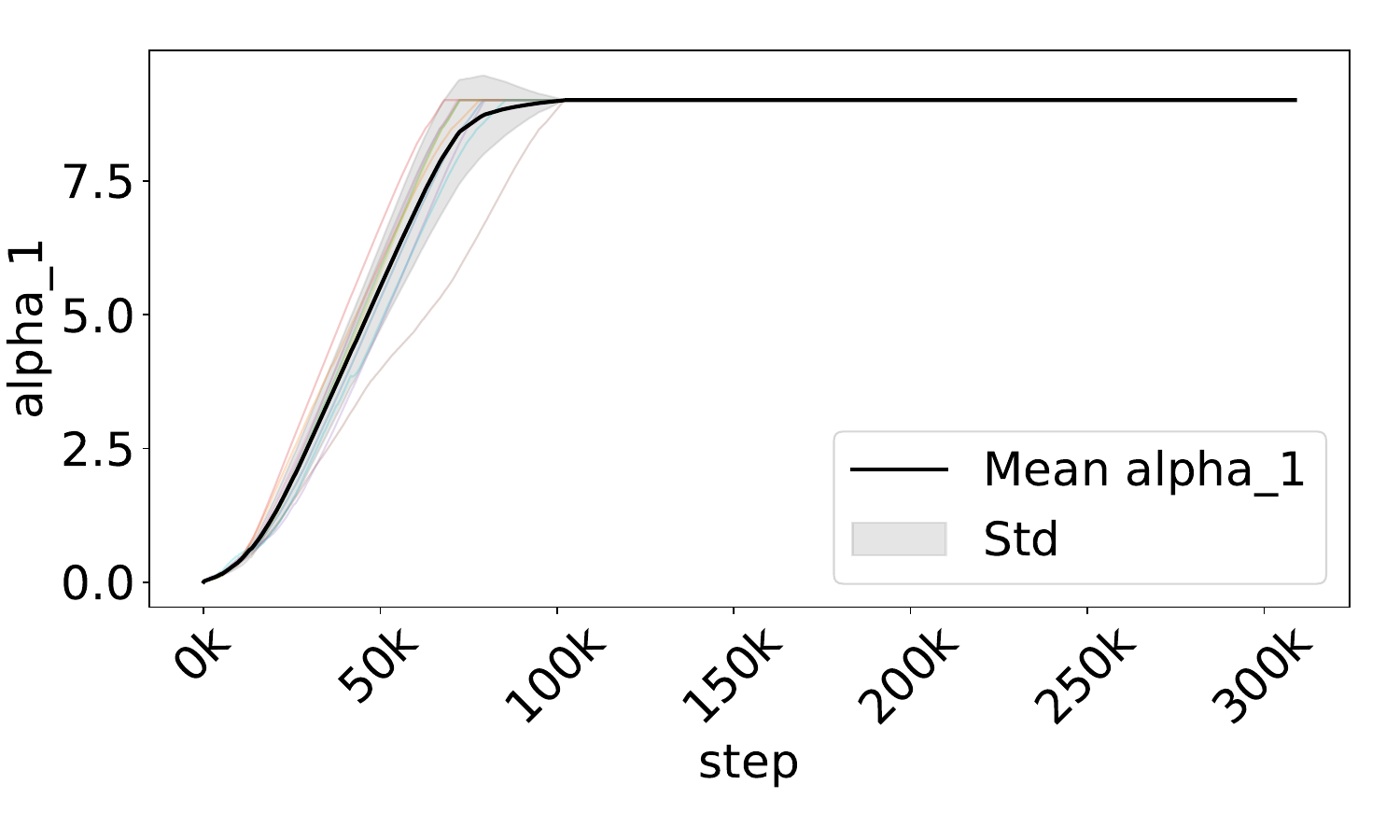}
        \caption{Expert 1 (\(\beta_1=1\), reliable)}
        \label{fig:alpha1_door_noisy}
    \end{subfigure}

    \vspace{1ex}

    \begin{subfigure}[b]{0.48\textwidth}
        \centering
        \includegraphics[width=\linewidth]{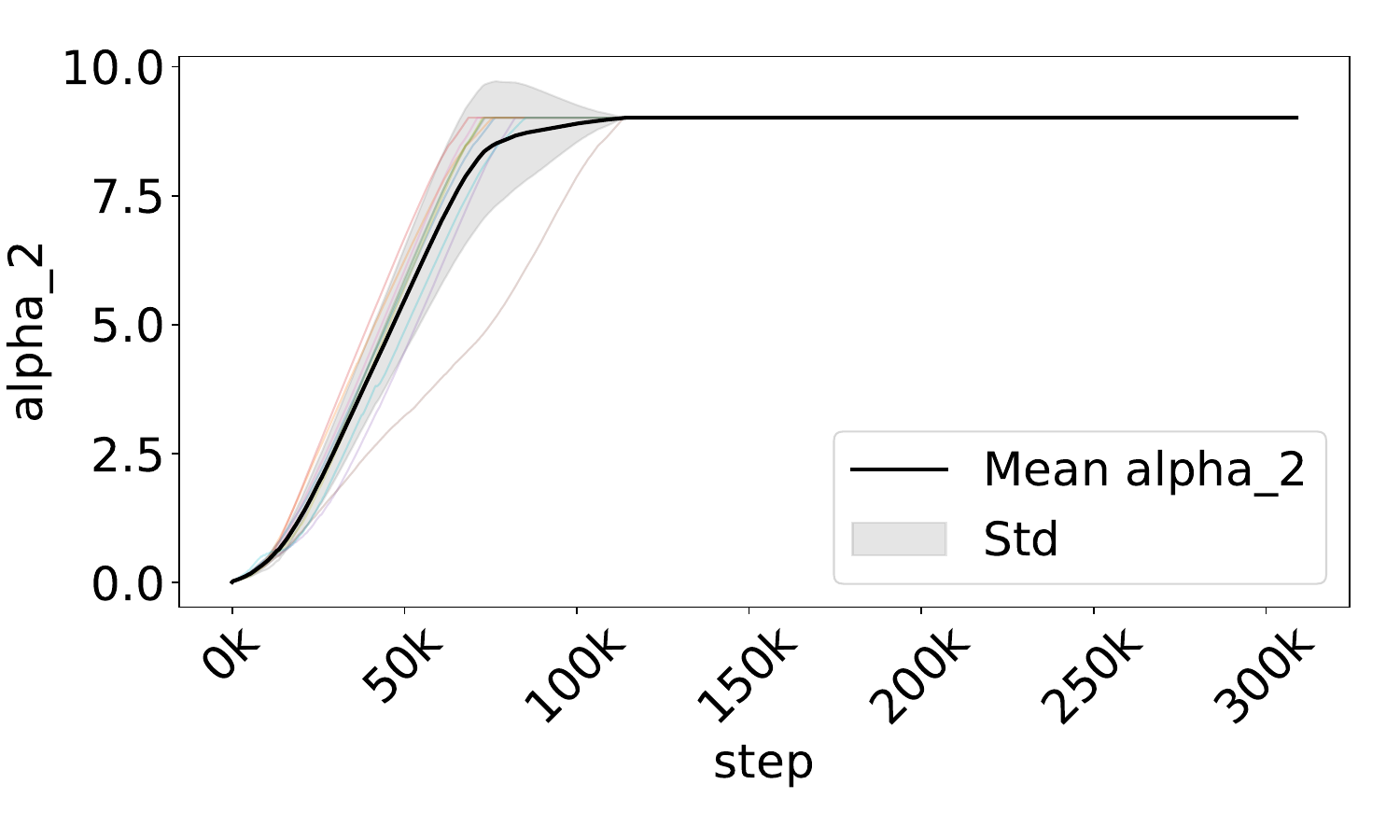}
        \caption{Expert 2 (\(\beta_2=1\), reliable)}
        \label{fig:alpha2_door_noisy}
    \end{subfigure}\hfill
    \begin{subfigure}[b]{0.48\textwidth}
        \centering
        \includegraphics[width=\linewidth]{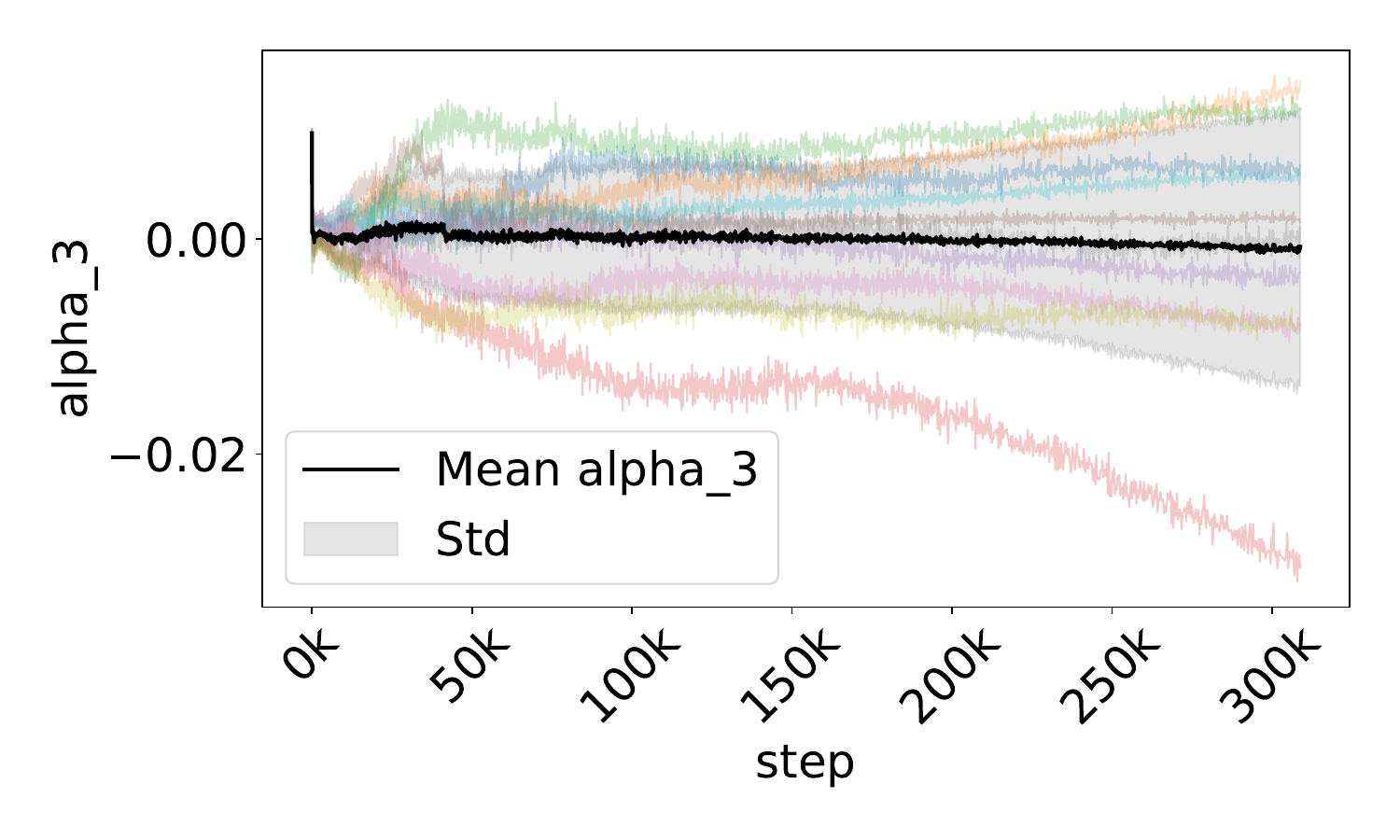}
        \caption{Expert 3 (\(\beta_3=0\), noisy)}
        \label{fig:alpha3_door_noisy}
    \end{subfigure}

    \caption{Learned trust trajectories on Door-Open-v2 under noisy conditions.}
    \label{fig:reward_alphas_door_noisy}
\end{figure}

\subsubsection{\textsc{DMControl} Cheetah-Run}

\paragraph{Adversarial experts (\(\beta=[1,1,1,-1]\)).}
We evaluate adversarial feedback on the \textsc{DMControl} Cheetah-Run task. As shown in Figure~\ref{fig:eval_cheetah_adv}, TTP maintains high true reward, while PEBBLE degrades significantly.

Despite dense environment rewards, adversarial preferences can still bias the learned reward model and induce path-dependent failures. Figure~\ref{fig:reward_alphas_cheetah_adv} shows that TTP assigns negative trust to the adversarial expert (Figure~\ref{fig:alpha3_cheetah_adv}) and positive trust to the reliable experts, matching the same trust pattern observed in manipulation tasks.

The results on this environment are particularly interesting. The task itself is challenging even for SAC, and while TTP consistently outperforms the baseline approaches, it does not fully reach oracle performance. The learned trust parameters $\alpha_k$ reflect the relative reliability and adversariality of the experts, but they do not fully converge within the training horizon. We believe that with additional training time for the reward model, or with mechanisms that encourage faster convergence of the trust parameters, TTP could further improve performance and potentially approach the oracle baseline.

\begin{figure}[!htbp]
    \centering
    \includegraphics[width=\linewidth]{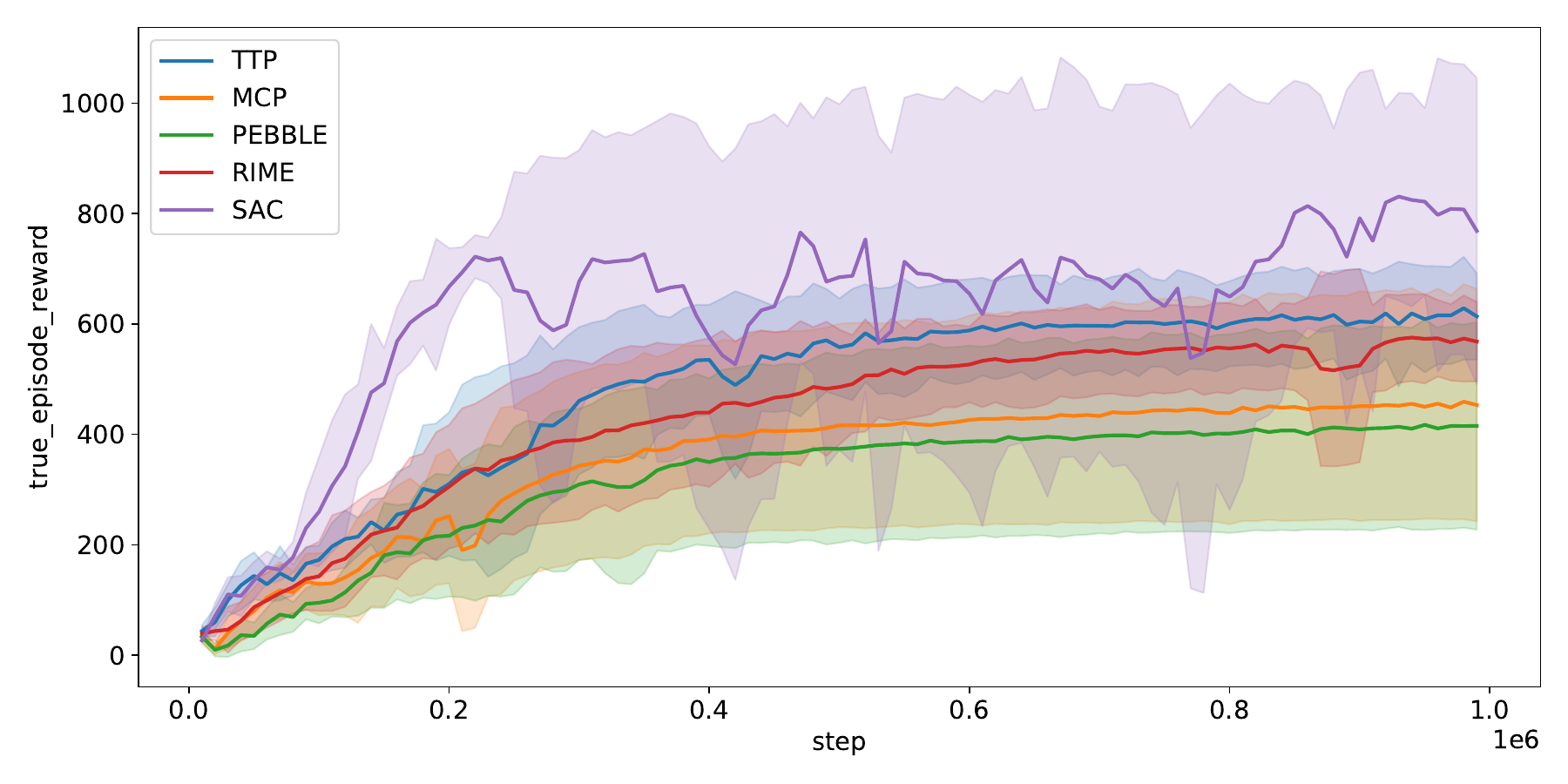}
    \caption{True episode reward for \textsc{DMControl} Cheetah-Run under adversarial conditions (\(\beta=[1,1,1,-1]\)).}
    \label{fig:eval_cheetah_adv}
\end{figure}


\begin{figure}[!htbp]
    \centering

    \begin{subfigure}[b]{0.48\textwidth}
        \centering
        \includegraphics[width=\linewidth]{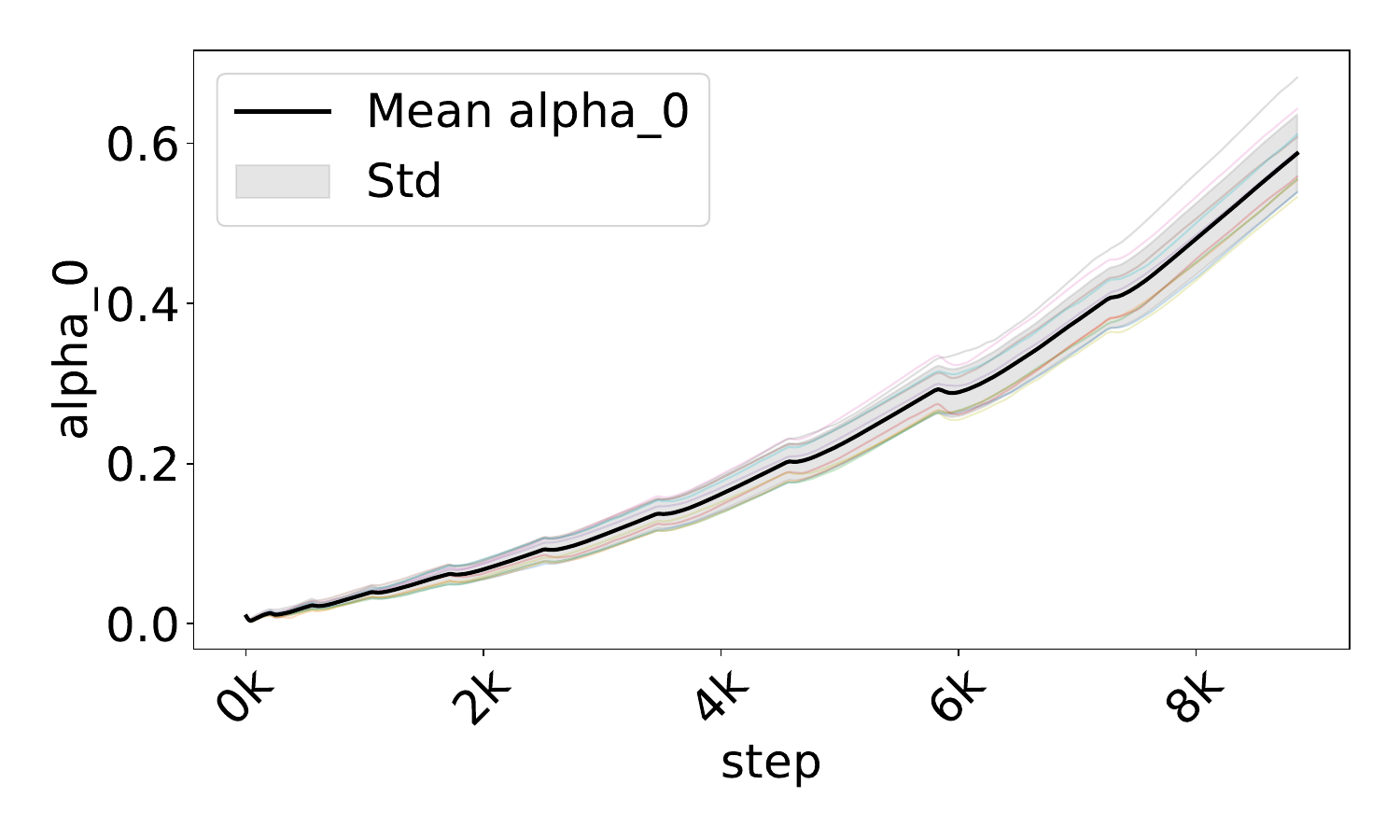}
        \caption{Expert 0 (\(\beta_0=1\), reliable)}
        \label{fig:alpha0_cheetah_adv}
    \end{subfigure}\hfill
    \begin{subfigure}[b]{0.48\textwidth}
        \centering
        \includegraphics[width=\linewidth]{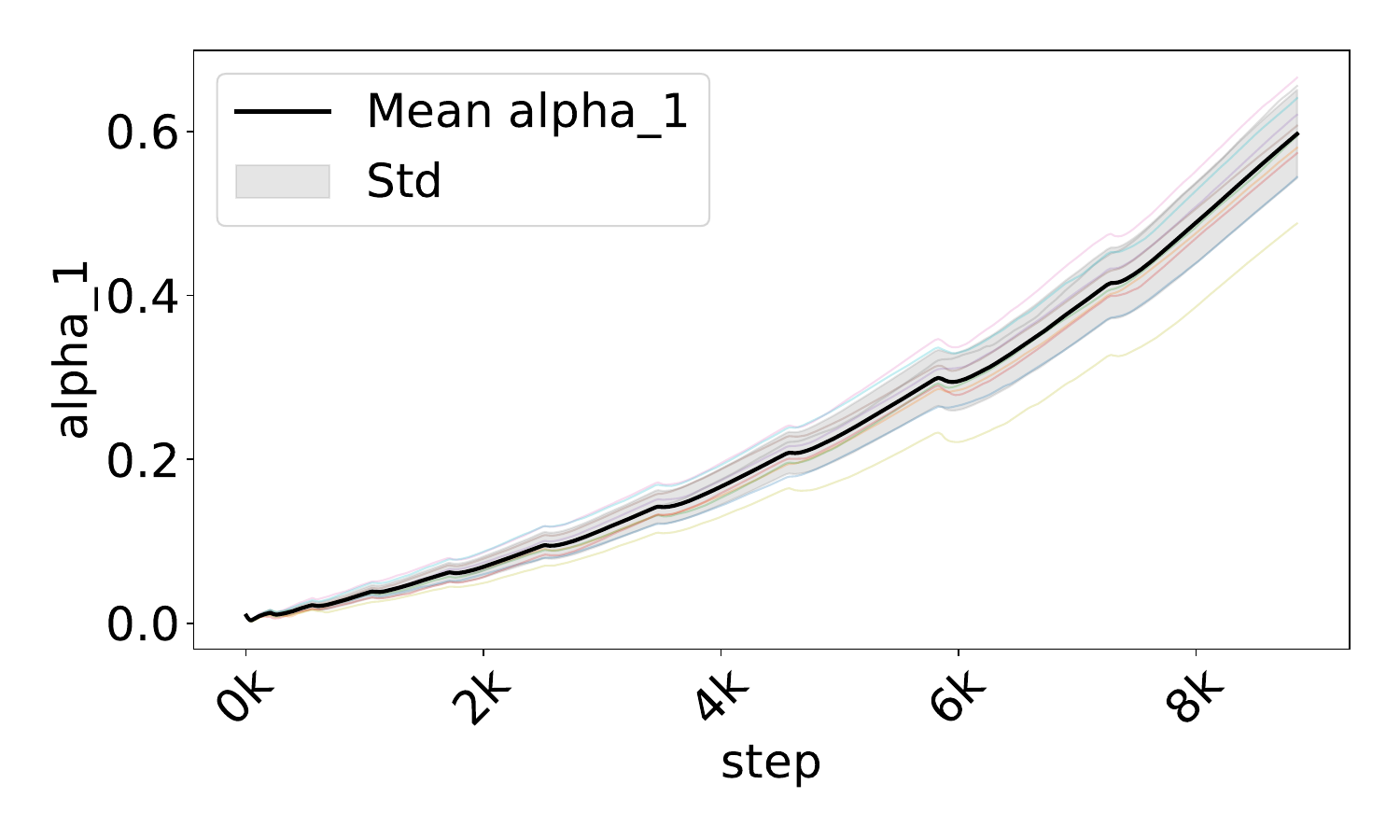}
        \caption{Expert 1 (\(\beta_1=1\), reliable)}
        \label{fig:alpha1_cheetah_adv}
    \end{subfigure}

    \vspace{1ex}

    \begin{subfigure}[b]{0.48\textwidth}
        \centering
        \includegraphics[width=\linewidth]{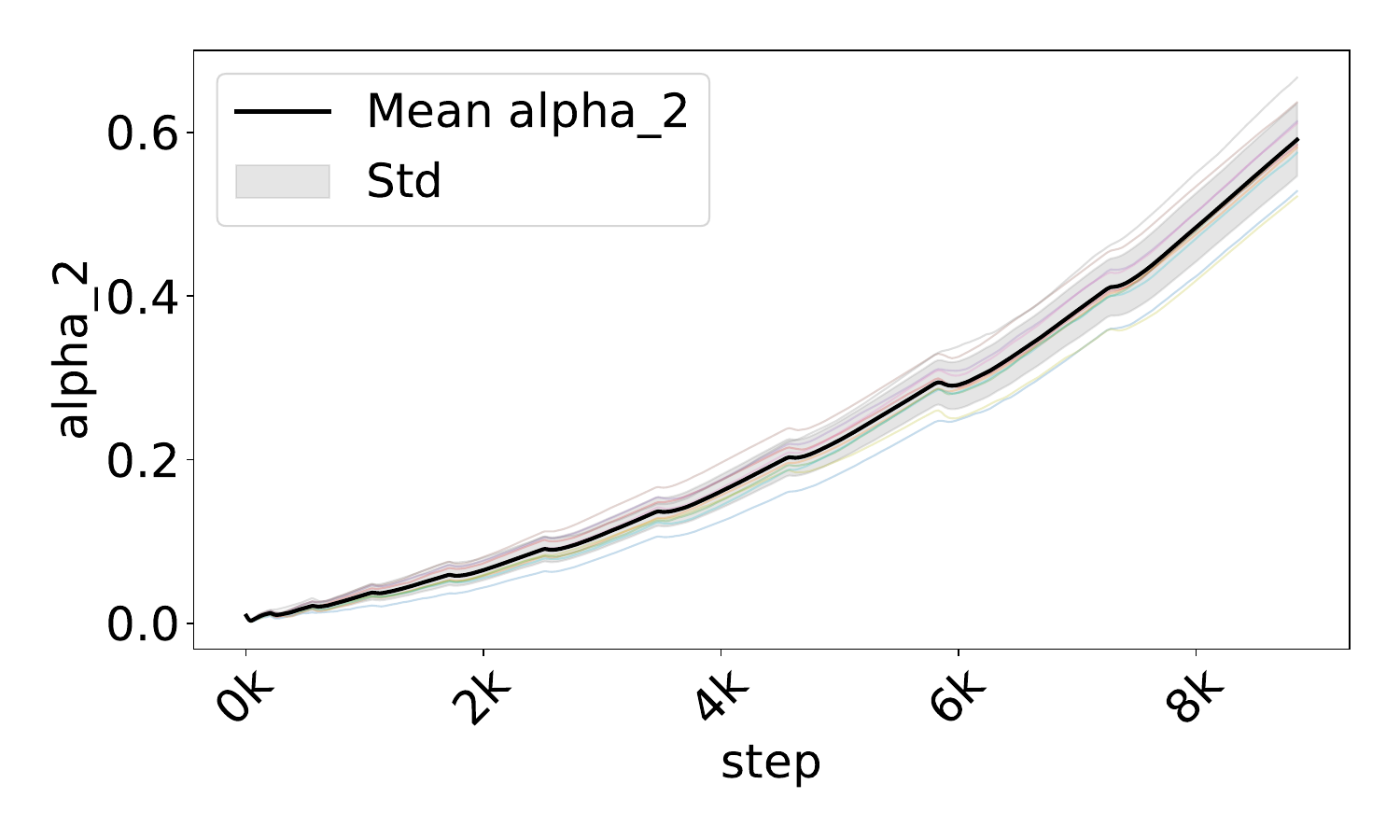}
        \caption{Expert 2 (\(\beta_2=1\), reliable)}
        \label{fig:alpha2_cheetah_adv}
    \end{subfigure}\hfill
    \begin{subfigure}[b]{0.48\textwidth}
        \centering
        \includegraphics[width=\linewidth]{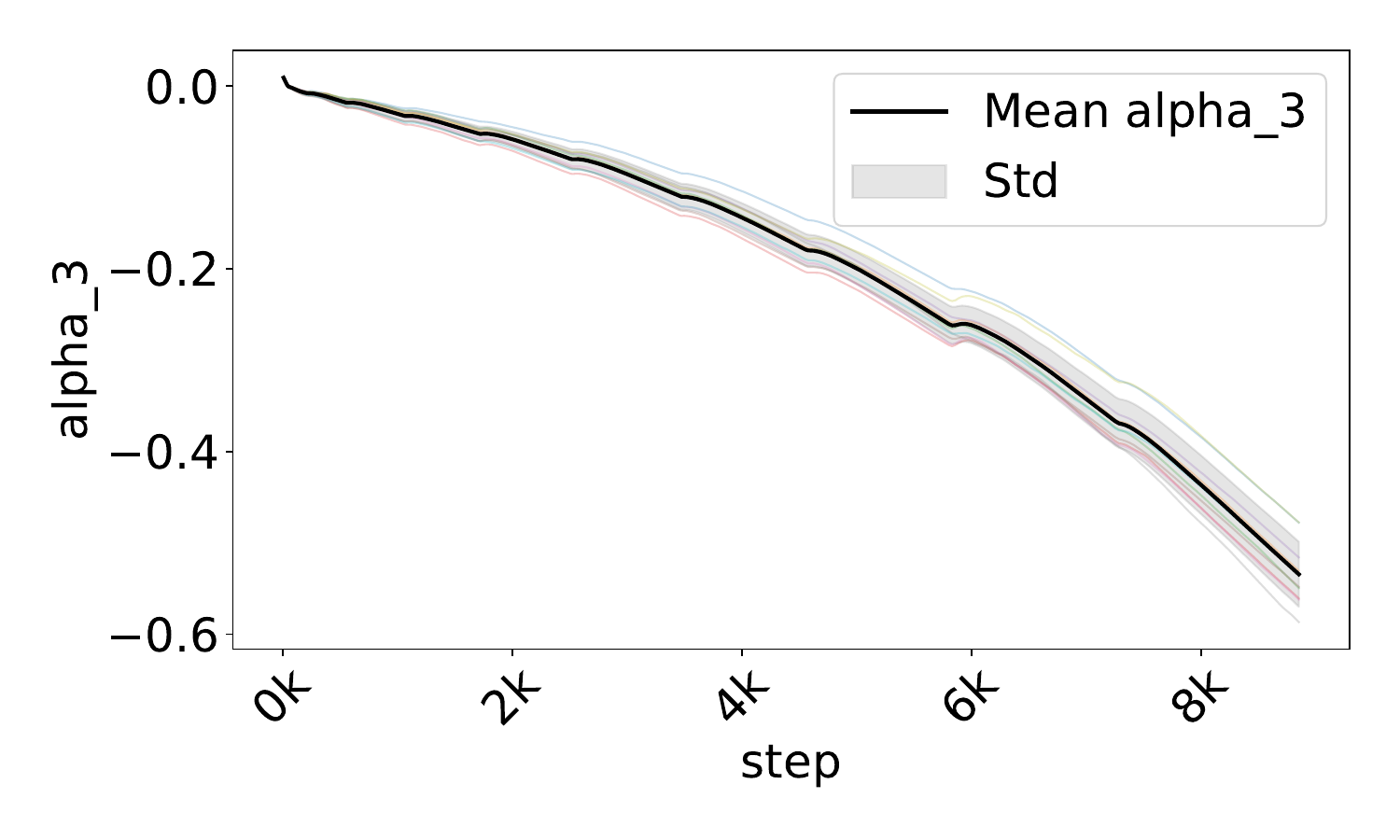}
        \caption{Expert 3 (\(\beta_3=-1\), adversarial)}
        \label{fig:alpha3_cheetah_adv}
    \end{subfigure}

    \caption{Learned trust trajectories on Cheetah-Run under adversarial conditions.}
    \label{fig:reward_alphas_cheetah_adv}
\end{figure}

\paragraph{Noisy experts (\(\beta=[1,1,1,0]\)).}
We evaluate the Cheetah-Run task under noisy but non-adversarial feedback. Figure~\ref{fig:eval_cheetah_noisy} shows that TTP achieves performance comparable to the baseline approaches.
As shown in Figure~\ref{fig:reward_alphas_cheetah_noisy}, the trust parameter for the noisy expert remains near zero (Figure~\ref{fig:alpha3_cheetah_noisy}), which is consistent with variance suppression. However, the trust parameters do not fully converge within the training horizon, limiting the extent to which TTP can distinguish itself from baselines in this environment. We believe that with longer training or faster trust convergence, clearer performance gains could emerge even in this noisy setting.

\begin{figure}[!htbp]
    \centering
    \includegraphics[width=\linewidth]{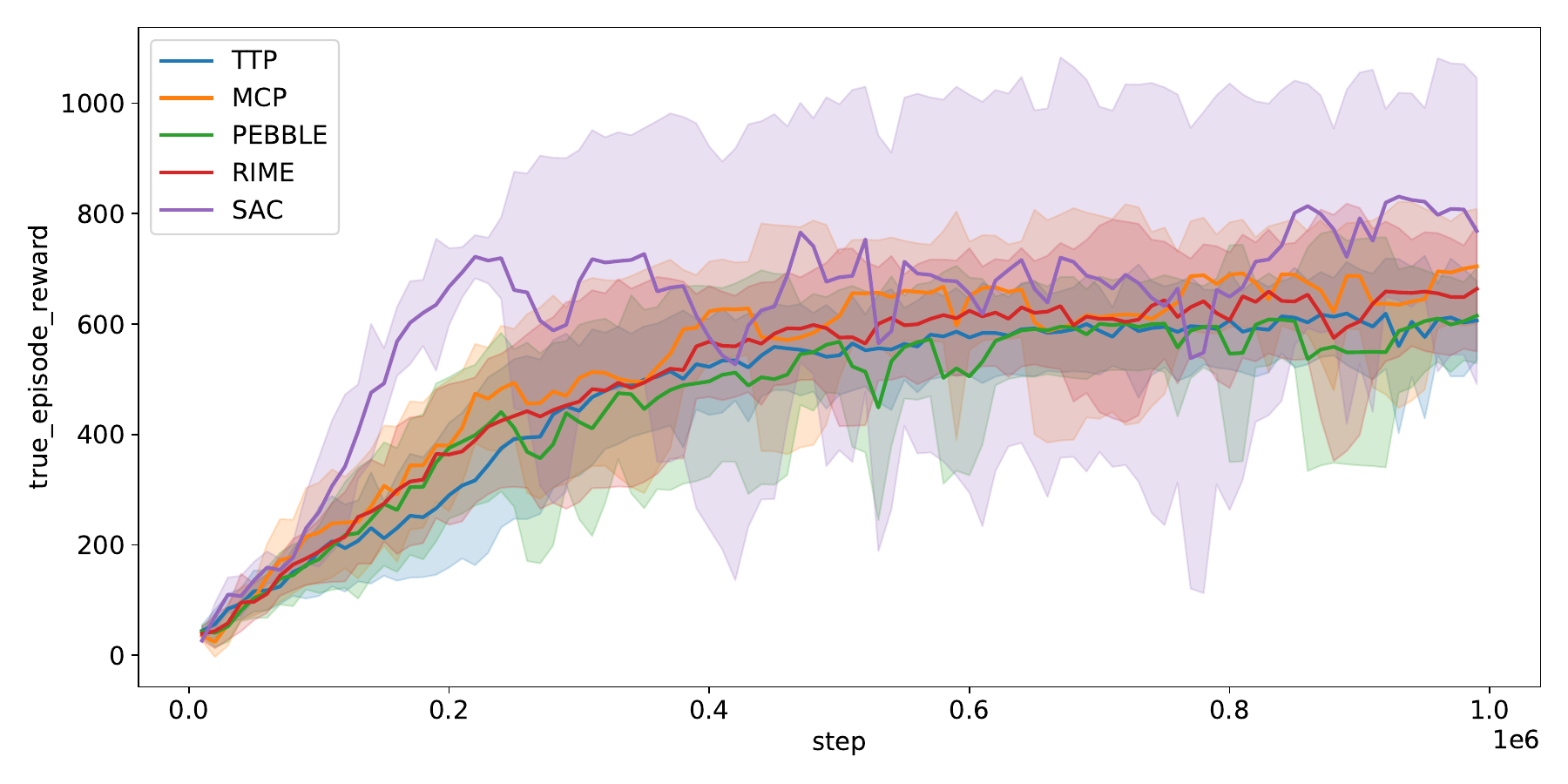}
    \caption{True episode reward for \textsc{DMControl} Cheetah-Run under noisy conditions (\(\beta=[1,1,1,0]\)).}
    \label{fig:eval_cheetah_noisy}
\end{figure}


\begin{figure}[!htbp]
    \centering

    \begin{subfigure}[b]{0.48\textwidth}
        \centering
        \includegraphics[width=\linewidth]{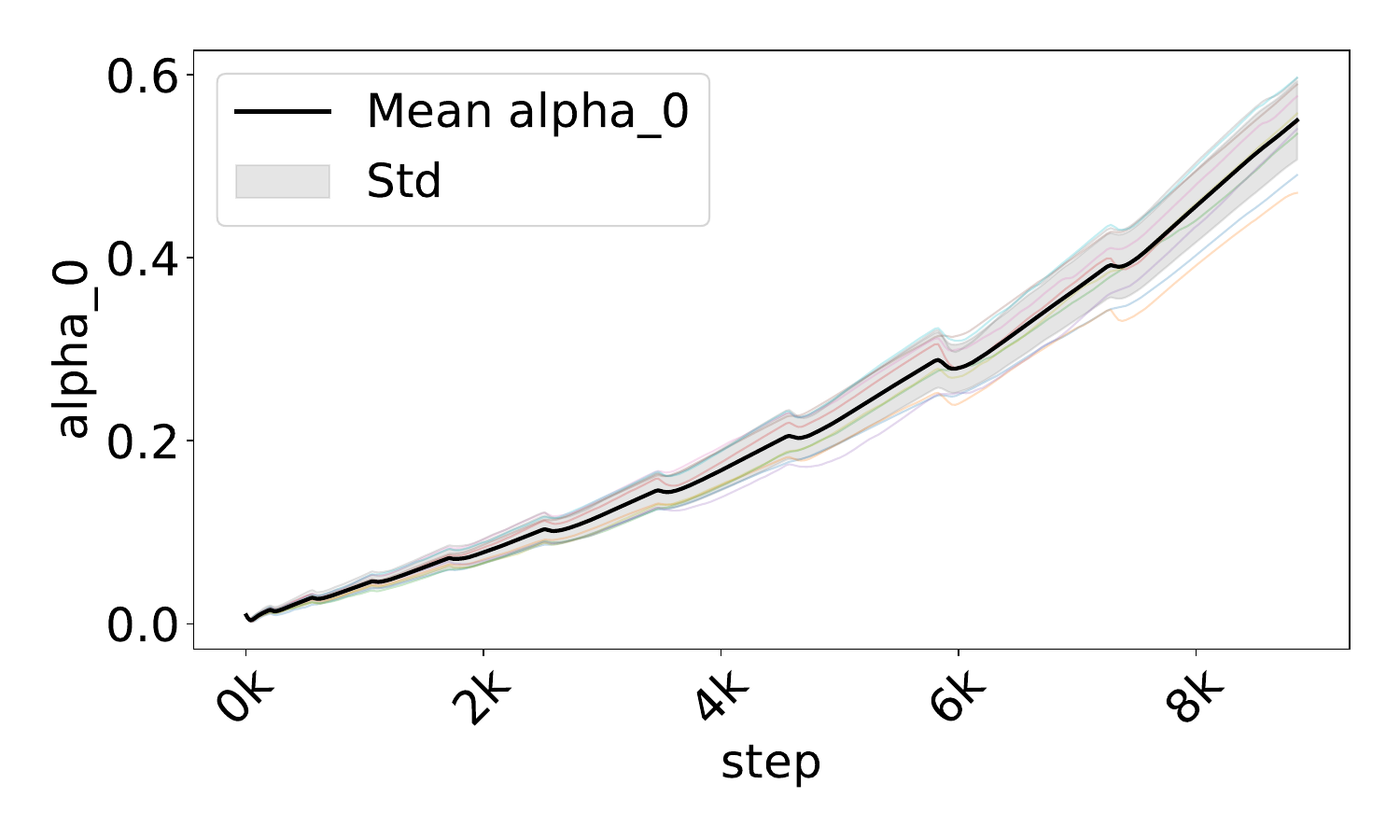}
        \caption{Expert 0 (\(\beta_0=1\), reliable)}
        \label{fig:alpha0_cheetah_noisy}
    \end{subfigure}\hfill
    \begin{subfigure}[b]{0.48\textwidth}
        \centering
        \includegraphics[width=\linewidth]{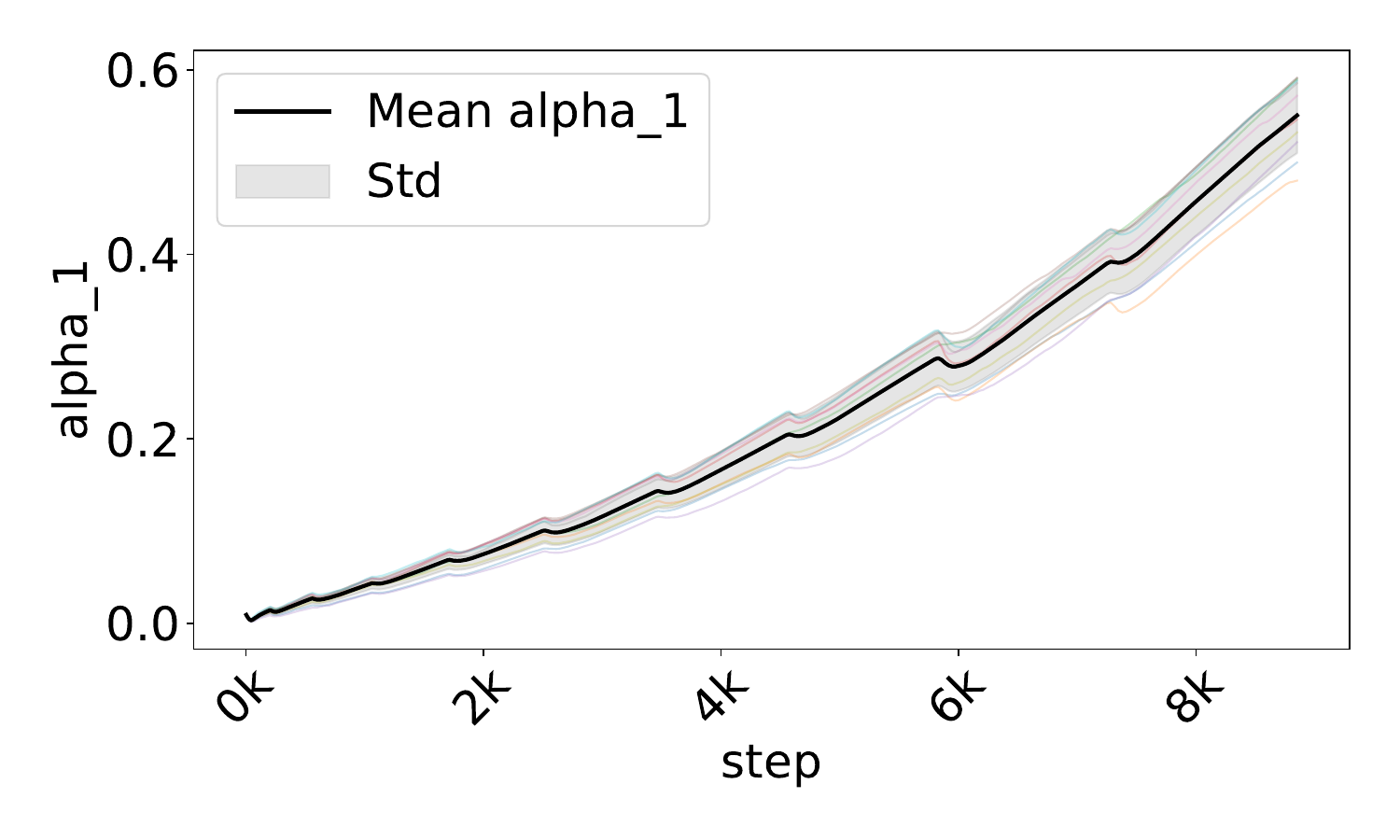}
        \caption{Expert 1 (\(\beta_1=1\), reliable)}
        \label{fig:alpha1_cheetah_noisy}
    \end{subfigure}

    \vspace{1ex}

    \begin{subfigure}[b]{0.48\textwidth}
        \centering
        \includegraphics[width=\linewidth]{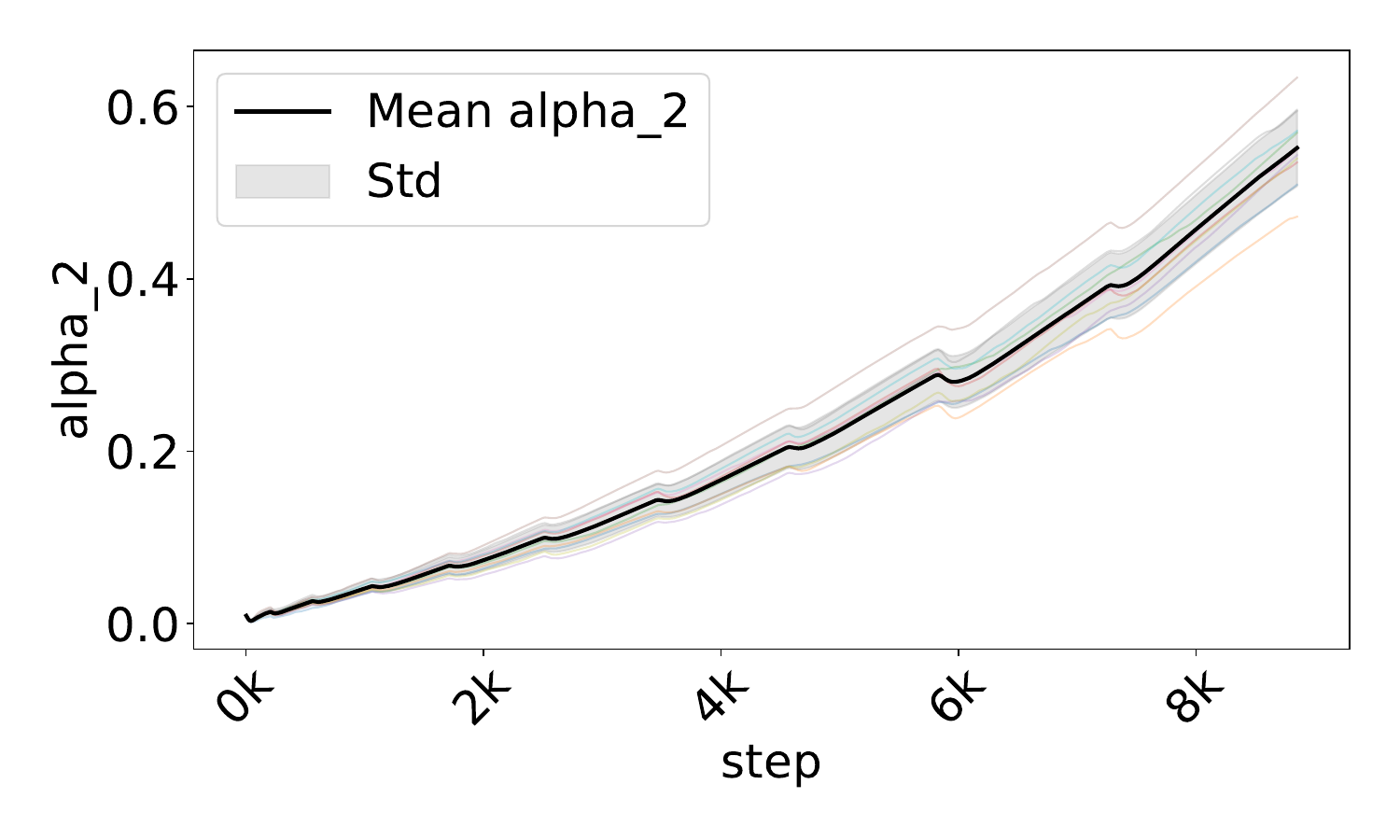}
        \caption{Expert 2 (\(\beta_2=1\), reliable)}
        \label{fig:alpha2_cheetah_noisy}
    \end{subfigure}\hfill
    \begin{subfigure}[b]{0.48\textwidth}
        \centering
        \includegraphics[width=\linewidth]{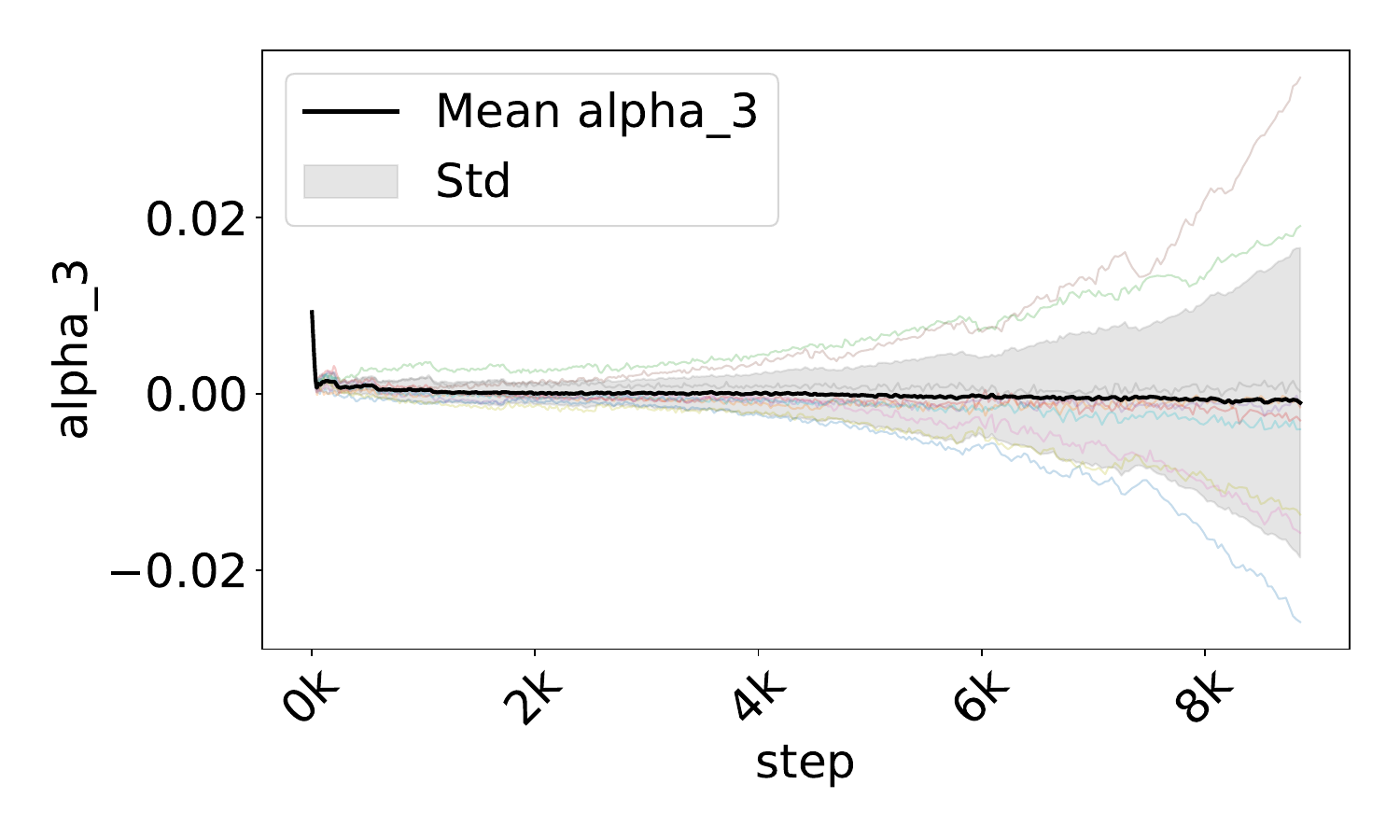}
        \caption{Expert 3 (\(\beta_3=0\), noisy)}
        \label{fig:alpha3_cheetah_noisy}
    \end{subfigure}

    \caption{Learned trust trajectories on Cheetah-Run under noisy conditions.}
    \label{fig:reward_alphas_cheetah_noisy}
\end{figure}

\subsubsection{Feedback Efficiency: \textsc{DMControl} Walker-Walk}

We study how learning performance scales with the amount of preference feedback in the \textsc{DMControl} Walker-Walk locomotion task. In this experiment, we simulate $K=5$ experts with heterogeneous reliability and vary both the total feedback budget and the composition of expert types. Each row in the heatmaps corresponds to a different expert reliability configuration $\beta \in \{-1,0,1\}^5$, capturing mixtures of reliable, noisy, and adversarial experts. We evaluate four feedback budgets: 500, 1{,}000, 5{,}000, and 10{,}000 comparisons.

Figures~\ref{fig:heatmap_walker} and~\ref{fig:heatmap1_walker} report final Walker-Walk performance as a function of feedback budget and expert reliability configuration. Performance varies substantially across expert mixtures, indicating that the effect of trust learning depends strongly on the structure of the feedback. We conclude the following observations:

\begin{itemize}
    \item \textbf{More reliable experts $\Rightarrow$ less feedback needed.} When the mixture contains mostly reliable experts (e.g., $\beta=[1,1,1,1,1]$ or $[1,1,1,1,0]$), final performance is already high at 500--1{,}000 comparisons and changes only marginally with additional feedback, indicating that a clean signal emerges early.

    \item \textbf{Fewer reliable experts $\Rightarrow$ a higher ``threshold'' budget.} As the fraction of non-reliable experts increases, low-budget performance drops substantially and recovery typically requires several thousand comparisons. In mixed settings, performance often improves sharply around 5{,}000 comparisons, suggesting that this budget is sufficient for reliable feedback to dominate reward learning for this environment.

    \item \textbf{Noisy experts ($\beta=0$) mainly increase variance.} Replacing reliable experts with noisy ones tends to slow convergence and increase instability at low budgets, but performance improves with more feedback because random noise partially averages out. For instance, $\beta=[1,1,0,0,0]$ is moderate at 500--1{,}000 but reaches high performance at 10{,}000.
    \item \textbf{Reliable majority enables recovery.} When reliable experts outnumber adversarial ones, trust learning can recover with enough feedback (often at intermediate budgets), whereas mixtures without a reliable majority remain unstable.

    \item \textbf{Adversarial experts ($\beta=-1$) introduce bias.} Adding adversarial experts can severely degrade low-budget learning (e.g., mixtures with multiple $-1$ experts are poor at 500--1{,}000). 
   In some settings, performance improves sharply around 5{,}000 comparisons but drops again at 10{,}000, suggesting that early mistakes in the reward model can guide data collection in the wrong direction and make later feedback less helpful.

    \item \textbf{Adversarial-dominated mixtures remain hard across budgets.} When adversarial experts dominate the mixture (few number of reliable experts relative to adversarial experts), performance stays low even as the feedback budget increases, indicating that recovery is difficult when most available supervision is anti-correlated.

    \item \textbf{Quantity is not enough without quality.} Overall, how helpful more feedback is depends strongly on who provides it: adding more comparisons helps when enough experts are reliable, but when adversarial feedback dominates, performance can remain unstable or even get worse with more data.
\end{itemize}


\begin{figure}[!htbp]
    \centering
    \includegraphics[width=\linewidth]{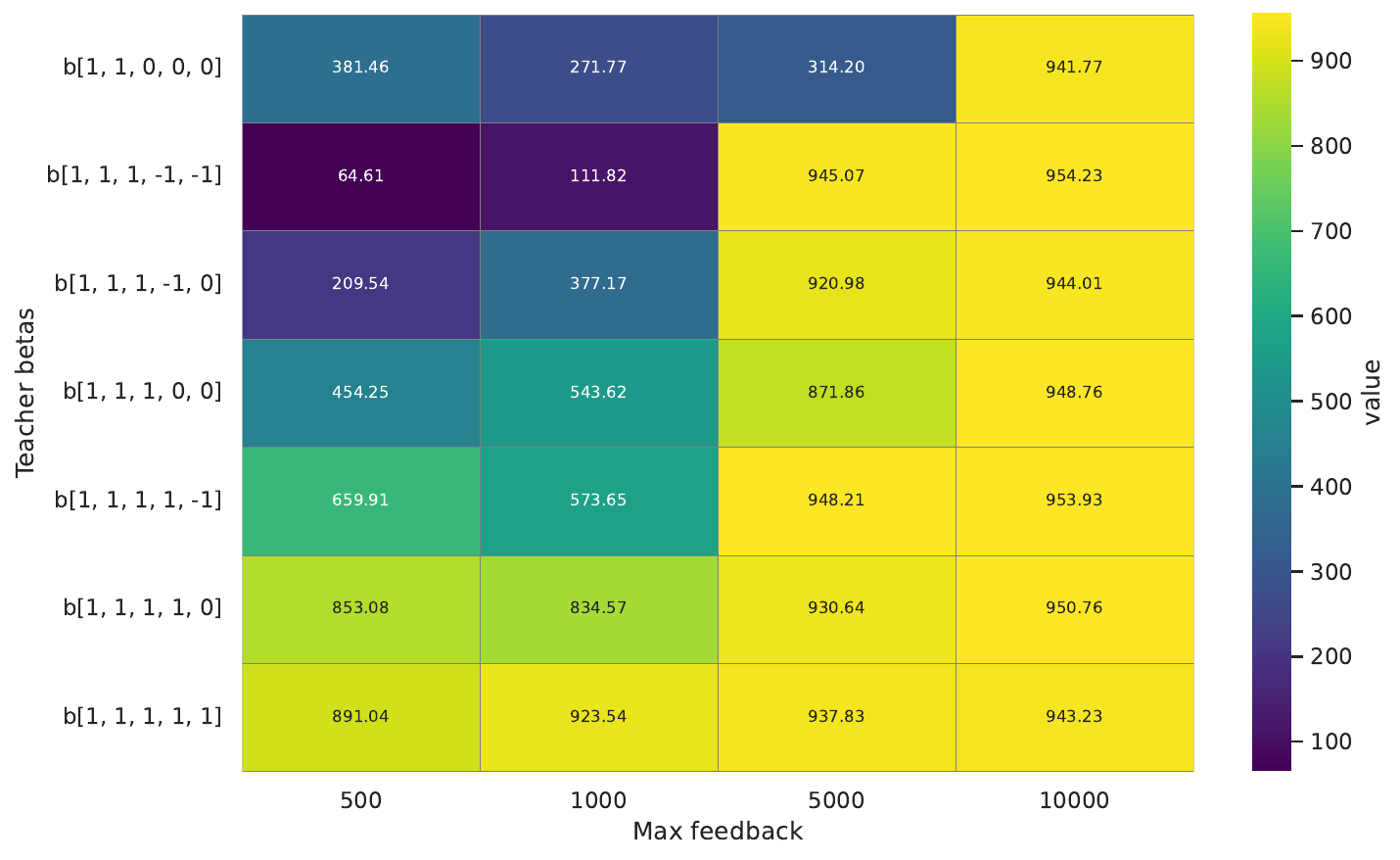}
    \caption{Performance heatmap for \textsc{DMControl} Walker-Walk showing final episode reward as a function of feedback amount and expert reliability configuration.}
    \label{fig:heatmap_walker}
\end{figure}

\begin{figure}[!htbp]
    \centering
    \includegraphics[width=\linewidth]{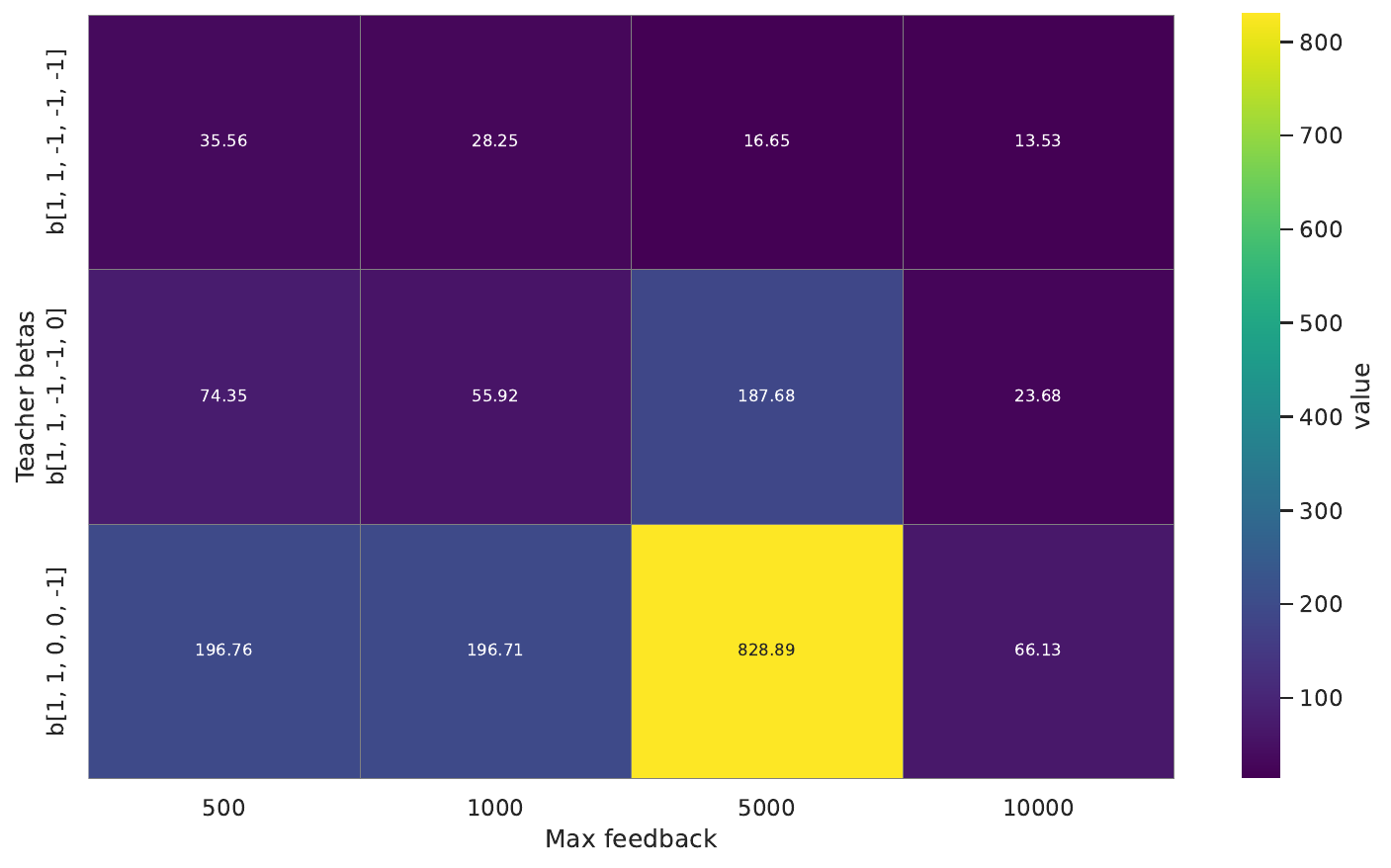}
    \caption{Additional Walker-Walk heatmap showing final episode reward across feedback budgets and expert mixtures.}
    \label{fig:heatmap1_walker}
\end{figure}

\subsection{Limitations and Future Directions}
We conclude by revisiting the three experimental questions introduced earlier. First (Q1), our results show that learning expert trust makes preference learning more robust to mixed-quality feedback, avoiding the failures that arise when all experts are weighted equally under noisy or adversarial supervision. Second (Q2), the learned trust values are stable and easy to interpret: reliable experts receive positive trust, noisy experts are down-weighted toward zero, and adversarial experts receive negative trust. Finally (Q3), our feedback efficiency experiments show that the benefit of additional data depends on who provides it. When enough experts are reliable, more feedback helps; when adversarial feedback dominates, learning can remain unstable, highlighting that feedback quality matters as much as feedback quantity.

While our results show that learning expert trust is effective for preference-based reinforcement learning, there are several limitations that point to future work. First, we model trust as a single global value for each expert. This can be too simple when an expert’s reliability changes across situations. For example, an expert may give reliable feedback for clearly different trajectories but be less reliable when comparing very similar ones near task completion.

Second, our method treats all preference queries the same. In practice, some comparisons are unclear or noisy, while others are much more informative. Treating all comparisons equally can slow learning or reduce robustness, especially when feedback is limited.

These limitations suggest several directions for future research. One direction is \textbf{context-dependent trust}, where trust varies with trajectory features or task stages. Another is \textbf{active expert selection}, where trust estimates guide which expert to query for each comparison. A final direction is to separate expert reliability from comparison difficulty, for example by modeling both expert trust and per-comparison uncertainty. This could allow the model to down-weight unreliable experts and ambiguous comparisons independently, leading to more robust and efficient learning.

Developing these extensions could further improve preference-based learning in realistic multi-expert settings.

\section{Conclusion}
\label{sec:conclusion}

This work studies preference-based reinforcement learning in settings where feedback is provided by multiple experts with heterogeneous and potentially unreliable quality. We introduced \emph{TriTrust-PBRL (TTP)}, a framework that jointly learns a shared reward function together with expert-specific trust parameters. This formulation allows the learner to adaptively amplify reliable feedback, suppress noisy supervision, and invert systematically adversarial preferences rather than discarding them.

On the theoretical side, we analyzed the joint optimization of the reward and expert trust parameters. Under the stated structural assumptions on the preference data, we showed that the resulting model is identifiable up to an affine transformation of the reward. We additionally examined the gradient dynamics of the trust variables and showed how learning separates reliable, noisy, and adversarial experts over the course of training. These findings help explain why bounding and normalizing trust stabilizes learning and why the method remains robust in practice.

Empirically, across a range of locomotion and manipulation benchmarks, TTP consistently improves robustness to heterogeneous feedback, maintaining stable learning and strong performance even in the presence of noisy or adversarial experts where standard preference-based methods break down. Our experiments further highlight the importance of feedback composition and early learning dynamics, showing that additional data alone is insufficient without explicit modeling of expert reliability.

Overall, TTP provides a simple, interpretable, and practical approach for learning from heterogeneous preference feedback, and represents a step toward deploying preference-based reinforcement learning in realistic multi-expert settings.

\bibliographystyle{unsrtnat}
\bibliography{references}

@inproceedings{christiano2017deep,
  title={Deep reinforcement learning from human preferences},
  author={Christiano, Paul F and Leike, Jan and Brown, Tom and Martic, Miljan and Legg, Shane and Amodei, Dario},
  booktitle={Advances in Neural Information Processing Systems},
  volume={30},
  year={2017}
}

@article{wirth2017survey,
  title={A survey of preference-based reinforcement learning methods},
  author={Wirth, Christian and Akrour, Riad and Neumann, Gerhard and F{\"u}rnkranz, Johannes},
  journal={Journal of Machine Learning Research},
  volume={18},
  number={136},
  pages={1--46},
  year={2017}
}

@inproceedings{akrour2012april,
  title={April: Active preference learning-based reinforcement learning},
  author={Akrour, Riad and Schoenauer, Marc and Sebag, Michele},
  booktitle={Joint European Conference on Machine Learning and Knowledge Discovery in Databases},
  pages={116--131},
  year={2012},
  organization={Springer}
}

@inproceedings{ibarz2018reward,
  title={Reward learning from human preferences and demonstrations in atari},
  author={Ibarz, Borja and Leike, Jan and Pohlen, Tobias and Irving, Geoffrey and Legg, Shane and Amodei, Dario},
  booktitle={Advances in Neural Information Processing Systems},
  volume={31},
  year={2018}
}

@inproceedings{lee2021accelerating,
  title={PEBBLE: Feedback-efficient interactive reinforcement learning via relabeling experience and unsupervised pre-training},
  author={Lee, Kimin and Smith, Laura and Abbeel, Pieter},
  booktitle={International Conference on Machine Learning},
  year={2021}
}

@article{ouyang2022training,
  title={Training language models to follow instructions with human feedback},
  author={Ouyang, Long and Wu, Jeffrey and Jiang, Xu and others},
  journal={Advances in Neural Information Processing Systems},
  volume={35},
  pages={27730--27744},
  year={2022}
}

@article{bai2022training,
  title={Training a helpful and harmless assistant with reinforcement learning from human feedback},
  author={Bai, Yuntao and Jones, Andy and Ndousse, Kamal and others},
  journal={arXiv preprint arXiv:2204.05862},
  year={2022}
}

@inproceedings{rafailov2023direct,
  title={Direct preference optimization: Your language model is secretly a reward model},
  author={Rafailov, Rafael and Sharma, Archit and Mitchell, Eric and Ermon, Stefano and Manning, Christopher D and Finn, Chelsea},
  booktitle={Advances in Neural Information Processing Systems},
  year={2023}
}

@inproceedings{pebble,
  title={PEBBLE: Feedback-efficient interactive reinforcement learning via relabeling experience and unsupervised pre-training},
  author={Lee, Kimin and Smith, Laura and Abbeel, Pieter},
  booktitle={International Conference on Machine Learning},
  year={2021}
}

@inproceedings{rune,
  title={Reward uncertainty for exploration in preference-based reinforcement learning},
  author={Liang, Xinran and Shu, Katherine and Lee, Kimin and Abbeel, Pieter},
  booktitle={International Conference on Learning Representations},
  year={2022}
}

@inproceedings{park2022surf,
  title={SURF: Semi-supervised reward learning with data augmentation for feedback-efficient preference-based reinforcement learning},
  author={Park, Jongjin and Seo, Younggyo and Shin, Jinwoo and Lee, Honglak and Abbeel, Pieter and Lee, Kimin},
  booktitle={International Conference on Learning Representations},
  year={2022}
}

@inproceedings{liu2022meta,
  title={MetaReward: Meta reward network for sample-efficient preference-based reinforcement learning},
  author={Liu, Runze and Bai, Fengshuo and Du, Yali and Yang, Yaodong},
  booktitle={Advances in Neural Information Processing Systems},
  year={2022}
}

@inproceedings{sadigh2017active,
  title={Active preference-based learning of reward functions},
  author={Sadigh, Dorsa and Dragan, Anca D and Sastry, Shankar and Seshia, Sanjit A},
  booktitle={Robotics: Science and Systems},
  year={2017}
}

@inproceedings{biyik2018batch,
  title={Batch active preference-based learning of reward functions},
  author={Biyik, Erdem and Sadigh, Dorsa},
  booktitle={Conference on Robot Learning},
  pages={519--528},
  year={2018},
  organization={PMLR}
}

@inproceedings{biyik2020asking,
  title={Asking easy questions: A user-friendly approach to active reward learning},
  author={Biyik, Erdem and Huynh, Nicolas and Kochenderfer, Mykel J and Sadigh, Dorsa},
  booktitle={Conference on Robot Learning},
  pages={1177--1190},
  year={2020},
  organization={PMLR}
}

@inproceedings{lee2021bpref,
  title={B-Pref: Benchmarking preference-based reinforcement learning},
  author={Lee, Kimin and Smith, Laura and Dragan, Anca and Abbeel, Pieter},
  booktitle={NeurIPS Datasets and Benchmarks Track},
  year={2021}
}

@inproceedings{uni_rlhf,
  title={Uni-RLHF: Universal platform and benchmark suite for reinforcement learning with diverse human feedback},
  author={Yuan, Yifu and Hao, Jianye and Ma, Yi and Dong, Zibin and others},
  booktitle={International Conference on Learning Representations},
  year={2024}
}

@article{sun2024rethinking,
  title={Rethinking Bradley-Terry models in preference-based reward modeling: Foundations, theory, and alternatives},
  author={Sun, Hao and Shen, Yunyi and Ton, Jean-Francois},
  journal={International Conference on Learning Representations},
  year={2025}
}

@article{similarity_reward_alignment,
  title={Similarity as reward alignment: Robust and versatile preference-based reinforcement learning},
  author={Rajaram, Sara and Cotton, R. James and Sinz, Fabian H.},
  journal={arXiv preprint arXiv:2506.12529},
  year={2025}
}

@inproceedings{xie2025hsbc,
  title={Robust reward alignment via hypothesis space batch cutting},
  author={Xie, Zhixian and Zhang, Haode and Feng, Yizhe and Jin, Wanxin},
  booktitle={International Conference on Machine Learning},
  year={2025}
}

@inproceedings{rime,
  title={RIME: Robust preference-based reinforcement learning with noisy preferences},
  author={Cheng, Jie and Xiong, Gang and Dai, Xingyuan and Miao, Qinghai and Lv, Yisheng and Wang, Fei-Yue},
  booktitle={International Conference on Machine Learning},
  year={2024}
}

@inproceedings{huang2025trend,
  title={TREND: Tri-teaching for robust preference-based reinforcement learning with demonstrations},
  author={Huang, Shuaiyi and Levy, Mara and Gupta, Anubhav and Ekpo, Daniel and Zheng, Ruijie and Shrivastava, Abhinav},
  booktitle={IEEE International Conference on Robotics and Automation},
  year={2025}
}

@article{mcp,
  title={Mixing corrupted preferences for robust and feedback-efficient preference-based reinforcement learning},
  author={Heo, Jongkook and Lee, Young Jae and Kim, Jaehoon and Kwak, Min Gu and Park, Young Joon and Kim, Seoung Bum},
  journal={Knowledge-Based Systems},
  year={2024}
}

@inproceedings{zhang2018mixup,
  title={mixup: Beyond empirical risk minimization},
  author={Zhang, Hongyi and Cisse, Moustapha and Dauphin, Yann N and Lopez-Paz, David},
  booktitle={International Conference on Learning Representations},
  year={2018}
}

@inproceedings{r3m,
  title={Robust reinforcement learning from corrupted human feedback},
  author={Bukharin, Alexander and Hong, Ilgee and Jiang, Haoming and Li, Zichong and Zhang, Qingru and Zhang, Zixuan and Zhao, Tuo},
  booktitle={Advances in Neural Information Processing Systems},
  year={2024}
}

@article{wang2024flip,
  title={When human preferences flip: An instance-dependent robust loss for RLHF},
  author={Wang, Zihao and others},
  journal={arXiv preprint arXiv:2512.00709},
  year={2024}
}

@inproceedings{gong2025adaptive,
  title={Adaptive confidence-aware preference-based reinforcement learning with noisy feedback},
  author={Gong, Yuhao and Lu, Zhenbo and Lu, Wanxuan and Zhou, Wengang and others},
  booktitle={The ACM Web Conference},
  year={2025}
}

@article{zhang2024diverging,
  title={Diverging preferences: When do annotators disagree and do models know?},
  author={Zhang, Michael JQ and Wang, Zhilin and Hwang, Jena D and others},
  journal={arXiv preprint},
  year={2024}
}

@inproceedings{padmakumar2024beyond,
  title={Beyond the binary: Capturing diverse preferences with reward regularization},
  author={Padmakumar, Vishakh and Jin, Chuanyang and Kirk, Hannah Rose and He, He},
  booktitle={NeurIPS SoLaR Workshop},
  year={2024}
}

@inproceedings{poddar2024vpl,
  title={Personalizing reinforcement learning from human feedback with variational preference learning},
  author={Poddar, Sriyash and Wan, Yanming and Ivison, Hamish and Gupta, Abhishek and Jaques, Natasha},
  booktitle={Advances in Neural Information Processing Systems},
  year={2024}
}

@article{li2025alignx,
  title={From 1,000,000 users to every user: Scaling up personalized preference for user-level alignment},
  author={Li, Jia-Nan and Guan, Jian and Wu, Songhao and Wu, Wei and Yan, Rui},
  journal={arXiv preprint},
  year={2025}
}

@inproceedings{wang2024rlhfpoison,
  title={RLHFPoison: Reward poisoning attack for reinforcement learning with human feedback in large language models},
  author={Wang, Jiongxiao and Wu, Junlin and Chen, Muhao and Vorobeychik, Yevgeniy and Xiao, Chaowei},
  booktitle={Annual Meeting of the Association for Computational Linguistics},
  year={2024}
}

@inproceedings{zhou2024robot_poison,
  title={Data poisoning attack against reinforcement learning from human feedback in robot control tasks},
  author={Zhou, Zihui and Gao, Yang and Qi, Mengqi},
  booktitle={International Conference on Algorithms and Architectures for Parallel Processing},
  year={2024}
}

@article{yang2024online_attack,
  title={Human feedback attack on online RLHF: Attack and robust defense},
  author={Yang, Chenye and Lyu, Mo and Liu, Guanlin and Lai, Lifeng},
  journal={IEEE Transactions},
  year={2024}
}

@inproceedings{buening2025strategyproof,
  title={Strategyproof reinforcement learning from human feedback},
  author={Kleine Buening, Thomas and Gan, Jiarui and Mandal, Debmalya and Kwiatkowska, Marta},
  booktitle={European Workshop on Reinforcement Learning},
  year={2025}
}

@article{raykar2010learning,
  title={Learning from crowds},
  author={Raykar, Vikas C and Yu, Shipeng and Zhao, Linda H and others},
  journal={Journal of Machine Learning Research},
  volume={11},
  pages={1297--1322},
  year={2010}
}

@article{dawid1979maximum,
  title={Maximum likelihood estimation of observer error-rates using the EM algorithm},
  author={Dawid, Alexander Philip and Skene, Allan M},
  journal={Journal of the Royal Statistical Society: Series C},
  volume={28},
  number={1},
  pages={20--28},
  year={1979}
}

@inproceedings{whitehill2009whose,
  title={Whose vote should count more: Optimal integration of labels from labelers of unknown expertise},
  author={Whitehill, Jacob and Wu, Ting-fan and Bergsma, Jacob and Movellan, Javier and Ruvolo, Paul},
  booktitle={Advances in Neural Information Processing Systems},
  volume={22},
  year={2009}
}

@article{bradley1952rank,
  title={Rank analysis of incomplete block designs: I. the method of paired comparisons},
  author={Bradley, Ralph Allan and Terry, Milton E},
  journal={Biometrika},
  volume={39},
  number={3/4},
  pages={324--345},
  year={1952},
  publisher={JSTOR}
}

@inproceedings{haarnoja2018soft,
  title={Soft actor-critic: Off-policy maximum entropy deep reinforcement learning with a stochastic actor},
  author={Haarnoja, Tuomas and Zhou, Aurick and Abbeel, Pieter and Levine, Sergey},
  booktitle={International conference on machine learning},
  pages={1861--1870},
  year={2018},
  organization={Pmlr}
}

\end{document}